\documentclass[letter,11pt]{article}
\usepackage{fullpage}


\usepackage{wrapfig}




\usepackage[utf8]{inputenc} 
\usepackage[T1]{fontenc}    
\usepackage{hyperref}       
\usepackage{url}            
\usepackage{booktabs}       
\usepackage{amsfonts}       
\usepackage{nicefrac}       
\usepackage{microtype}      
\usepackage{xcolor}         

\title{Q-learning with Posterior Sampling}

%

\author{
Priyank Agrawal\\
Columbia University \\
\texttt{pa2608@columbia.edu} \and
Shipra Agrawal\\
Columbia University\\
\texttt{sa3305@columbia.edu}
\and
Azmat Azati\footnote{work done while at Columbia University}\\
BNP Paribas\\
\texttt{aa5288@columbia.edu}}  
\usepackage{color}
\usepackage{xcolor}
\usepackage{amsthm,amssymb}
\usepackage{amsmath}
\usepackage{comment}
\usepackage{commath}
\usepackage{enumitem}
\usepackage{mathtools}
\usepackage{multirow}
\usepackage{wrapfig}

\usepackage[algo2e]{algorithm2e}
\usepackage{algorithm}
\usepackage{nicefrac}
\usepackage{amsfonts}
\usepackage{commath}
\usepackage{esvect}
\usepackage{algorithmic}
\usepackage{natbib}
\usepackage{makecell}
\usepackage{cancel}

\newcommand{\logvalue}{\log(SAKH/\delta)}
\newcommand{\logvaluetwo}{\log(JSAT/\delta)}
\newcommand{\var}[2]{\sigma(N_{#1}(#2))^2}

\newcommand{\std}[2]{\sigma(N_{#1}(#2))}
\newcommand{\varB}[1]{v_b(#1)}
\newcommand{\varT}[2]{\sigma(N_{#1}(s,\hat{a}))^2}

\usepackage{colortbl}

\newcommand{\varshort}[1]{v(n_{#1})}

\newcommand{\probvalue}{\Phi(-1)-\frac{\delta}{H}-\delta}

\newcommand{\probvaluetwo}{\Phi(-2)-(A)\delta}

\newcommand{\Q}{\hat{Q}}

\newcommand{\Reg}{\text{Reg}}

\newcommand{\red}[1]{\textcolor{red}{#1}}

\newcommand{\newfix}[1]{\textcolor{black}{#1}}




\newcommand{\Ex}{\mathbb{E}}
\newcommand{\calS}{\mathcal{S}}
\newcommand{\calF}{\mathcal{F}}

\newcommand{\calA}{\mathcal{A}}

\DeclareMathOperator*{\argmax}{arg\,max}

\newcommand{\XSampleSym}{\tilde{X}^{\text{alt}}}
\newcommand{\XSample}{\tilde{X}}

\usepackage{microtype}
\usepackage{graphicx}
\usepackage{subcaption}
\usepackage{booktabs} 
\usepackage{multirow}
\usepackage{comment}
\usepackage{thmtools} 
\usepackage{thm-restate}

\usepackage{hyperref}
\usepackage{lineno}

\usepackage{xcolor}         
\usepackage[utf8]{inputenc} 
\usepackage[T1]{fontenc}    
\usepackage{hyperref}       
\usepackage{url}            
\usepackage{booktabs}       
\usepackage{amsfonts}       
\usepackage{nicefrac}       
\usepackage{microtype}      
\usepackage{comment}

\usepackage{amsmath}
\usepackage{amssymb}
\usepackage{mathtools}
\usepackage{amsthm}
\usepackage{makecell}
\usepackage[capitalize,noabbrev]{cleveref}

\usepackage{float}

\usepackage{longtable}
\usepackage{comment}



\newcommand{\bonus}[1]{64\frac{H^3 }{#1+1}}
\newcommand{\bonustwo}[4]{c(\sqrt{\frac{H}{#1+1}\cdot(\Vemp{#1}\Vbar_{#2+1}(#3,#4)+H)\eta}+\frac{\sqrt{H^7SA\eta}\cdot \tau}{#1+1})}

\usepackage{arydshln}

\usepackage{rotating}

\usepackage{array, multirow, bigdelim, booktabs} 
\usepackage{makecell}
\usepackage{mathtools}
\usepackage{commath}
\usepackage{xcolor,color}
\newcommand{\toberemoved}[1]{}
\newcommand{\algoname}{{PSQL}}
\newcommand{\algoComments}[1]{\textcolor{blue}{#1}}
\newcommand{\V}[1]{\mathbb{V}_{#1}}
\newcommand{\Vemp}[1]{\hat{\mathbb{V}}_{#1}}

\newcommand{\Vbar}{\overline{V}}

\newtheorem{lemma}{Lemma}
\newtheorem{corollary}{Corollary}
\newtheorem{theorem}{Theorem}

\theoremstyle{plain}

\newtheorem{proposition}{Proposition}
\theoremstyle{definition}

\theoremstyle{remark}

\makeatletter
\newenvironment{proofsketch}{%
  \par\pushQED{\qed}%
  \normalfont \topsep6pt \trivlist
  \item[\hskip\labelsep
        \itshape
    Proof Sketch.]\ignorespaces
}{%
  \popQED\endtrivlist%
}
\makeatother

\usepackage[textsize=tiny]{todonotes}

\renewcommand{\hat}{\widehat}
\begin{document}

\maketitle

\begin{abstract}
Bayesian posterior sampling techniques have demonstrated superior empirical performance in many exploration-exploitation settings. However, their theoretical analysis remains a challenge, especially in complex settings like reinforcement learning.
In this paper, we introduce Q-Learning with Posterior Sampling (PSQL), a simple Q-learning-based algorithm that uses Gaussian posteriors on Q-values for exploration, akin to the popular Thompson Sampling algorithm in the multi-armed bandit setting. We show that in the tabular episodic MDP setting, PSQL achieves a regret bound of $\tilde O(H^2\sqrt{SAT})$, closely matching the known lower bound of $\Omega(H\sqrt{SAT})$. Here, S, A denote the number of states and actions in the underlying Markov Decision Process (MDP), and $T=KH$ with $K$ being the number of episodes and $H$ being the planning horizon. Our work provides several new technical insights into the core challenges in combining posterior sampling with dynamic programming and TD-learning-based RL algorithms, along with novel ideas for resolving those difficulties. We hope this will form a starting point for analyzing this efficient and important algorithmic technique in even more complex RL settings. 
\end{abstract}

\renewcommand{\hat}{\widehat}

\section{Introduction}\label{sec: introduction}
In an online Reinforcement Learning (RL) problem, an agent interacts sequentially with an unknown environment and uses the observed outcomes to learn an interaction strategy. The underlying mathematical model for RL is a Markov Decision Process (MDP). In the tabular episodic setting, the MDP has a finite state space ${\cal S}$, a finite action space ${\cal A}$ and a planning horizon $H$. On taking an action $a$ in state $s$ at step $h$, the environment produces a reward and next state from the (unknown) reward model $R_h(s,a)$ and transition probability model $P_h(s,a)$ of the underlying MDP.

Q-learning~\citep{watkins1992q} is a classic dynamic programming (DP)-based algorithm for RL. 
The DP equations (aka Bellman equations) provide a recursive expression for the optimal expected reward achievable from any state and action of the MDP, aka the $Q$-values, in terms of the optimal value achievable in the next state. 
Specifically, for any given $s\in {\cal S}, a\in {\cal A}, h\in [H]$, the Q-value $Q_h(s,a)$ is given by:
\begin{equation}\label{eq:DP}
Q_{h}(s,a) = \max_{a\in {\cal A}} R_h(s,a) + \sum_{s'\in {\cal S}} P_h(s,a,s') V_{h+1}(s'),  \text{ with} 
\end{equation}
$$\textstyle V_{h+1}(s') := \max_{a' \in {\cal A}} Q_{h+1}(s',a'),$$
with $V_{H+1}(s)=0,\,\forall\, s$. The optimal action in state $s$ is then given by the argmax action in the above.  

When the reward and transition models of the MDP are unknown, the $Q$-learning algorithm uses the celebrated Temporal Difference (TD) learning idea \citep{sutton1988learning} to construct increasingly accurate estimates of $Q$-values using past observations. 
The key idea here is to construct an estimate of the right hand side of the Bellman equation, aka \emph{target}, by \emph{bootstrapping} the current estimate $\hat{V}_{h+1}$ for the next step value function. That is, on playing an action $a$ in state $s$ at step $h$, and observing reward $r_h$ and next state $s'$, the target $z$ is typically constructed as:
$z := r_h+\hat{V}_{h+1}(s')$.

And the estimate ${\hat Q}_h(s,a)$ for $Q_h(s,a)$ is updated to fit the Bellman equations using the \emph{Q-learning update rule}\footnote{Or, $\hat{Q}_h(s,a) \leftarrow \hat{Q}_h(s,a) + \alpha_n (z- \hat{Q}_h(s,a))$ where $z- \hat{Q}_h(s,a)$ is called the Temporal Difference (TD).}
\begin{equation}
\label{eq:q-learning}
 \hat{Q}_h(s,a) \leftarrow (1-\alpha_n) \hat{Q}_h(s,a) + \alpha_n z.
\end{equation}
Here $\alpha_n$ is an important parameter of the $Q$-learning algorithm, referred to as the learning rate. It is typically a function of the number of previous visits $n$ for the state $s$ and action $a$.

There are several ways to interpret the Q-learning update rule. The traditional frequentist interpretation popularized by~\citet{mnih2015human} interprets this update as a gradient descent step for a least squares regression error minimization problem. We propose a more insightful interpretation of $Q$-learning obtained using Bayesian inference theory (details in Section \ref{sec: posterior derivation}). Specifically, if we assume a Gaussian prior 
$$\textstyle {\cal N}(\hat{Q}_h(s,a),\frac{\sigma^2}{n-1})$$ 
on the $Q$-value $Q_h(s,a)$, and a Gaussian likelihood function ${\cal N}(Q_h(s,a), \sigma^2)$ for the target $z$, then using Bayes rule, one can derive the Bayesian posterior as the Gaussian distribution
\begin{equation}
    \label{eq:q-learning-poserior}
   \textstyle {\cal N}(\hat{Q}_h(s,a),\frac{\sigma^2}{n}), \qquad\qquad \text{ where } \hat{Q}_h(s,a) \leftarrow (1-\alpha_n) \hat{Q}_h(s,a)+ \alpha_n z
\end{equation}

Importantly, the Bayesian posterior tracks not just the mean but also the variance or uncertainty in the $Q$-value estimate. Intuitively, the state and actions with a small number of past visits (i.e., small $n$) have large uncertainty in their current $Q$-value estimate, and should be explored more. The posterior sampling approaches implement this idea by simply taking a sample from the posterior, which is likely to be closer to the mean (less exploration) for actions with small posterior variance, and away from the mean (more exploration) for those with large variance. This uncertainty quantification is useful for managing the exploration-exploitation tradeoff for regret minimization. The exploration methodology is distinct from algorithms that use additive bonuses or randomized perturbations in the estimates.
\begin{figure}
  \begin{center}
\includegraphics[width=0.45\textwidth]{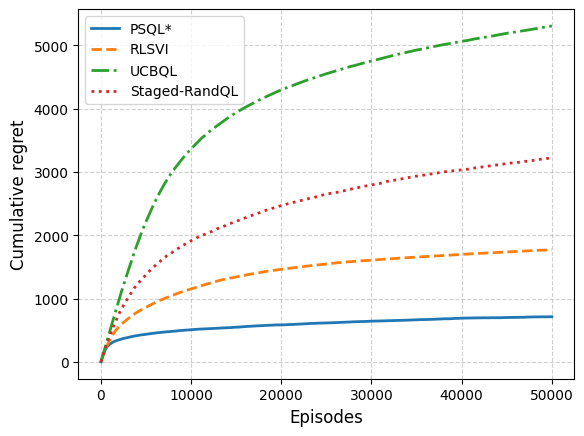}
  \end{center}
  \caption{Performance comparison of PSQL$^*$(\newfix{a heuristic derived from~\algoname{}}), UCBQL~\citep{jin2018q}; Staged-RandQL\citep{tiapkin2023model}, and RLSVI~\citep{russo2019worst} in a chain MDP environment (for details, and more experiments, see Appendix~\ref{main: experiments}). }\label{fig: intro single}
\end{figure}

Following this intuition, we introduce Q-learning with Posterior Sampling (PSQL) algorithm that maintains a posterior on Q-values for every state and action. Then, to decide an action in any given state, it simply generates a sample from the posterior for each action, and plays the arg max action of the sampled $Q$-values.

Popularized by their success in the multi-armed bandit settings \citep{thompson1933likelihood,chapelle2011empirical,kaufmann2012thompson,agrawal2017near}, 
and in deep reinforcement learning regimes~\citep{osband2016deep,fortunato2017noisy,azizzadenesheli2018efficient,li2021hyperdqn,fan2021model,sasso2023posterior}, the posterior sampling approaches are generally believed to be more efficient in managing the exploration-exploitation tradeoff than their UCB (Upper Confidence Bound)  counterparts.  Our preliminary experiments (see Figure \ref{fig: intro single}) suggest that this is also the case for our Q-learning approach in the tabular RL setting.
\footnote{For the empirical study reported here, we implement a vanilla version of posterior sampling with Q-learning. The PSQL algorithm presented later modifies the target computation as described later in Section \ref{sec:algo}, for the sake of theoretical analysis.}
However, obtaining provable guarantees for posterior sampling approaches have historically been more challenging.

Several previous works (e.g., \citet{li2021breaking, jin2018q}) use UCB-based exploration bonuses to design optimistic Q-learning algorithms with near-optimal regret bounds.\footnote{The algorithm from \citet{jin2018q} is referred as UCBQL in the text and experiments}. 
For the posterior sampling based approaches however, the first tractable $Q$-learning based algorithm with provable regret bounds was provided only recently by \citet{tiapkin2023model} for the Staged-RandQL algorithm. However, the Staged-RandQL (and RandQL) algorithm presented in their work deviated from the natural approach of putting a posterior on Q-values, and instead derived a Dirichlet Bayesian posterior on the transition probabilities, which is conceptually closer to some model-based posterior sampling algorithms, e.g., the PSRL algorithm in \citet{agrawalJia2017}. RandQL implements sampling from the implied distribution on Q-estimates in a more efficient way via learning rate randomization, so that it qualifies as a model-free algorithm. 

Another closely related approach with provable regret bounds is the RLSVI (Randomized Least Squared Value Iteration) algorithm by~\citet{osband2016generalization,osband2019deep, russo2019worst,zanette2020frequentist,agrawal2021improved,xiong2022near}. The RLSVI algorithm is an approximate value iteration-based approach that can be interpreted as maintaining ``empirical posteriors" over the value functions by injecting noise. However, in the tabular setting considered here, RLSVI reduces to a model-based algorithm. (See Section \ref{sec: related works} for further comparisons.) 
There are several other model-based posterior sampling algorithms in the literature with near optimal regret bounds~\citep{osband2013more,osband2017posterior,ouyang2017learning,AgrawalJiaPSRLMOR,agarwal2022model,tiapkin2022optimistic}. Model-based algorithms directly estimate  the reward and/or transition model, instead of the implied optimal value functions, or policy parameters. In many settings, model-based algorithms can be more sample efficient. But, model-free approaches like Q-learning have gained popularity in practice because of their simplicity and flexibility, and underlie most successful modern deep RL algorithms (e.g., DQN \citet{mnih2013playing}, DDQN \citet{vanhasselt2015deep}, A3C \citet{mnih2016asynchronous}). 
Provable regret bounds for a simple posterior sampling based Q-learning algorithm like PSQL, therefore, still remains a problem of significant interest.

Our contributions are summarized as follows.
\begin{itemize}
\item We propose the Q-learning with Posterior Sampling (\algoname{}) algorithm that is the \textit{first} Q-learning algorithm with natural and efficient exploration provided by the Bayesian posterior sampling approach. Our preliminary experiments demonstrate promising empirical performance of this simple algorithm compared to contemporary approaches. 
(See Section \ref{sec:algo} for algorithm design and Appendix \ref{main: experiments} for experiments.)
\item We provide a novel derivation of Q-learning as a solution to a Bayesian inference problem with a regularized Evidence Lower Bound (ELBO) objective. Besides forming the basis of our PSQL algorithm design, this derivation provides a more insightful interpretation of the learning rates introduced in some previous works on $Q$-learning (e.g., \citet{jin2018q}) to obtain provable regret bounds. (See Section \ref{sec: posterior derivation}.) 
\item We prove a near-optimal regret bound of $\tilde{O}(H^2\sqrt{SAT})$ for \algoname{} which closely match the known lower bound of  $\Omega(H\sqrt{SAT})$~\citep{jin2018q}. Our result improves the regret bounds available for the closely related approach of RLSVI~\citep{russo2019worst} and matches those recently derived by~\citet{tiapkin2023model} for a more complex posterior sampling based algorithm Staged-RandQL. (See Section~\ref{sec:results},~\ref{sec: regret analysis}.)
\item Our regret analysis reveals several key difficulties in combining posterior sampling with DP and TD-learning-based algorithms due to error accumulation in the bootstrapped target; along with novel ideas for overcoming these challenges. (See Section~\ref{sec:challenges}.) 
    
\end{itemize}

\section{Our setting and main result}
\label{sec:results}
In the online reinforcement learning setting, the algorithm interacts with environment in $K$ sequential episodes, each containing $H$ steps. At step $h=1,\ldots, H$ of each episode $k$, the algorithm observes the current state $s_{k,h}$, takes an action $a_{k,h}$ and observes a reward $r_{k,h}$ and the next state $s_{k,h+1}$. 

The reward and next state are generated by the environment according to a fixed underlying MDP $({\cal S}, {\cal A}, R, P)$, so that $\Pr(s_{k,h+1}=s'|s_{k,h}=s, a_{k,h}=a) =P_h(s,a,s'), \Ex[r_{k,h}|s_{k,h}=s, a_{k,h}=a]=R_h(s,a)$. However, the reward functions and the transition probability distributions $R_h,P_h,h=1,\ldots, H$ are apriori unknown to the algorithm. The goal is to minimize total regret compared to the optimal value given by the dynamic programming equation~\eqref{eq:DP}. Specifically, let $\pi_k$ denote the policy used by the algorithm in episode $k$, so that $a_{k,h}=\pi_k(s_{k,h})$. We aim to bound regret, defined as
\begin{equation}\label{eq: regret def}
  \textstyle \Reg(K):= \sum_{k=1}^K (V_1(s_{k,1}) - V^{\pi_k}(s_{k,1}))
\end{equation}
for any set of starting states $s_{k,1},k=1,\ldots, K$. 

Since the algorithm can only observe the environment's response at a visited state and action, a main challenge in this problem is managing the exploration-exploitation tradeoff. This refers to the dilemma between picking actions that are most likely to be optimal according to the observations made so far, versus picking actions for that allow visiting less-explored states and actions. The  two main approaches for managing exploration-exploitation tradeoffs are the optimistic approaches based on UCB and posterior sampling approaches (aka Thompson Sampling in multi-armed bandit settings). 

In this paper, we present a Q-learning algorithm with posterior sampling (PSQL) that achieves the following regret bound. Here $\tilde O(\cdot)$ hides absolute constants and logarithmic factors.

\begin{theorem}[Informal]
The cumulative regret of our PSQL (Algorithm~\ref{alg:main episodic} ) in $K$ episodes with horizon $H$ is bounded as 
$ \Reg(K)  \leq \tilde{O}\del{H^2\sqrt{SAT}},$ where $T=KH$.
\end{theorem}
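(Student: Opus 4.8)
The plan is to prove a regret bound of the form $\tilde O(H^2\sqrt{SAT})$ for PSQL by combining the standard recursive decomposition of regret for Q-learning-style algorithms (as pioneered by \citet{jin2018q}) with the optimism-in-expectation and anti-concentration arguments characteristic of posterior sampling analyses. The first step is to set up notation for the per-episode, per-step quantities: let $\hat{Q}^k_h(s,a)$ denote the estimate at the start of episode $k$, let $N^k_h(s,a)$ be the visitation count, and decompose the learning-rate weights $\alpha_n$ in the standard way so that $\hat{Q}^k_h(s,a) = \sum_{i} \alpha^i_n z_i$ for the targets $z_i$ observed on past visits, where the $\alpha^i_n$ satisfy the usual identities ($\sum_i \alpha^i_n = 1$, $\sum_i (\alpha^i_n)^2 \le 2H/n$, etc.). Then I would write the quantity $\hat{V}^k_h(s_{k,h}) - V^{\pi_k}_h(s_{k,h})$ recursively in terms of the same quantity at step $h+1$, picking up (i) the difference between the sampled target and the true Bellman target on each of the past visits, (ii) a martingale term from next-state sampling, and (iii) the exploration noise injected by posterior sampling at the current episode.

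**The two conceptual pillars** are optimism-in-expectation and the control of the accumulated bootstrapped noise. For optimism, I need to show that the posterior sample $\tilde{Q}^k_h(s,a)$ drawn from $\mathcal{N}(\hat{Q}^k_h(s,a), \sigma(N)^2)$ overestimates $Q_h(s,a)$ with constant probability (e.g., via the Gaussian anti-concentration bound $\Phi(-1)$ that shows up in the paper's macros), conditioned on the estimate already being approximately correct. This gives a "$V_1(s_{k,1}) \le \mathbb{E}[\tilde{V}^k_1(s_{k,1})] + \text{small}$" type inequality. The subtle point, flagged already in the excerpt's discussion of the "core challenges in combining posterior sampling with dynamic programming," is that the target $z_i = r_i + \hat{V}^{k_i}_{h+1}(s')$ is itself built from noisy, previously-sampled value estimates, so the noise injected at step $h+1$ of earlier episodes propagates into the estimate at step $h$. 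Controlling this requires a careful induction over $h$ from $H+1$ down to $1$, tracking how the per-step error blows up by at most a bounded multiplicative factor (the mechanism by which the standard Q-learning analysis keeps this at $O(1)$ per level via the learning rate $\alpha_n = (H+1)/(H+n)$ is exactly where the "$H^2$" rather than "$H^{3/2}$" in the final bound comes from, modulo log factors).

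**Carrying out the recursion**, I would unroll the per-step bound across all $H$ steps and sum over $K$ episodes. The dominant term comes from summing the posterior standard deviations $\sigma(N^k_h(s_{k,h}))$ over all visits: since $\sigma(n) \approx \sqrt{H^3/n}$ (up to logs, consistent with the bonus macros in the paper), the sum $\sum_{k,h} \sigma(N^k_h(s_{k,h}))$ over visits to a fixed $(s,a,h)$ telescopes like $\sum_n \sqrt{H^3/n} \approx \sqrt{H^3 n_{\text{final}}}$, and pooling via Cauchy–Schwarz over the $SAH$ triples gives $\sqrt{H^3 \cdot SAH \cdot (KH/(SAH))} \cdot \sqrt{SAH} = \tilde O(H^2\sqrt{SAT})$. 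The martingale terms (next-state sampling noise, and the Gaussian exploration noise itself) are handled by Azuma–Hoeffding / Freedman-type concentration and contribute only lower-order $\tilde O(H\sqrt{T})$ or $\tilde O(\sqrt{H^3 T})$ terms, and the lower-order accumulation terms (those multiplied by the $1/(n+1)$-type factors with no matching $\sqrt{n}$ growth) sum to $\tilde O(H^{?}SA)$-type polynomial terms in $S,A,H$ without $\sqrt T$ dependence. Finally I would take a union bound over all the high-probability events used (estimates being approximately correct, martingale concentration) with the $\log(SAKH/\delta)$ factor appearing in the paper's macro \texttt{\textbackslash logvalue}.

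**The main obstacle** I anticipate is precisely the propagation of the bootstrapped sampling noise through the recursion: in a UCB algorithm the bonus is deterministic and monotone, so the standard Q-learning induction closes cleanly, but here the injected Gaussian noise at step $h+1$ of earlier episodes can, on bad events, make the target $z_i$ an \emph{under}estimate, breaking naive optimism. Resolving this likely requires either (a) a modified target computation that clips or inflates the bootstrapped value (the footnote in the excerpt explicitly says "PSQL algorithm presented later modifies the target computation slightly ... for the sake of theoretical analysis"), together with an inductive invariant of the form "with high probability, for all $k,h,s,a$: $\hat{Q}^k_h(s,a) \ge Q_h(s,a) - (\text{accumulated slack}_h)$ where $\text{slack}_h$ grows only mildly in $h$", or (b) an "optimism on average" argument that only needs the sample to be optimistic in conditional expectation, paired with a separate bound on the negative part of the noise. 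Threading this invariant through while keeping all the error terms at the claimed $\tilde O(H^2\sqrt{SAT})$ order — in particular ensuring the slack does not accumulate a factor exponential in $H$ — is the technical heart of the proof.
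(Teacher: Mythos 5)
Your outer skeleton matches the paper's: the optimism-error/estimation-error decomposition, constant-probability optimism via Gaussian anti-concentration, the Jin-style weight recursion with $\alpha_n=\frac{H+1}{H+n}$, and the final variance summation giving $H^2\sqrt{SAT}$ are all there. But the step you yourself flag as ``the technical heart'' --- how to stop the bootstrapped noise from either killing optimism (constant probability $p$ compounding to $p^H$ through the recursion) or blowing up the error (inflating every posterior and taking $J$ samples everywhere gives $\sigma\sqrt{J^H}$) --- is exactly what the paper's proof supplies and your proposal leaves open. The paper's resolution is concrete and three-fold, and none of it appears in your attempt: (i) the target is made optimistic \emph{with high probability} by taking the max of $J=\tilde O(1)$ i.i.d.\ samples from the posterior of a \emph{single} action, namely $\hat a=\arg\max_a \hat Q_{h+1}(s',a)$, the posterior-mean maximizer --- oversampling only inside the target (and only at one action) is what breaks the $p^H$ decay without triggering the $J^H$ variance inflation, since the played action is still chosen from a single sample; (ii) because the target's action $\hat a$ need not equal the played action $a_{k,h+1}$, the error recursion produces posterior-variance terms at $\hat a$ rather than at the visited action, and the paper needs a dedicated action-mismatch lemma showing $\sigma(N(s,\hat a))^2 \lesssim \sigma(N(s,a_{k,h+1}))^2\log(1/\delta)$ with constant probability --- your proposal never notices this mismatch; (iii) unlike UCB, the optimism error term in the decomposition is \emph{not} zero, and constant-probability optimism alone does not let you sum it over $K$ episodes; the paper converts constant-probability statements into quantitative high-probability bounds of the form $\tilde X \ge X^* - \frac{1}{p}(\mathbb{E}[\tilde X]-\underline X)$ (the Zanette/Agrawal-style device), which is what yields the per-step optimism error $\frac{1}{p_1}\sqrt{2\,\sigma(N)^2\log(KH/\delta)}$ that then sums to the claimed order. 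Your alternative sketches (``clip or inflate the bootstrapped value'' or ``optimism on average'') are gestures toward this but do not constitute the construction or the lemmas, so as written the proof does not close.

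A smaller point: your bound of the exploration-noise and martingale contributions as lower order is fine, and your accounting of where $H^2$ comes from (posterior width $\sqrt{H^3/n}$ summed over visits, then pooled over $(s,a,h)$) agrees with the paper's Corollary on variance sums; the gap is entirely in items (i)--(iii) above, i.e., in the parts specific to posterior sampling rather than the parts inherited from the UCB-Q-learning template.
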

\subsection{Related works}\label{sec: related works}

\begin{table}[h!]
\centering
\footnotesize
\begin{tabular}{|c|c|c|c|}
\hline
 & \textbf{Algorithm} & \textbf{Regret} & \textbf{Comments} \\
\hline
\hline
\multirow{2}{*}{\rotatebox[origin=c]{90}{\text{UCB}}} 
& UCBQL~\citep{jin2018q} & $\Tilde{O}(H^{1.5}\sqrt{SAT})$  & Q-learning with UCB\\
& Q-EarlySettled-Advantage~\citep{li2021breaking} & $\Tilde{O}(H\sqrt{SAT})$ & Q-learning with UCB\\
\hline
\hline
\multirow{6}{*}{\rotatebox[origin=c]{90}{\text{\small Posterior sampl.}}} 
 & Conditional Posterior Sampling~\citep{dann2021provably}  & $\Tilde{O}(HSA\sqrt{T})$ & computationally intractable \\
 & RLSVI~\citep{russo2019worst}  & $\Tilde{O}(H^3S^{1.5}\sqrt{AT})$ &  approximate value iteration\\
 & C-RLSVI~\citep{agrawal2021improved}  & $\Tilde{O}(H^2S\sqrt{AT})$ & approximate value iteration\\
 & Staged RandQL~\citep{tiapkin2023model}  & $\Tilde{O}(H^2\sqrt{SAT})$ & randomized learning-rates   \\
 & \cellcolor{gray!20}\textbf{\algoname}~[this work] & \cellcolor{gray!20} $\Tilde{O}(H^2\sqrt{SAT})$  &  Gaussian posteriors on Q-values \\
 & Lower bound~\citep{jin2018q}  & $\Omega(H\sqrt{SAT})$  & - \\
\hline
\end{tabular}
\caption{Comparison of our regret bound to related works (~\citet{dann2021provably} is in
function approximation setting).}
\label{table: model free}
\end{table}
Our work falls under the umbrella of online episodic reinforcement learning on regret minimization in tabular setting. 
In the category of the Upper Confidence Bound (UCB)-based algorithms, there is a huge body of research both on model-based~\citep{bartlett2012regal,azar2017minimax,fruit2018efficient,zanette2019tighter,zhang2020almost,boone2024achieving}, and model-free~\citep{jin2018q,bai2019provably,menard2021ucb,zhang2023sharper,agrawal2024optimistic} algorithms. In-fact,~\citet{jin2018q} were the first to provide a near-optimal worst-case regret bound of $\tilde{O}(\sqrt{H^3SAT})$, subsequently improved to $\tilde O(H\sqrt{SAT})$ by \citet{zhang2020almost,li2021breaking}.

Motivated by the superior empirical performance of Bayesian posterior sampling approaches compared to their UCB counterparts~\citep{chapelle2011empirical,kaufmann2012thompson,osband2013more,osband2017posterior,osband2019deep} 
there have been several attempts at deriving provable regret bounds for these approaches in the episodic RL setting. Among model-based approaches, near optimal regret bounds have been established for approaches that use (typically Dirichlet) posteriors on transition models~\citep{ouyang2017learning,agrawalJia2017,tiapkin2022optimistic}.
There have been relatively limited studies on model-free, sample-efficient and computationally efficient Bayesian algorithms.~\citet{dann2021provably} proposed one such framework but is computationally intractable. \textit{Our work aims to fill this gap.}

A popular approach closely related to posterior sampling is Randomized Least Square Value Iteration (RLSVI)~\citep{osband2016generalization,osband2019deep, russo2019worst,zanette2020frequentist}. In RLSVI, the exploration is carried out by injecting randomized uncorrelated noise to the reward samples, followed by a re-fitting of a $Q$-function estimate by solving a least squares problem on \emph{all the past data}, incurring heavy computation and storage costs. This process has been interpreted as forming an approximate posterior distribution over value functions. RLSVI too enjoys a ``superior-than-UCB'' empirical performance. 
However, its worst-case regret bounds~\citep{russo2019worst} remain suboptimal (see Table~\ref{table: model free}) in their dependence on the size of the state space. 

More recently,~\citet{tiapkin2023model} proposed (RandQL and Staged-RandQL) algorithms that are model-free, tractable and enjoy $\tilde{O}(H^2\sqrt{SAT})$ regret by randomizing the learning rates of the Q-learning update rule. Their algorithmic design is based on Dirichlet posteriors on transition models and efficient implementation of the implied distribution on $Q$-value estimates via learning rate randomization. Our algorithm is much simpler with far lesser randomized sampling steps and in our preliminary experiments (see Figure \ref{fig: intro single} and Section \ref{main: experiments}), our PSQL approach with simple Gaussian based posteriors shows better/comparable performance compared to these algorithms. 
 
In Table~\ref{table: model free} we provide a detailed comparison of our results with the above-mentioned related work on posterior sampling algorithms for RL.

\section{Algorithm design}\label{sec:algo}
We first present a Bayesian posterior-based derivation of the Q-learning update rule, which forms the basis for our algorithm design.

\subsection{Posterior derivation}\label{sec: posterior derivation}
An insightful interpretation of the Q-learning update rule can be obtained using Bayesian inference. Let $\theta$ denote the Bayesian parameter that we are inferring, which in our case is the quantity $Q_h(s,a)$. Given a prior $p$ on  $\theta$, log likelihood function $\ell(\theta, \cdot)$, and a sample $z$, the Bayesian posterior $q$ is given by the Bayes rule:
\begin{equation}
\label{eq:Bayes0}
q(\theta) \propto p(\theta) \cdot \exp(\ell(\theta, z))
\end{equation}
which can also be derived as an optimal solution of the following optimization problem (see Chapter 10 in~\cite{bishop2006pattern}), whose objective is commonly referred to as  Evidence Lower Bound (ELBO):
\begin{equation}
\label{eq:Bayes1}
\max_q \Ex_{\theta\sim q}[\ell(\theta, z)] - KL(q||p)
\end{equation}
where $KL(\cdot||\cdot)=\int_\theta q(\theta)\log(\frac{q(\theta)}{p(\theta)})$ denotes KL-divergence function. 
It is well known that when $p(\cdot)$ is Gaussian, say ${\cal N}(\hat{\mu}, \frac{\sigma^2}{n-1})$, and the likelihood given $\theta$ is Gaussian ${\cal N}(\theta,\sigma^2)$, then the posterior $q(\cdot)$  is given by the Gaussian distribution 
\begin{equation}
\label{eq:Gaussian1}
  \textstyle  {\cal N}(\hat{\mu}, \frac{\sigma^2}{n}), \text{ with } \hat{\mu} \leftarrow (1-\alpha_n) \hat{\mu} + \alpha_{n} z
\end{equation}
with $\alpha_n=\frac{1}{n}$. Therefore, substituting $\theta$ as $Q_h(s,a)$ and $\hat{\mu}$ as $\hat{Q}_h(s,a)$, the above yields the Q-learning learning update rule \eqref{eq:q-learning} with learning rate $\alpha_n=\frac{1}{n}$. 

A caveat is that the above assumes $z$ to be an unbiased sample from the target distribution, whereas in Q-learning, $z$ is biased due to bootstrapping. In a recent work, \citet{jin2018q} observed that in order to account for this bias and obtain theoretical guarantees for Q-learning, the learning rate needs to be adjusted to $\alpha_n=\frac{H+1}{H+n}$. In fact, Bayesian inference can also provide a meaningful interpretation of this modified learning rate proposed in \citet{jin2018q}. 
Consider the following ``regularized" Bayesian inference problem \citep{khan2023bayesian} which adds an entropy term to the ELBO objective in \eqref{eq:Bayes1}:  
 \begin{equation}
\label{eq:Bayes2}
\max_q \Ex_{\theta\sim q}[\ell(\theta, z)] - KL(q||p) + \lambda_n {\cal H}(q)
\end{equation}
where ${\cal H}(q)$ denotes the entropy of the posterior.  We show in Lemma~\ref{lem: Q learning bayesian update rule} (refer Appendix~\ref{apx: bayesian learning rule derviation}) that for the choice of $\lambda_n=\frac{H}{n}$, when the prior $p(\theta)$ is Gaussian ${\cal N}(\hat{\mu}, \frac{\sigma^2}{n-1})$, and the likelihood of $z$ given $\theta$ is Gaussian ${\cal N}(\theta,\frac{\sigma^2}{H+1})$, then the posterior $q(\cdot)$ is given by the Gaussian distribution 
in \eqref{eq:Gaussian1} \emph{with the same learning rate $\alpha_n=\frac{H+1}{H+n}$} as suggested in \citet{jin2018q}. 

Substituting $\theta$ as $Q_h(s,a)$ and $\hat{\mu}$ has $\hat{Q}_h(s,a)$ in \eqref{eq:Bayes2}, we derive that given a Gaussian prior ${\cal N}(\hat{Q}_h(s,a),\frac{\sigma^2}{n-1})$ over $Q_h(s,a)$, Gaussian likelihood ${\cal N}(Q_h(s,a),\frac{\sigma^2}{H+1})$ on target $z$, the following posterior maximizes the regularized ELBO objective:
\begin{equation}\textstyle {\cal N}(\hat{Q}_h(s,a),\frac{\sigma^2}{n}), \,\text{where}\qquad 
\widehat{Q}_h(s,a) \leftarrow (1-\alpha_n) \widehat{Q}_h(s,a)+ \alpha_n z, 
\end{equation}
where $\alpha_n = \frac{H+1}{H+n}$, and $n$ is the number of samples for $s,a$ observed so far.

The entropy regularization term of~\eqref{eq:Bayes2} introduces extra uncertainty in the posterior. Intuitively, this makes sense for Q-learning as the target $z$ is bootstrapped on previous interactions and likely has additional bias. The weight $\lambda_n$ of this entropy term decreases as the number of samples increases and the bootstrapped target is expected to have lower bias. This derivation may be of independent interest as it provides an intuitive explanation of the modified learning rate schedule proposed ($\frac{H+1}{H+n}$ as compared to $\frac{1}{n}$) in \cite{jin2018q}, where it was motivated mainly by the mechanics of regret analysis. The above posterior derivation forms the basis of our algorithm design presented next. 



\subsection{Algorithm details}\label{sec: algorithm details}
A detailed pseudo-code of our PSQL algorithm is provided as Algorithm~\ref{alg:main episodic},\ref{alg: target computation}.
It  uses the current Bayesian posterior to generate samples of $Q$-
values at the current state and all actions, and plays the arg max action. Specifically, at a given episode $k$, let $s_h$ be the current state observed in the beginning of the episode and for action $a$, let $N_{h}(s_{h},a)$ be the number of visits of state $s_h$ and action $a$ before this episode. Let $\hat{Q}_h(s_h,a)$ be the current estimate of the posterior mean, and 
\newfix{
\begin{eqnarray}\label{eq: variance definition}
\textstyle \sigma(n) = \frac{\sigma^2}{n+1} := \bonus{n}\log(KH/\delta).   
\end{eqnarray}}
Then, the algorithm samples for each $a$,
$$\tilde Q_h(s_{h},a) \sim {\cal N}(\hat{Q}_h(s_{h},a),\var{h}{s_h,a})$$ 
and plays the arg max action
$a_{h}:= \arg \max_a \tilde Q_h(s_{h},a)$.

The algorithm then observes a reward $r_{h}$ and the next state $s_{h+1}$, computes a target $z$, and  updates the posterior mean estimate using the Q-learning update rule. 
A natural setting of the target would be $r_h + \max_{a'} \tilde Q_{h+1}(s_{h+1},a')$, 
which we refer to as the ``vanilla version" or PSQL*. However, due to unresolvable difficulties in regret analysis discussed later in Section \ref{sec: regret analysis}, the PSQL algorithm computes the target in a slightly optimistic manner ($r_h+\Vbar_{h+1}(s)$) as we describe later in this section. Our experiments (Appendix \ref{main: experiments}) show that although this modification does impact performance, PSQL still remains significantly superior to its UCB counterpart.

\begin{algorithm}[h]
\begin{algorithmic}[1]
\SetAlgoLined
\STATE \textbf{Initialize:} $\Q_{H+1}(s,a) =\hat{V}_{H+1}(s)= 0$, $\Q_{h}(s,a)=\hat{V}_{h}(s) = H, N_h(s,a)=0\, \forall s,a,h$.
\vspace{0.05in}
\FOR{episodes $k=1,2,\ldots $}
\STATE Observe $s_1$.
\FOR{step $h=1,2,\ldots,H$}
\vspace{0.05in}
\STATE Sample  $\tilde{Q}_{h}(s_h,a)\sim \mathcal{N}(\Q_{h}(s_h,a), \var{h}{s_h,a})$, for all $a\in \calA $.
\STATE Play $a_h := \argmax_{a\in \calA} \tilde{Q}_{h}(s_h,a)$.
\STATE Observe $r_h$ and $s_{h+1}$.
\STATE  $z \leftarrow \texttt{ConstructTarget}(h,r_h, s_{h+1},\hat{Q}_{h+1}, N_{h+1})$.
\STATE $n:=N_h(s_h,a_h)\leftarrow N_h(s_h,a_h)+1$, $\alpha_n:=\frac{H+1}{H+n}$.
\STATE $\Q_{h}(s_h,a_h) \leftarrow(1-\alpha_n) \Q_{h}(s_h,a_h) + \alpha_n z$.
\vspace{0.05in}
\ENDFOR
\ENDFOR
\caption{Q-learning with Posterior Sampling (PSQL)}\label{alg:main episodic}
\end{algorithmic}
\end{algorithm}
\begin{algorithm}[h]
\begin{algorithmic}
\SetAlgoLined
\STATE \textbf{Return} $r$, if $h=H+1$.
\STATE Set $\hat a = \argmax_a \hat{Q}(s',a) + \newfix{\std{}{s',a}}$. Set $J:=J(\delta)$ as in \eqref{eq: value of J}.
\STATE \algoComments{ /* Take maximum of the $J$ samples from the posterior of target $V_{h+1}$ */\\}
\STATE \newfix{Sample $\tilde V^j \sim \mathcal{N}\del{\hat{Q}(s',\hat a),\var{}{s',\hat a}}$, for $j\in [J(\delta)]$, $a\in \calA$} .
\STATE $\Vbar(s') \leftarrow  \max_{j} \newfix{\tilde V^j}$.
\STATE \textbf{Return} $z:=r+ \Vbar(s')$. 
\caption{\texttt{ConstructTarget}($h,r,s', \hat{Q}, N)$}\label{alg: target computation}
\end{algorithmic}
\end{algorithm}

Specifically, given reward $r_h=r$ and next state $s_{h+1}=s'$, a value function estimate $\overline{V}_{h+1}(s')$ is computed as the maximum of $J$ samples from the posterior on $Q_{h+1}(s',\widehat{a})$, with $\hat a$ being the arg max action of posterior mean + standard deviation.
That is, let
$$\hat{a}:=\arg \max_a \widehat{Q}_{h+1}(s',a)+\std{h+1}{s', a}, \text{ and } \newfix{\Vbar_{h+1}(s')=\max_{j\in J} \tilde{V}^j},$$
\begin{eqnarray}\label{eq: value of J}   
\mbox{ with } \textstyle J = J(\delta) \coloneqq \newfix{\frac{\log(SAT/\delta)}{\log(4/(4-p_1))}}, \quad p_1=\probvalue.
\end{eqnarray}
Observe  that the above procedure computes a $\overline{V}_{h+1}(s')$ that is more optimistic than single sample maximum (i.e., "vanilla version" $\max_{a'} \tilde Q_h(s',a')$). However, the optimism is limited only to the target computation not to the main decision-making in Line 6, marking an important departure from UCB based optimism (e.g.,~\citet{jin2018q}). Multiple sampling from the posteriors is a common technique considered in the past works~\citep{tiapkin2022optimistic,agrawalJia2017,agrawal2017thompson} to aid analysis. 
Finally, the algorithm  uses the computed target $z=r_h+\Vbar_{h+1}(s_{h+1})$ to update the posterior mean via the Q-learning update rule, with $\alpha_n=\frac{H+1}{H+n}$:
$$ \widehat{Q}_h(s,a) \leftarrow (1-\alpha_n) \widehat{Q}_h(s,a)+ \alpha z.$$  
Let $n = N_{k,h}(s,a)$, then Algorithm~\ref{alg:main episodic} implies ($\alpha_n^0 \coloneqq \Pi_{j=1}^n (1-\alpha_j)$ and $\alpha_n^i \coloneqq \alpha_i\Pi_{j=i+1}^n (1-\alpha_j)$),
\begin{eqnarray}\label{eq: Q decomposition}
    \Q_{k,h}(s,a) = \alpha_n^0H + \sum_{i=1}^n \alpha_n^i \del{ r_{k_i,h} + \overline{V}_{k_i,h+1}(s_{k_i,h+1})}.
\end{eqnarray}

\section{Regret Analysis}\label{sec: regret analysis}
We prove the following regret bound for~\algoname{}.
\begin{restatable}{theorem}{thmMainHoeffding}
\label{thm: main regret Hoeffding}
The cumulative regret of \algoname{} (Algorithm~\ref{alg:main episodic},\ref{alg: target computation}) in $K$ episodes satisfies 
\begin{eqnarray*}
\textstyle  \Reg(K):=(\sum_{k=1}^K V^*_1(s_{k,1})-V_1^{\pi_k}(s_{k,1})) \leq O\del{H^2\sqrt{SAT}\chi},
\end{eqnarray*}
with probability at least $1-\delta$, where $\chi=\logvaluetwo$ and $T=KH$.
\end{restatable}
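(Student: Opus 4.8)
The plan is to follow the standard template for regret analysis of optimistic/randomized Q-learning algorithms (as in~\cite{jin2018q}), adapted to handle the posterior-sampling noise. First I would set up notation for the episode-indexed quantities: for a fixed $(s,a,h)$, let $n = N_h(s,a)$ be the visit count and $t_1 < t_2 < \cdots < t_n$ the episodes at which $(s,a)$ was visited at step $h$. Unrolling the Q-learning update rule with learning rate $\alpha_n = \tfrac{H+1}{H+n}$ gives, with the usual weights $\alpha_n^0 = \prod_{i=1}^n(1-\alpha_i)$ and $\alpha_n^i = \alpha_i\prod_{j=i+1}^n(1-\alpha_j)$,
\[
\hat Q_h(s,a) = \alpha_n^0 H + \sum_{i=1}^n \alpha_n^i\bigl(r_{t_i,h} + \overline V_{t_i,h+1}(s_{t_i,h+1})\bigr),
\]
where $\overline V$ is the target-value estimate produced by \texttt{ConstructTarget}. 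The key properties of these weights from~\cite{jin2018q} ($\sum_i \alpha_n^i = 1$, $\sum_i (\alpha_n^i)^2 \le \tfrac{2H}{n}$, $\alpha_n^0 = 0$ for $n\ge1$, etc.) will be used repeatedly.

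The second step is the \emph{optimism} argument: show that with high probability $\hat Q_h(s,a) \ge Q^*_h(s,a)$ for all $s,a,h,k$, and moreover that the sampled $\tilde Q_h(s,a)$ is optimistic with constant probability. This is where the posterior sampling and the peculiar target construction enter. The Gaussian sample $\tilde Q_h(s,a)\sim \mathcal N(\hat Q_h(s,a), \sigma(N)^2)$ with $\sigma(n)^2 = \Theta(H^3/(n+1))$ plays the role of the UCB bonus; a standard Gaussian anti-concentration bound shows $\tilde Q \ge \hat Q$ (hence $\ge Q^*$ once $\hat Q\ge Q^*$ is established) with probability $\ge \Phi(-1)$. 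The role of taking the max of $J = \Theta(\log(SAT/\delta)/\log(1/(1-p_1)))$ i.i.d.\ samples in \texttt{ConstructTarget} is to boost this constant success probability to a high-probability ($1-\delta/\text{poly}$) statement for the \emph{target} value $\overline V_{h+1}(s')$, so that the optimism can be propagated by backward induction on $h$ through the Bellman recursion without losing a factor at every step. I would make this induction explicit: assuming $\hat Q_{h+1}\ge Q^*_{h+1}$ and the target is w.h.p.\ $\ge \max_a Q^*_{h+1}(s',a) = V^*_{h+1}(s')$, the unrolled expression above plus concentration of $\sum_i \alpha_n^i(r_{t_i,h} + V^*_{h+1}(s_{t_i,h+1}) - (R_h + P_h V^*_{h+1})(s,a))$ (a martingale, bounded via Azuma using $\sum(\alpha_n^i)^2\le 2H/n$) gives $\hat Q_h\ge Q^*_h$ up to a lower-order term absorbed by the bonus.

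The third step is the \emph{regret decomposition and recursion}. Write $\delta_{k,h} := \hat Q_{k,h}(s_{k,h},a_{k,h}) - V^{\pi_k}_h(s_{k,h})$ (or the analogue with $\tilde Q$ / $\overline V$), and by optimism $\Reg(K) \le \sum_k (\hat V_{k,1}(s_{k,1}) - V^{\pi_k}_1(s_{k,1})) = \sum_k \delta_{k,1}$ up to the failure probability. Then bound $\delta_{k,h}$ in terms of $\delta_{k,h+1}$ at the next visited state: the Q-learning unrolling contributes a term $\sum_{i} \alpha_{n_{k,h}}^i \delta_{t_i, h+1}$-type expression, which after summing over $k$ and using $\sum_{n\ge i}\alpha_n^i \le 1+\tfrac1H$ telescopes with only an $e$-factor blowup per level; the bonus terms $\sum_k \sigma(N_{k,h}(s_{k,h},a_{k,h}))$ sum (via the pigeonhole/$\sum 1/\sqrt{n}$ argument over $S A$ cells) to $\tilde O(\sqrt{H^3 SA \cdot KH}) = \tilde O(\sqrt{H^4 SAT})$ per level; the martingale/Azuma error terms similarly contribute $\tilde O(\sqrt{H^2 T})$ per level; and the leading $\alpha_n^0 H$ initialization terms contribute $O(H^2 SA)$ total. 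Carrying the recursion through all $H$ levels multiplies the per-level bound by $H$, yielding $\Reg(K) \le \tilde O(H^2\sqrt{SAT}) + \text{lower order}$.

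The hard part, and where I expect this paper's novelty to lie (per the "challenges" section referenced), is controlling \emph{error accumulation in the bootstrapped target} when the target is itself a max of Gaussian samples rather than a clean optimistic estimate. The subtlety is that $\overline V_{h+1}(s')$ is optimistic but \emph{over-optimistic by a non-negligible, random amount} — roughly $\Theta(\sigma(N_{h+1}(s',\hat a))\sqrt{\log J})$ — and this over-optimism is injected into the target at \emph{every} update, so naively it could blow up by a factor $J$ or compound across the $H$ levels. The argument must show that (i) choosing $\hat a$ as the \emph{posterior-mean} argmax (rather than re-sampling for every action) keeps the over-optimism tied to a single well-controlled variance term, (ii) the $\sqrt{\log J} = \tilde O(1)$ inflation is only logarithmic and can be folded into the $\tilde O(\cdot)$, and (iii) because $\sigma(n)^2$ decays like $1/n$, the accumulated over-optimism $\sum_k \sigma(N_{k,h+1})$ at each level is still only $\tilde O(\sqrt{H^3 SAT})$ — the same order as the bonus itself — so it does not worsen the final bound. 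Making this rigorous, i.e., decoupling the randomness of the target samples from the visit-count filtration and showing the compounding is controlled, is the technical crux; I would isolate it in a lemma bounding $\sum_{k,h}$ of the "target gap" $\overline V_{k,h}(s_{k,h}) - \max_a \hat Q_{k,h}(s_{k,h},a)$ and another bounding the "sample gap" $\max_a \tilde Q_{k,h}(s_{k,h},a) - \max_a \hat Q_{k,h}(s_{k,h},a)$, both by $\tilde O(H^2\sqrt{SAT})$, before assembling the final recursion.
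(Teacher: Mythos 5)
There is a genuine gap, and it sits exactly at the point the paper identifies as its main difficulty. Your Step 2 claims that with high probability $\hat Q_h(s,a)\ge Q^*_h(s,a)$ for all $s,a,h,k$, with the martingale fluctuation of $\sum_i\alpha_n^i\bigl(r_{t_i,h}+V^*_{h+1}(s_{t_i,h+1})-(R_h+P_hV^*_{h+1})(s,a)\bigr)$ ``absorbed by the bonus.'' But in PSQL there is no bonus in the posterior mean: the Gaussian noise is zero-mean, and the only upward push in $\hat Q_h$ comes from the (nonnegative, but possibly tiny) excess optimism $\overline V_{h+1}-V^*_{h+1}$ of the targets, which cannot compensate a deviation of order $\sqrt{H^3\log(1/\delta)/n}$ at level $h$ when the next-state visit counts are already large. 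So the most one can prove is $\hat Q_{k,h}(s,a)\ge Q^*_h(s,a)-\sigma(N_{k,h}(s,a))$ (this is the paper's Lemma~\ref{lem: optimism main lemma all bounds}(a)), and consequently the played sample $\tilde Q_{k,h}(s_{k,h},a_{k,h})$ is optimistic only with a \emph{constant} probability $p_1\approx\Phi(-1)$, not with probability $1-\delta$. Your regret decomposition then breaks: you cannot write $\Reg(K)\le\sum_k\delta_{k,1}$ ``up to the failure probability,'' because the event $\{\tilde Q_{k,1}(s_{k,1},a_{k,1})<V^*_1(s_{k,1})\}$ has constant probability in every episode. The missing ingredient is the device the paper imports from \cite{zanette2020frequentist,agrawal2021improved} (Lemma~\ref{lem: constant probability to probability 1}): if a sample exceeds $X^*$ with constant probability $p$ and exceeds $\underline X$ with high probability, then with high probability it is at least $X^*-\frac1p(\mathbb E[\tilde X]-\underline X)$. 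Applied with $X^*=V^*_h(s_{k,h})$ this turns constant-probability optimism into a per-episode optimism-error bound of order $\frac1{p_1}\sigma(N_{k,h}(s_{k,h},a_{k,h}))\sqrt{\log(KH/\delta)}$, whose sum over $k$ is $\tilde O(\sqrt{H^3SAK})$; without it (or an equivalent argument) the optimism error is simply unbounded in your scheme.

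The same conversion is also what makes your ``hard part'' workable, and its absence is a second, smaller gap. You correctly identify the action mismatch (the target uses $\hat a=\arg\max_a\hat Q_{h+1}(s',a)$ while the recursion produces variance terms at the played action) and propose to bound target and sample gaps, but the paper's route is: (i) show $\sigma(N(s',\hat a))^2\le 2\sigma(N(s',a_{\text{played}}))^2\log(1/\delta)$ only with \emph{constant} probability $p_2$ (Lemma~\ref{lem: gap between variance of a hat and a}), since $\hat a$ may be under-visited; (ii) again invoke Lemma~\ref{lem: constant probability to probability 1} to convert the resulting constant-probability bound on $\overline V_{k,h}(s_{k,h})-\tilde Q_{k,h}(s_{k,h},a_{k,h})$ into a high-probability bound involving $\mathbb E_a[\sigma(N_{k,h}(s_{k,h},a))]$, and (iii) control the gap between this conditional expectation and the realized $\sigma(N_{k,h}(s_{k,h},a_{k,h}))$ by a separate martingale argument (Lemma~\ref{lem: bonus sum gap}). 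Your outline assumes the mismatch can be tied deterministically (or w.h.p.) to ``a single well-controlled variance term,'' which is not available; everything downstream (the recursion constants, the extra $\mathbb E_a[\cdot]$ terms in the cumulative estimation error) depends on this constant-to-high-probability machinery. The rest of your plan --- the unrolling, the role of the $J$-sample max in making the \emph{target} optimistic with high probability so the induction over $h$ does not decay geometrically, the $\sum\alpha_n^i$ telescoping with an $e$ factor, and the pigeonhole bounds giving $\tilde O(H^2\sqrt{SAT})$ --- matches the paper.
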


\subsection{Challenges and techniques.}
\label{sec:challenges}
Most of the unique challenges for the theoretical analysis of Q-learning with posterior sampling are associated with the bootstrapped nature of TD-learning itself. As shown in~\eqref{eq: Q decomposition}, mean estimate at the given step $h$ depends on a weighted average of the past next-step $h+1$ estimates, causing the errors at $h+1$ of the past estimates propagate to the estimate at step $h$. In model-based methods(e.g.,~\citet{azar2017minimax,osband2017posterior,zanette2020frequentist}) such issues are non-existent as they recalculate their estimates from scratch at each time step.

\noindent{\textbf{Optimism dies down under recursion.}}
One difficulty in analyzing Bayesian posterior sampling algorithms is the absence of high probability optimism (the property that the estimates upper bound the true parameters). Observe that the regret of an algorithm in any episode $k$ can be decomposed as:
\begin{eqnarray}
\label{eq:regretBuckets}
  \textstyle   
    V^*_1(s_{k,1})-V^{\pi_k}_1(s_{k,1}) \leq  
\underbrace{\textstyle (V^*_1(s_{k,1})-\Tilde{Q}_{k,1}(s_{k,1},a_{k,1}))}_{\text{Optimism error}} + \underbrace{\textstyle (\Tilde{Q}_{k,1}(s_{k,1},a_{k,1})-V^{\pi_k}_1(s_{k,1}))}_{\text{Estimation error}}. 
\end{eqnarray}
In algorithms like UCBQL \citep{jin2018q}, there is no optimism error since the UCB estimate is a high-confidence upper bound on the optimal value function. Prior posterior sampling approaches~\citep{agrawal2012analysis, agrawal2017near,russo2019worst,agrawal2021improved} were able to bound optimism error by proving a constant probability optimism, and then boosting to high probability by a statistical argument. However, due to the recursive nature of Q-learning, their techniques do not directly apply.


To see this, suppose that if we have a constant probability $p$ of optimism of posteriors on value functions in state $H$. The optimism of stage $H-1$ value requires optimism of stage $H$ value; leading to $p^2$ probability of optimism in stage $H-1$. Continuing this way, we get an exponentially small probability of optimism for stage $1$. 

\noindent{\textbf{Multiple sampling from the posteriors only partially helps.}} 
To get around the issues with constant probability optimism, many posterior sampling algorithms (e.g., model-based PSRL \cite{AgrawalJiaPSRLMOR}, MNL-bandit~\cite{agrawal2017thompson},\cite{tiapkin2022optimistic} etc.) taking max over multiple (say $J$) samples from the posterior in order to get high probability optimism. We follow a similar modification, with differences described later in the section. We believe that, due to bootstrapping nature of Q-learning (or TD-learning methods in general), merely taking multiple samples for either decision-making and the target construction would lead to exponential (in $H$) accumulation of errors, even for $J$ as small as $2$. Below, we provide a rough argument.

We expect the bias of $\tilde{Q}_h$  (sample from the posterior distribution) to track $Q^*_h$ with error that scales as standard deviation of $\tilde{Q}_h$) (lets call that error as $\epsilon)$. Now, suppose $J$ samples are used in decision-making (i.e., we take multiple samples from the posterior distributions at Line 6 of Algorithm~\ref{alg:main episodic}). We incur regret whenever the bias of $\max_j \tilde{Q}^j$ exceeds $Q^*$. 
Using the standard techniques, this error at step $H$ has an error bound of $\epsilon \sqrt{J\log(1/\delta)}$ with probability $1-\delta$. This error subsequently propagates multiplicatively via bootstrapping of Bellman equation. For the step $H-1$, the optimism error contribution will be $\epsilon \sqrt{J^2\log(1/\delta)}$ with probability $1-\delta$. Continuing this argument, at step $1$, the cumulative optimism error will be of the order $\epsilon \sqrt{J^H \log(1/\delta)}$, i.e., exponential in $H$ for any $J\ge 2$.

The usual trick of obtaining high probability optimism by taking multiple samples from the posterior doesn't work for Q-learning, at least not without further novel ideas.

\noindent{\textbf{Our techniques.}} The design of target computation procedure is pivotal to~\algoname{}. Our algorithm design is characterized by two key items: (1) using optimistic posterior sampling in target computation \emph{only}; (2) using the argmax action $\widehat{a}$ of the posterior mean (with a standard deviation offset) in our target computation. 

First is motivated by the observation that to break the recursive multiplicative decay in constant probability of optimism, we just need to ensure high probability optimism of the next-stage value function estimate used in the target computation. Second is motivated by the previous discussion that merely taking multiple samples may lead to exponential error. In our analysis, we show that the action $\widehat{a}$ is a special action, whose standard deviation is close to the the played action $a_h$ with constant probability (Lemma \ref{lem: gap between variance of a hat and a}). As a result, we are able to demonstrate that as in the standard Q-learning $\Vbar_h(s_h)$ (defined with $\widehat{a}$) cannot be too far from $\tilde{Q}_h(s_h,a_h)$. Intuitively they are tracking the similar quantities. In summary, our algorithm uses a combination of vanilla (single-sample) and optimistic (multiple-sample) posterior sampling for action selection and target computation, respectively. 

\subsection{Proof sketch}
We provide a proof sketch for Theorem~\ref{thm: main regret Hoeffding}. All the missing details from this section are in Appendix~\ref{apx: missing proofs from regret analysis}. Here, we use $\tilde{Q}_{k,h}, \widehat{Q}_{k,h}, \overline V_{k,h}, N_{k,h}$, to denote the values of $\tilde{Q}_{h}, \widehat{Q}_{h}, \overline V_{h}, N_{h}$, respectively at the beginning of episode $k$ of Algorithm \ref{alg:main episodic}, \ref{alg: target computation}. And, as before, $s_{k,h}, a_{k,h}$ denote the state and action visited at episode $k$, step $h$.

Following the regret decomposition in \eqref{eq:regretBuckets} we bound the regret by bounding optimism error and estimation error. We introduce several new technical ideas to this end. Leveraging our algorithm design, we first prove that $\Vbar_{k,h}$ is a tracking upper bound (optimistic estimate) to $V^*_{h}$. Second and the most crucial bit is to show that deviation of $\Vbar_{k,h}(s_{k,h})$ from the sample used in decision-making, $\Tilde{Q}_{k,h}(s_{k,h},a_{k,h})$, can be tractably bounded across rounds of interactions. This combined with the optimism of $\Vbar_{k,h}$, naturally bounds the \textit{optimism error}. Third we demonstrate the estimation error has a recursive structure, i.e., error at step $h$ depends on error at $h+1$ and terms attributed to stochasticity in the model; and deviation of $\Vbar_{k,h}(s_{k,h})$ from $\Tilde{Q}_{k,h}(s_{k,h},a_{k,h})$. Therefore, first two parts are utilized to prove \textit{estimation error} bound.

\noindent{\textbf{(a) $\Vbar_{k,h}$ used in the target, is an optimistic estimate of $V^*_h$}} 

\begin{lemma}[Abridged]\label{lem: optimism main lemma all bounds abridged}
For any episode $k$ and index $h$, the following holds with probability at least $1-\delta/KH$,
$$
\Vbar_{k,h}(s_{k,h})\geq V^*_h(s_{k,h}),
$$
where $\delta$ is a parameter of \textbf{PSQL} used to define the number of samples $J$ used to compute the target.
\end{lemma}
The above is an abridged version of Lemma~\ref{lem: optimism main lemma all bounds}. The proof of which inductively uses the optimism and estimation error bounds available for the next stage ($h+1$) to bound the estimation error in the posterior mean $\Q_h(s,a)$. Then, anti-concentration (i.e., lower tail bounds) of the Gaussian posterior distribution provides the desired constant probability optimism. 

\noindent{\textbf{(b) $\Vbar_{k,h}(s_{k,h})$ used in the target, is not far away from $\tilde{Q}_{k,h}(s_{k,h},a_{k,h})$.}} 
The following lemma is an abridged version of Lemma~\ref{lem: gap between V bar and V tilde} and tells us that the gap is of the order of $\sigma(N_{k,h}(s_{k,h},a_{k,h}))$ which goes down (see \eqref{eq: variance definition}) as $s_{k,h},a_{k,h}$ is visited often with rate $1/\sqrt{N_{k,h}(s_{k,h},a_{k,h})}$. This is central to our analysis.

\begin{lemma}[Abridged]\label{lem: Vbar and Q tilde deviation}
In Algorithm~\ref{alg:main episodic}, with probability $1-2\delta$, the following holds for all $k\in[K]$ and $h\in[H]$,
\begin{eqnarray*}
&&\Vbar_{k,h}(s_{k,h}) 
- \Tilde{Q}_{k,h}(s_{k,h}, a_{k,h}) \leq  \tilde{O}(\sigma(N_{k,h}(s_{k,h},a_{k,h}))),
\end{eqnarray*}
where $\tilde{O}(\cdot)$ hides multiplicative logarithmic terms. 
\end{lemma}
The challenge here is that $\Vbar_{k,h+1}(s_{k,h+1})$ is obtained by sampling from the posterior of $\tilde{Q}(s_{k,h+1}, \cdot)$ at action $\hat{a}$ ($\coloneqq \argmax_a \Q_{k,h}(s_{k,h},a)+\sigma(N_{k,h}(s_{k,h},a))$) and not $a_{k,h+1}$ ($\coloneqq \argmax_a \tilde Q_{k,h}(s_{k,h},a)$). To get around this difficulty, we show in Lemma~\ref{lem: gap between variance of a hat and a} that $\var{k,h}{s_{k,h},\hat a} < 2\var{k,h}{s_{k,h},a_{k,h}}\log(1/\delta)$ with a non-zero probability. Finally, using a probability boosting argument (Lemma~\ref{lem: constant probability to probability 1}) we prove Lemma~\ref{lem: Vbar and Q tilde deviation}. Combined with Lemma \ref{lem: optimism main lemma all bounds abridged} to obtain a high probability optimism error bound.

\begin{restatable}[Optimism error]{lemma}{PessimismErrorPerEpisode}
\label{lem: pessimism error per episode}
In Algorithm~\ref{alg:main episodic}, with probability $1-2\delta$, the following holds for all $k\in[K]$ and $h\in[H]$,
\begin{eqnarray*}
&&V^*_h(s_{k,h}) 
- \Tilde{Q}_{k,h}(s_{k,h}, a_{k,h}) \leq  \tilde{O}(\sigma(N_{k,h}(s_{k,h},a_{k,h}))),
\end{eqnarray*}
where $\tilde{O}(\cdot)$ hides multiplicative logarithmic terms. 
\end{restatable}

\noindent{\textbf{(c) Bounding estimation error.}}
In Q-learning, an estimate of the next stage value function (here, $\Vbar_{h+1}$) is used to compute the target in order to update the $Q$-value for the current stage (here, the posterior mean $\widehat Q_h$). As a result, the error in the posterior mean for stage $h$ depends on the error in the value function estimates for $h+1$.
\begin{restatable}[Posterior mean estimation error]{lemma}{SamplingErrorLemma}\label{lem: upper bound}
With probability at least $1-\delta$, for all $k,h,s,a \in [K]\times [H]\times \calS \times \calA $,
\begin{eqnarray*}
  \textstyle  \Q_{k,h}(s,a)  - Q^*_h(s,a)   \leq \sqrt{\var{k,h}{s,a}\eta}+  \alpha_n^0H+\sum_{i=1}^n \alpha_n^i \del{\overline{V}_{k_i,h+1}(s_{k_i,h+1})-V^*_{h+1}(s_{k_i,h+1})},
\end{eqnarray*}
where $n=N_{k,h}(s,a)$, and $\eta=\logvalue$. And, $\alpha_n^i= \alpha_i\Pi_{j=i+1}^n(1-\alpha_j),\, i>0$, with $\alpha_n^0 = \Pi_{j=1}^n(1-\alpha_j) $.
\end{restatable}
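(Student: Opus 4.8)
The plan is to prove Lemma~\ref{lem: upper bound} by unrolling the Q-learning update rule and then isolating a martingale-type term that we control with a Hoeffding/Azuma bound. Writing $n = N_{k,h}(s,a)$ and letting $k_1 < k_2 < \cdots < k_n$ be the episodes in which $(s,a)$ was visited at step $h$ prior to episode $k$, a standard induction on the update rule $\Q \leftarrow (1-\alpha_i)\Q + \alpha_i z$ (exactly as in \citet{jin2018q}) gives
\begin{eqnarray*}
\Q_{k,h}(s,a) = \alpha_n^0 H + \sum_{i=1}^n \alpha_n^i\del{r_{k_i,h} + \Vbar_{k_i,h+1}(s_{k_i,h+1})},
\end{eqnarray*}
with the weights $\alpha_n^i$ as defined in the statement, satisfying $\sum_{i=0}^n \alpha_n^i = 1$. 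Subtracting $Q^*_h(s,a) = \del{\sum_{i=0}^n \alpha_n^i} Q^*_h(s,a)$ and using the Bellman equation $Q^*_h(s,a) = R_h(s,a) + P_{h,s,a}V^*_{h+1}$, I would split the difference $\Q_{k,h}(s,a) - Q^*_h(s,a)$ into three pieces: (i) the initialization term $\alpha_n^0 H$ (which we keep as is, since $H \geq Q^*_h(s,a) - $ anything relevant when $n=0$, and it is part of the claimed bound); (ii) the target-error term $\sum_{i=1}^n \alpha_n^i\del{\Vbar_{k_i,h+1}(s_{k_i,h+1}) - V^*_{h+1}(s_{k_i,h+1})}$, which is the second summand in the claim and is left untouched — this is the ``propagated error from stage $h+1$'' that the recursive structure forces us to carry; and (iii) the noise term $\sum_{i=1}^n \alpha_n^i\del{r_{k_i,h} - R_h(s,a) + V^*_{h+1}(s_{k_i,h+1}) - P_{h,s,a}V^*_{h+1}}$.

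The heart of the proof is bounding piece (iii) by $\sqrt{\var{k,h}{s,a}\,\eta}$ with high probability. The key point is that, conditioned on the history up to the visit $k_i$, the quantity $r_{k_i,h} - R_h(s,a) + V^*_{h+1}(s_{k_i,h+1}) - P_{h,s,a}V^*_{h+1}$ is mean zero (the reward noise is centered, and $s_{k_i,h+1}\sim P_h(s,a,\cdot)$ so $V^*_{h+1}$ of it has expectation $P_{h,s,a}V^*_{h+1}$) and bounded in absolute value by something of order $H$; crucially $V^*$ is a \emph{fixed} (non-random) function, so there is no dependence issue here, unlike with $\Vbar$. Hence $\sum_{i=1}^n \alpha_n^i(\cdots)$ is a martingale with bounded increments, and an Azuma--Hoeffding inequality gives a tail bound of order $\sqrt{\sum_{i=1}^n (\alpha_n^i)^2 H^2 \cdot \log(\text{stuff})}$. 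Recalling from the learning-rate analysis in \citet{jin2018q} that $\sum_{i=1}^n (\alpha_n^i)^2 \leq \frac{2H+1}{H+n} \leq O(H/n)$, this matches $\var{k,h}{s,a} = \bonus{n} = O(H^3/n)$ up to constants, so the bound takes the form $\sqrt{\var{k,h}{s,a}\,\eta}$ with $\eta = \logvalue$ after a union bound over all $(k,h,s,a)$ (there are at most $SAKH$ relevant tuples, absorbed into the logarithm). I would also need the variance-aware refinement only if we wanted tighter constants; for the Hoeffding-style bound claimed here, the crude $H^2$ bound on increments suffices.

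The main obstacle is getting the martingale structure and the union bound exactly right. Because $n = N_{k,h}(s,a)$ is itself a random stopping-time-like quantity and the episodes $k_i$ are data-dependent, one cannot naively apply Azuma to ``the first $n$ terms''; the standard fix (again following \citet{jin2018q}, Lemma 4.3 there) is to union-bound over all possible values of $n \in [K]$ simultaneously — i.e., show that for every fixed $m$, the bound holds for the partial sum of length $m$ with probability $1 - \delta/(SAKH)$, then take $n$ to be whatever it actually is. This requires defining the relevant filtration carefully (the $\sigma$-algebra generated by everything observed up to and including the $i$-th visit to $(s,a)$ at step $h$) and checking that $\alpha_n^i$, while depending on $n$, factors correctly so that rescaling reduces to a bona fide martingale. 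A secondary subtlety is handling the $n=0$ case separately, where $\Q_{k,h}(s,a) = H$ and the bound holds trivially since $Q^*_h \in [0,H]$ and the $\alpha_n^0 H = H$ term dominates. Once these bookkeeping issues are dispatched, the rest is the routine concentration calculation and collecting terms into the stated form.
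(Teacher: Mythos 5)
Your proposal is correct and follows essentially the same route as the paper's proof: unroll the Q-learning update into $\alpha_n^0 H + \sum_i \alpha_n^i(r_{k_i,h}+\Vbar_{k_i,h+1}(s_{k_i,h+1}))$, subtract $Q^*_h(s,a)$ via the Bellman equation, keep the propagated target-error term, and control the remaining $V^*$-based noise term by Azuma--Hoeffding with $\sum_i(\alpha_n^i)^2 = O(H/n)$ plus a union bound over $(s,a,h)$ and all values of $n\le K$ (the paper packages this as Corollary~\ref{corol: G concentration}), handling $n=0$ trivially via the $\alpha_n^0H$ term. No gaps.
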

Conceivably, we should be able to apply the above lemma inductively to obtain an estimation error bound (Lemma~\ref{lem: estimation errror}). Lemma~\ref{lem: Vbar and Q tilde deviation} again plays a crucial role in the above recursive bound.

\begin{restatable}[Cumulative estimation error.]{lemma}{EstimationErrorBound}\label{lem: estimation errror}
With probability at least $1-\delta$, the following holds for all $h\in[H]$,
\begin{eqnarray*}
 \textstyle \sum_{k=1}^K\del{\Tilde{Q}_{k,h}(s_{k,h},a_{k,h})-V^{\pi_k}_h(s_{k,h})} &\leq & O\del{H^2\sqrt{SAT} \logvaluetwo}.   
\end{eqnarray*}
\end{restatable}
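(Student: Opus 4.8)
The plan is to prove Lemma~\ref{lem: estimation errror} by unrolling the per-stage recursion that couples the estimation error at stage $h$ to that at stage $h+1$. First I would start from the regret decomposition: for each episode $k$, write $\tilde{Q}_{k,h}(s_{k,h},a_{k,h}) - V^{\pi_k}_h(s_{k,h})$ and split it into (i) $\tilde{Q}_{k,h}(s_{k,h},a_{k,h}) - \widehat{Q}_{k,h}(s_{k,h},a_{k,h})$, a mean-zero Gaussian fluctuation of scale $\sigma(N_{k,h}(s_{k,h},a_{k,h}))$ that I would control by a union bound / anti-concentration argument contributing $\tilde O(\sqrt{\var{k,h}{s_{k,h},a_{k,h}}\,\eta})$ per step; (ii) $\widehat{Q}_{k,h}(s_{k,h},a_{k,h}) - Q^*_h(s_{k,h},a_{k,h})$, which I bound \emph{above} using Lemma~\ref{lem: upper bound}; and (iii) $Q^*_h(s_{k,h},a_{k,h}) - V^{\pi_k}_h(s_{k,h})$. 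Term (iii) is where the policy enters: since $a_{k,h}$ is the action actually played by $\pi_k$, $Q^*_h(s_{k,h},a_{k,h}) - Q^{\pi_k}_h(s_{k,h},a_{k,h}) = P_{h,s_{k,h},a_{k,h}}(V^*_{h+1} - V^{\pi_k}_{h+1})$, which feeds the recursion to stage $h+1$.

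Next I would substitute the bound from Lemma~\ref{lem: upper bound} into term (ii). This introduces $\alpha_n^0 H + \sum_{i=1}^n \alpha_n^i(\overline{V}_{k_i,h+1}(s_{k_i,h+1}) - V^*_{h+1}(s_{k_i,h+1}))$, i.e.\ a weighted sum over \emph{past} visits to $(s_{k,h},a_{k,h})$ of the optimism-side error of the target value estimate $\overline V_{\cdot,h+1}$. The standard Jin-et-al.\ accounting trick then applies: summing these weighted past-visit terms over all $k$ and swapping the order of summation, each episode's contribution $\overline{V}_{k,h+1}(s_{k,h+1}) - V^*_{h+1}(s_{k,h+1})$ gets multiplied by $\sum_{m\ge n} \alpha_m^n \le 1 + \tfrac1H$, so the cumulative sum is bounded by $(1+\tfrac1H)\sum_k (\overline V_{k,h+1}(s_{k,h+1}) - V^*_{h+1}(s_{k,h+1}))$ plus a lower-order $\sum_k \alpha_{N_{k,h}}^0 H$ term that totals $O(HSA)$. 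I would then relate $\overline V_{k,h+1}(s_{k,h+1}) - V^*_{h+1}(s_{k,h+1})$ back to $\tilde Q_{k,h+1}(s_{k,h+1},a_{k,h+1}) - V^{\pi_k}_{h+1}(s_{k,h+1})$: here is where Lemma~\ref{lem: gap between variance of a hat and a} is invoked — $\overline V_{k,h+1}$ is the max of $J$ samples at action $\hat a$, but with the right probability its variance is within a $\log(1/\delta)$ factor of the variance at the played action $a_{k,h+1}$, and the $J$-sample max inflates by a further $\sqrt{J\log}$ factor, so $\overline V_{k,h+1}(s_{k,h+1}) \le \tilde Q_{k,h+1}(s_{k,h+1},a_{k,h+1}) + \tilde O(\sqrt{J\var{k,h+1}{s_{k,h+1},a_{k,h+1}}\log})$ up to an additional Gaussian deviation, closing the recursion with the same functional form.

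Putting it together, I get a recursion of the shape $\mathcal{E}_h \le (1+\tfrac1H)\mathcal{E}_{h+1} + (\text{variance-sum terms at stage }h) + (\text{lower order})$, where $\mathcal{E}_h := \sum_k(\tilde Q_{k,h}(s_{k,h},a_{k,h}) - V^{\pi_k}_h(s_{k,h}))$. Unrolling over $h=1,\dots,H$ costs a $(1+\tfrac1H)^H = O(1)$ factor, so $\mathcal{E}_1 \lesssim H \cdot \max_h (\text{variance sums at stage }h) + O(H^2 SA)$. The variance sums are $\sum_{k,h}\sqrt{\sigma(N_{k,h}(s_{k,h},a_{k,h}))^2\,\eta}$, and since $\sigma(n)^2 = 64H^3/(n+1)$, a Cauchy–Schwarz / pigeonhole argument over the $SAH$ bins gives $\sum_{k,h}\sqrt{H^3/(N_{k,h}+1)} \lesssim \sqrt{H^3 \cdot SAH \cdot K} = \sqrt{H^4 SAK} = H^{1.5}\sqrt{SAT}$, and multiplying by the leading $H$ from unrolling yields $O(H^2\sqrt{SAT}\cdot\text{polylog})$, with the logarithmic factors collected into $\chi = \logvaluetwo$. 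The main obstacle is the second paragraph: correctly handling the mismatch between the played action $a_{k,h+1}$ and the target action $\hat a$ while simultaneously paying for the $J$-sample max, ensuring that the variance inflation stays a constant/logarithmic factor (via Lemma~\ref{lem: gap between variance of a hat and a}) rather than compounding multiplicatively across stages — this is exactly the delicate point flagged in Section~\ref{sec:challenges}, and it requires the constant-probability event of Lemma~\ref{lem: gap between variance of a hat and a} to be converted to a usable high-probability statement (again via Lemma~\ref{lem: constant probability to probability 1}) and then summed carefully so the failure probabilities union-bound to $\delta$ over all $(k,h)$.
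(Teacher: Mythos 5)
Your route is essentially the paper's: decompose $\tilde Q_{k,h}(s_{k,h},a_{k,h})-V^{\pi_k}_h(s_{k,h})$ into the Gaussian fluctuation, the posterior-mean error via Lemma~\ref{lem: upper bound}, and the Bellman/martingale term; absorb the weighted past-visit sums with Lemma~\ref{lem: alpha summation properties}(c) at a $(1+1/H)$ cost per stage; and close the recursion by relating $\Vbar_{k,h+1}(s_{k,h+1})$ to $\tilde Q_{k,h+1}(s_{k,h+1},a_{k,h+1})$ through the action-mismatch Lemma~\ref{lem: gap between variance of a hat and a}, upgraded to high probability via Lemma~\ref{lem: constant probability to probability 1} — which is exactly what the paper packages as Lemmas~\ref{lem: gap between V bar and V tilde} and~\ref{lem: target error per episode}.

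There is, however, one concrete step your plan does not account for. When you apply Lemma~\ref{lem: constant probability to probability 1} to upgrade the constant-probability comparison of $\Vbar_{k,h+1}$ and $\tilde Q_{k,h+1}$, the resulting high-probability bound unavoidably contains $\frac{1}{p_2}\mathbb{E}_{a\sim\mathcal D_{k,h+1}}[\cdot]$, i.e.\ variance terms evaluated \emph{in expectation over the posterior-sampled action}, not at the realized action $a_{k,h+1}$ (this is the $F(k,h,\delta)$ term of Lemma~\ref{lem: target error per episode}). Your final accounting only sums $1/\sqrt{N_{k,h}(s_{k,h},a_{k,h})+1}$ over realized actions by pigeonhole; the sums $\sum_k \mathbb{E}_{a}\bigl[1/\sqrt{N_{k,h}(s_{k,h},a)+1}\bigr]$ are not controlled that way, because the expectation charges actions that may never be played and whose counts need not grow. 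The paper needs a separate, genuinely nontrivial bootstrapped-Azuma induction (Lemma~\ref{lem: bonus sum gap}, used in Corollary~\ref{corol: bound on variance terms}) to show the gap between the expected and realized bonus sums is lower order; without it your recursion closes but the variance sums you are left with are not the ones your pigeonhole bound applies to. Separately, your final arithmetic is internally inconsistent: the unrolling costs only $(1+1/H)^H\le e$, a constant, and the full sum $\sum_{h=1}^H\sum_{k=1}^K\sqrt{\var{k,h}{s_{k,h},a_{k,h}}\,\eta}$ is $O(\sqrt{H^4SAT\eta})$ directly, whereas your chain ($\sqrt{H^3\cdot SAH\cdot K}=H^{1.5}\sqrt{SAT}$, then "times the leading $H$") would give $H^{2.5}\sqrt{SAT}$; the intended final bound $O(H^2\sqrt{SAT}\,\logvaluetwo)$ is nonetheless the correct one.
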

\noindent{\textbf{(c) Putting it all together.}} To obtain the final regret bound, we simply sum up the optimism error bound in Lemma~\ref{lem: pessimism error per episode} for $K$ episodes and add it to the cumulative estimation error bound above.

\section{Conclusion}
We presented a posterior sampling-based approach for incorporating exploration in Q-learning. Our PSQL algorithm is derived from an insightful Bayesian inference framework and shows promising empirical performance in preliminary experiments. (Detailed  experimental setup and empirical results on additional environments are provided in Appendix \ref{main: experiments}.) We proved a $\Tilde{O}(H^2\sqrt{SAT})$ regret bound in a tabular episodic RL setting that closely matches the known lower bound. 
Future directions include a theoretical analysis of the vanilla version of PSQL (called PSQL* in experiments) that uses a single sample from next stage posterior in the target computation. The vanilla version outperforms Algorithm~\ref{alg:main episodic} empirically but is significantly harder to analyze. Another avenue is tightening the $H$ dependence in the regret bound; Appendix~\ref{apx: bernstein regret} outlines a sketch for improving it by $\sqrt{H}$ although at the expense of making the algorithm more complex. Further refinements are potentially achievable using techniques from~\cite{li2021breaking,zhang2020almost}.

\bibliography{iclr2026_conference}
\bibliographystyle{plainnat}

\newpage
\appendix

\renewcommand{\thetheorem}{\thesection.\arabic{theorem}}
\makeatletter
\@addtoreset{theorem}{section}
\makeatother

\renewcommand{\thelemma}{\thesection.\arabic{lemma}}
\makeatletter
\@addtoreset{lemma}{section}
\makeatother

\renewcommand{\theproposition}{\thesection.\arabic{proposition}}
\makeatletter
\@addtoreset{proposition}{section}
\makeatother

\renewcommand{\thecorollary}{\thesection.\arabic{corollary}}
\makeatletter
\@addtoreset{corollary}{section}
\makeatother

\paragraph{The Use of Large Language Models}
Commonly availaible LLM tools were only used to help to improve english writing, grammar and typeset in Latex. LLMs were not used to generate any research ideas or analysis present in this work.

\section{Experiments}\label{main: experiments}

In section~\ref{sec: regret analysis}, we proved that our Q-learning with posterior sampling algorithm ~\algoname{} enjoys regret bounds comparable to its UCB-counterparts, e.g.,~\cite{jin2018q}.  In this section, we present empirical results that validate our theory and compare the empirical performance of the posterior sampling approach against several benchmark UCB-based and randomized algorithms for reinforcement learning. 

For the empirical studies, we use the vanilla version of posterior sampling, which we denote as PSQL* (see Figure~\ref{fig: intro single}). In this vanilla version, the target computation at a step $h$ is the default $z=r_h + \max_{a'} \tilde Q_{h+1}(s_{h+1},a')$. As discussed in Section~\ref{sec: algorithm details}, PSQL modified this target computation to make it slightly optimistic, to deal with the challenges in theoretical analysis. Later, we also compare the empirical performance of PSQL and PSQL*. While the modified target computation does slightly deteriorate the performance of PSQL, in our experiments, it still performs significantly better compared to the benchmark UCB-based approach~\cite{jin2018q}.


Specifically, we compare the posterior sampling approach to the following three alorithms.
\begin{itemize}
    \item UCBQL~\cite{jin2018q} (Hoeffding version): the seminal work which gave the first UCB based Q-learning regret analysis.
    \item RLSVI~\cite{russo2019worst}: a popular randomized algorithm that implicitly maintains posterior distributions on Value functions.
    \item Staged-RandQL~\cite{tiapkin2023model}: a recently proposed randomized Q-learning based algorithm that uses randomized learning rates to motivate exploration. 
\end{itemize}

\paragraph{Environment description.}
We report the empirical performance of RL algorithms on two tabular environments described below. In each environment, we report each algorithm's average performance over 10 randomly sampled instances. 

\begin{itemize}
    \item (One-dimensional ``chain'' MDP:) An instance of this  MDP is defined by two parameters $p\in [0.7,0.95]$ and $S\in\{7,8,9,\ldots,14\}$. In a random instance, $p,\&\,S$ are chosen randomly from the given ranges. The resultant MDP environment is a chain in which the agent starts at state 0 (the far-left state), and state $S$ (the right-most state) is the goal state. At any given  step $h$ in an episode, the agent can take  ``left'' or ``right'' action. The transitions are to the state in the direction of the action taken with probability $p$, and in the opposite direction with probability $1-p$. 


\item (Two-dimensional ``grid-world'' MDP, similar to FrozenLake environment in the popular Gymnasium library:) A random instance of this MDP is defined by a $4\times 4$ grid with a random number of ``hole'' states placed at on the grid uniformly at random that the agent must avoid or else the episode ends without any reward. The agent starts at the upper-left corner, and the goal state is the bottom-right corner of the grid. There is at least one feasible path from the starting state to the goal state that avoids all hole states. At any given time step, the agent can take the "left", "right", "bottom" and "up" actions. After an action is taken, the agent has $1/3$ probability to transit to the direction of the action taken, and $1/3$ probability each to transit to the two perpendicular directions.
\end{itemize}
In both the above environments, the goal state carries the reward of $(H-h)/H$, where $H$ is the duration of the episode and $h$ is the time index within the episode at which the goal state is reached. No other state has any reward. The duration of an episode is set at $H=32$ for all experiments.

\paragraph{Findings.}
We observed that the performance of all the algorithms is sensitive to constants in the exploration bonuses or in the posterior variances. These constants were tuned such that the respective algorithms performed the best in the two environments. We made the following parameter choices for the algorithmic simulations for a fair comparison:

\begin{itemize}
\item $\delta$ is fixed for all algorithms as $0.05$.
    \item In UCBQL~\cite{jin2018q}, the Q-function estimates are initialized as the maximum value of any state in the environment ($\eqqcolon V_{\max}$). The exploration bonus for any $h,s,a$ with visit counts as $n$ is given by
    $$
\sqrt{c\frac{V^2_{\max}\log(SAT/\delta)}{n}},
    $$
    with $c=0.01$
    \item In PSQL and PSQL*, the Q-function posterior means are initialized as $V_{\max}$ (same as UCBQL) and the standard deviation of the posterior for any $h,s,a$ with $n$ visits is given by
$$\sqrt{c\frac{V^2_{\max}}{\max\{1,n\}}},$$
with $c=0.02$.
\item In RLSVI~\cite{russo2019worst}, the per-reward perturbation is a mean zero Gaussian with standard deviation for any $h,s,a$ with $n$ visits is given by, $$\sqrt{c\frac{V^2_{\max}\log(SAT/\delta)}{n+1}},$$
with $c=0.005$.
\item In Staged-RandQL~\cite{tiapkin2023model}, for the initialization of the Q-function estimates, we use a tighter upper bound of $(H-h)/H$ at step $h$ available in our environment, instead of the default $H-h$ suggested in their paper.
We use $n_0 = 1/S$ and $r_0 = 1$ as in their paper.
\end{itemize}

Our results are summarized in Figure~\ref{fig:regret_fixed_scale} and~\ref{fig:regret_fixed_scale PSQL comp}. The error bars represent one standard deviation interval around the mean cumulative regret of an algorithm over 10 runs on randomly generated instances of the environment.
\begin{figure}[h!]
    \centering
    \begin{subfigure}[t]{0.48\textwidth}
        \centering
        \includegraphics[width=\textwidth]{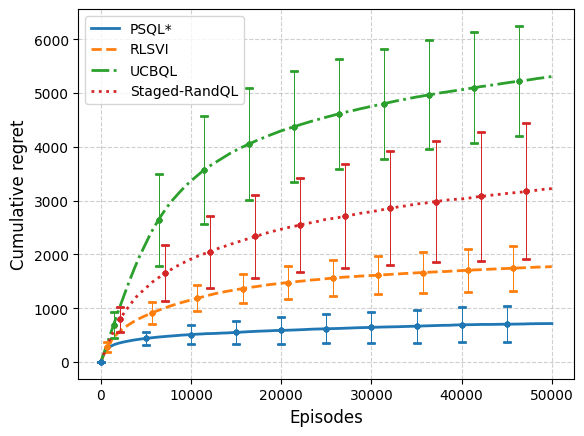}
        \caption{One-dimensional ``chain'' MDP}
        \label{fig:regret_fixed_1d_mdp}
    \end{subfigure}
    \hfill
    \begin{subfigure}[t]{0.48\textwidth}
        \centering
        \includegraphics[width=\textwidth]{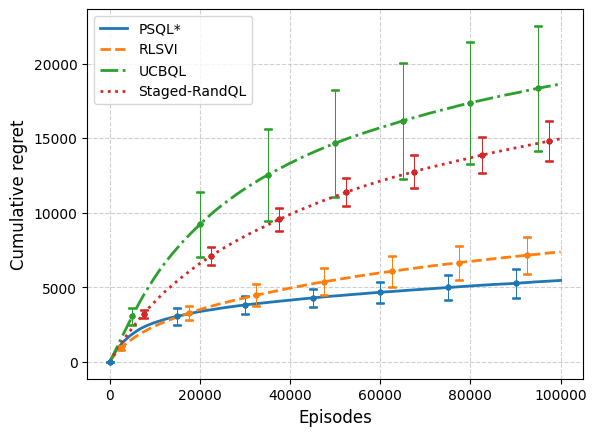}
        \caption{Two-dimensional ``grid'' MDP}
  \label{fig:regret_fixed_frozenlake}
    \end{subfigure}
    \caption{Regret comparison: x-axes denotes episode index, y-axes denotes cumulative regret}
    \label{fig:regret_fixed_scale}
\end{figure}
We observe that the randomized/posterior sampling algorithms PSQL*, RLSVI, and Staged-RandQL, have lower regret than their UCB counterpart: UCBQL. Also, PSQL* has significantly lower regret than the other two randomized algorithms.

A direct practical implication is that, PSQL* enjoys a shorter learning time (number of episodes after which the cumulative regret is below the specified threshold~\citep{osband2019deep}). Further, the variance across different runs is also the lowest of all, suggesting PSQL* enjoys higher robustness.

In Figure~\ref{fig:regret_fixed_scale PSQL comp}, we compare the performance of PSQL*,
the single sample vanilla version of posterior sampling, with the PSQL algorithm for which we provided regret bounds. As we explained in Section \ref{sec:challenges} (Challenges and Techniques), in order to achieve optimism in the target, PSQL computed the next state value by taking the max over multiple samples from the posterior of empirical mean maximizer action $\hat{a} =\arg \max_a \hat Q_h(s_h,a)+\sigma(N_h(s_h,a))$. This introduces some extra exploration, and as a result, we observe that 
PSQL* displays a more efficient exploration-exploitation tradeoff, PSQL still performs significantly better than the UCB approach.
These observations motivate an investigation into the theoretical analysis of the vanilla version, which we believe will require significantly new techniques.




\begin{figure}[h!]
    \centering
    \begin{subfigure}[t]{0.48\textwidth}
        \centering
        \includegraphics[width=\textwidth]{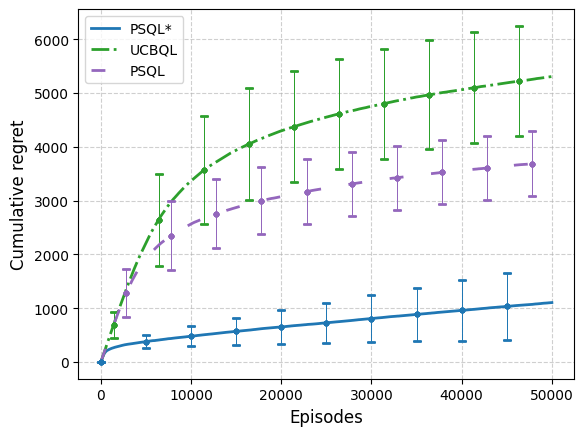}
        \caption{One-dimensional ``chain'' MDP}
    \end{subfigure}
    \hfill
    \begin{subfigure}[t]{0.48\textwidth}
        \centering
        \includegraphics[width=\textwidth]{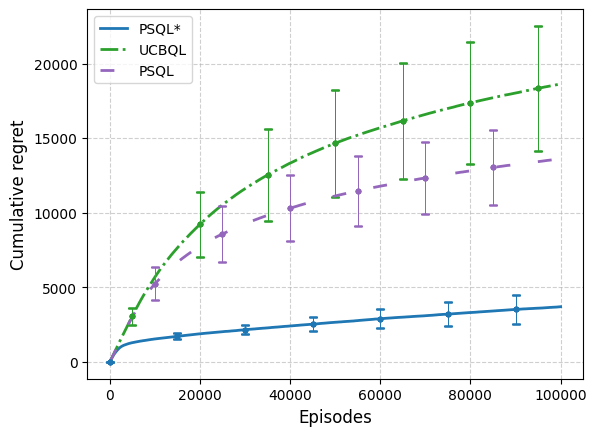}
        \caption{Two-dimensional ``grid'' MDP}
    \end{subfigure}
    \caption{Regret comparison: x-axes denotes episode index, y-axes denotes cumulative regret}
    \label{fig:regret_fixed_scale PSQL comp}
\end{figure}

\section{Bayesian inference based interpretation for Q-learning}\label{apx: bayesian learning rule derviation}

In this section, we describe the mathematical steps for calculating the updated posterior distribution from~\eqref{eq:Bayes2}. 

First, in Proposition~\ref{prop: bayes rule warmup}, we derive the well-known result that solving the optimization problem in~\eqref{eq:Bayes1} gives the posterior distribution as expected by Bayes rule. Let $\theta\in \Theta$ be the Bayesian parameter that we are inferring with $\Delta_\Theta$ be the space of distributions on $\Theta$. Let $p(\theta)$, $\ell(\theta,\cdot)$, and $q(\theta)$ be the the current prior distribution on $\theta$, the negative log likelihood function and the posterior distribution to be calculated. 

\begin{proposition}[Also in~\cite{khan2023bayesian,knoblauch2022optimization}]\label{prop: bayes rule warmup}
Let $KL(q(\theta)||p(\theta))=\int_\theta q(\theta)\log(\frac{q(\theta)}{p(\theta)})$. 
Given log likellihood function $\ell(\theta,z)$, and prior $p(\theta)$, the distribution $q$ that maximizes ELBO objective,
\begin{equation}
\label{eq:Bayes1 apx}
 \max_{q\in \Delta_\Theta} \Ex_{\theta\sim q}[\ell(\theta, z)] - KL(q(\theta)||p(\theta))
\end{equation}
is given by the Bayes rule
\begin{equation}
\label{eq:Bayes0 apx}
q^{\text{Bayes}}(\theta) \propto p(\theta) \cdot \exp(\ell(\theta, z)).
\end{equation}
\end{proposition}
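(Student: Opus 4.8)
\textbf{Proof proposal for Proposition~\ref{prop: bayes rule warmup}.}
The plan is to treat the ELBO maximization as a calculus-of-variations problem with one linear constraint, namely that $q$ integrate to $1$. First I would rewrite the objective in a single integral form:
\[
\mathcal{J}(q) = \int_\theta q(\theta)\,\ell(\theta,z)\,d\theta - \int_\theta q(\theta)\log\frac{q(\theta)}{p(\theta)}\,d\theta.
\]
Then I would introduce a Lagrange multiplier $\mu$ for the constraint $\int_\theta q(\theta)\,d\theta = 1$ and consider the functional $\mathcal{L}(q,\mu) = \mathcal{J}(q) - \mu\left(\int_\theta q(\theta)\,d\theta - 1\right)$. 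Taking the functional (pointwise) derivative with respect to $q(\theta)$ and setting it to zero gives
\[
\ell(\theta,z) - \log\frac{q(\theta)}{p(\theta)} - 1 - \mu = 0,
\]
which rearranges to $q(\theta) = p(\theta)\exp(\ell(\theta,z))\exp(-1-\mu)$. Since $\exp(-1-\mu)$ is a constant independent of $\theta$, it is exactly the normalizing constant forced by $\int q = 1$, so $q(\theta) \propto p(\theta)\exp(\ell(\theta,z)) = q^{\text{Bayes}}(\theta)$, as claimed.

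To make this rigorous rather than merely formal, I would instead give a direct argument that avoids variational derivatives: observe that for \emph{any} feasible $q$,
\[
\mathcal{J}(q) = \Ex_{\theta\sim q}[\ell(\theta,z)] - KL(q\|p) = -KL\!\left(q\,\big\|\,q^{\text{Bayes}}\right) + \log Z,
\]
where $Z = \int_\theta p(\theta)\exp(\ell(\theta,z))\,d\theta$ is the normalizing constant (the ``evidence''), which is why the objective is called the Evidence Lower Bound. This identity follows by expanding $KL(q\|q^{\text{Bayes}}) = \int q\log\frac{q}{p} - \int q\,\ell(\theta,z) + \log Z$ and rearranging. Since $KL(q\|q^{\text{Bayes}}) \ge 0$ with equality if and only if $q = q^{\text{Bayes}}$ (Gibbs' inequality), the maximum of $\mathcal{J}$ over $\Delta_\Theta$ is attained uniquely at $q = q^{\text{Bayes}}$, and its value is $\log Z$.

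I expect the only real subtlety — and hence the step to state carefully — is the finiteness of $Z$ and of the relevant integrals, so that the KL-decomposition identity is well defined (one needs $\Ex_{\theta\sim q}[\ell(\theta,z)]$ finite and $q$ absolutely continuous with respect to $p$; otherwise $\mathcal{J}(q) = -\infty$ and such $q$ cannot be the maximizer anyway). In the Gaussian-prior, Gaussian-likelihood setting of interest everything is finite by direct computation, so in the paper I would simply note this and present the clean KL-decomposition argument, which simultaneously identifies the maximizer, proves uniqueness, and exhibits the optimal value as $\log Z$.
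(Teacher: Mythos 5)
Your proposal is correct, and the KL-decomposition argument you settle on is essentially the paper's own proof: the paper also rewrites the objective as $-\int_\theta q(\theta)\log\bigl(\frac{q(\theta)}{p(\theta)\exp(\ell(\theta,z))}\bigr)$ and concludes it is maximized at $q^{\text{Bayes}}$, which is exactly your identity $\mathcal{J}(q) = -KL(q\|q^{\text{Bayes}}) + \log Z$ with the normalizing constant left implicit. Your version simply spells out the $\log Z$ term, uniqueness via Gibbs' inequality, and the integrability caveats (the preliminary Lagrange-multiplier sketch is an optional formal warm-up), so it matches the paper's route with slightly more detail.
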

\begin{proof}
Note that the ELBO objective function is equivalent to
\begin{eqnarray*}    
& & -\int_\theta\log(\exp(-\ell(\theta,z)q(\theta) - \int_\theta\log\del{\frac{q(\theta)}{p(\theta)}}q(\theta) \\
& = & -\int_\theta \log\del{\frac{q(\theta)}{p(\theta)\exp(\ell(\theta,z))}}q(\theta), 
\end{eqnarray*}
which is maximized when $q(\theta) = q^{\text{Bayes}}(\theta)$.
\end{proof}

Now, we study the calculation of the posterior distribution of $Q_h(s,a)$ after observing $n+1$ visits of $(h,s,a)$ in Lemma~\ref{lem: Q learning bayesian update rule}. 
\begin{lemma}\label{lem: Q learning bayesian update rule}
Consider the following maximization problem (regularized ELBO) over the space $\Delta_\Theta$ of distributions over a parameter $\theta$.
 \begin{equation}
\label{eq:Bayes2 apx}
\max_{q\in\Delta_\Theta} \Ex_{\theta\sim q}[\ell(\theta, z)] - KL(q(\theta)||p(\theta)) + \lambda_n {\cal H}(q(\theta)),
\end{equation}
Then, if  $p(\cdot)$ is given by the pdf of the Gaussian distribution ${\cal N}(\hat\mu_{n-1}, \frac{\sigma^2}{n-1})$, and $\ell(\theta,z)=\log(\phi_\theta(z))$ where $\phi_\theta(z)=\Pr(z|\theta)$ is the pdf of the Gaussian distribution $\mathcal{N}(\theta, \frac{\sigma^2}{H+1})$, and $\lambda_n=\frac{H}{n}$; then the optimal solution $q(\cdot)$ to \eqref{eq:Bayes2 apx} is given by the Gaussian distribution ${\cal N}(\hat\mu_{n}, \frac{\sigma^2}{n})$, where 
$$
\hat{\mu}_{n} = (1-\alpha_n)\hat \mu_{n-1} + \alpha_n z, \text{ with } \alpha_n=\frac{H+1}{H+n}.
$$
\end{lemma}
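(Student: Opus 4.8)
The plan is to solve the regularized ELBO problem \eqref{eq:Bayes2 apx} directly by first reducing it to the unregularized case and then invoking Proposition~\ref{prop: bayes rule warmup}. The key observation is that the entropy term $\lambda_n \mathcal{H}(q(\theta)) = -\lambda_n \int_\theta q(\theta)\log q(\theta)$ can be absorbed into the KL term: writing
\begin{equation*}
-KL(q\|p) + \lambda_n \mathcal{H}(q) = -\int_\theta q(\theta)\log q(\theta)\,d\theta + \int_\theta q(\theta)\log p(\theta)\,d\theta - \lambda_n\int_\theta q(\theta)\log q(\theta)\,d\theta,
\end{equation*}
the first and third terms combine to $-(1+\lambda_n)\int_\theta q(\theta)\log q(\theta)\,d\theta$. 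Dividing the whole objective through by $(1+\lambda_n)$ (which does not change the argmax), the problem becomes $\max_q \frac{1}{1+\lambda_n}\Ex_{\theta\sim q}[\ell(\theta,z)] + \Ex_{\theta\sim q}\big[\log \tfrac{p(\theta)^{1/(1+\lambda_n)}}{q(\theta)}\big]$, i.e.\ an ordinary ELBO problem with rescaled log-likelihood $\tilde\ell(\theta,z) := \frac{1}{1+\lambda_n}\ell(\theta,z)$ and tempered prior $\tilde p(\theta) \propto p(\theta)^{1/(1+\lambda_n)}$. By Proposition~\ref{prop: bayes rule warmup} the optimizer is $q(\theta)\propto \tilde p(\theta)\exp(\tilde\ell(\theta,z)) \propto p(\theta)^{1/(1+\lambda_n)}\,\phi_\theta(z)^{1/(1+\lambda_n)}$.

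With this reduction in hand, the rest is a Gaussian conjugacy computation. First I would substitute $\lambda_n = H/n$, so that $1+\lambda_n = (H+n)/n$ and hence $\frac{1}{1+\lambda_n} = \frac{n}{H+n}$. Then $p(\theta)^{n/(H+n)}$ is proportional to a Gaussian pdf with the same mean $\hat\mu_{n-1}$ but precision scaled by $\frac{n}{H+n}$, i.e.\ precision $\frac{n}{H+n}\cdot\frac{n-1}{\sigma^2}$; similarly $\phi_\theta(z)^{n/(H+n)}$ is proportional to a Gaussian in $\theta$ centered at $z$ with precision $\frac{n}{H+n}\cdot\frac{H+1}{\sigma^2}$. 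Multiplying two Gaussians, the posterior precision is the sum of the precisions and the posterior mean is the precision-weighted average of $\hat\mu_{n-1}$ and $z$. I would compute: posterior precision $= \frac{n}{H+n}\cdot\frac{(n-1)+(H+1)}{\sigma^2} = \frac{n}{H+n}\cdot\frac{n+H}{\sigma^2} = \frac{n}{\sigma^2}$, which gives posterior variance exactly $\sigma^2/n$ as claimed. For the mean, the weight on $z$ is $\frac{(H+1)/\sigma^2}{(n-1)/\sigma^2 + (H+1)/\sigma^2} = \frac{H+1}{H+n} = \alpha_n$ and the weight on $\hat\mu_{n-1}$ is $1-\alpha_n$, yielding $\hat\mu_n = (1-\alpha_n)\hat\mu_{n-1}+\alpha_n z$, exactly the Q-learning update rule.

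I would present this cleanly by recording a short auxiliary fact that for a Gaussian density $g$ and exponent $c>0$, $g^c$ is proportional to a Gaussian density with the same mean and precision multiplied by $c$; and the standard fact that a product of Gaussian densities (in the same variable) is a Gaussian with precision equal to the sum of precisions and mean equal to the precision-weighted mean. Both follow by completing the square in the exponent, so I would state them without belaboring the algebra.

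The main obstacle, such as it is, is the justification of the reduction step: one must check that the rescaling of the objective by $(1+\lambda_n)$ is legitimate (it is a positive constant, so argmax is preserved) and, more substantively, that the ``tempered prior'' $\tilde p$ is a genuine normalizable probability density so that Proposition~\ref{prop: bayes rule warmup} applies as stated. Since a power $p^c$ of a Gaussian density with $c>0$ is still (up to normalization) a Gaussian density, this is immediate here, but it is the one place where the argument uses more than formal manipulation. Everything else is routine Gaussian bookkeeping, and the only thing to be careful about is tracking the three precisions ($\frac{n-1}{\sigma^2}$ from the prior, $\frac{H+1}{\sigma^2}$ from the likelihood, and the common tempering factor $\frac{n}{H+n}$) so that they collapse to $\frac{n}{\sigma^2}$ and the mixing weight comes out to exactly $\alpha_n = \frac{H+1}{H+n}$.
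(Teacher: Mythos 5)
Your proposal is correct and is essentially the paper's argument: the paper likewise shows that the regularized ELBO is maximized by $q(\theta)\propto\bigl(p(\theta)\exp(\ell(\theta,z))\bigr)^{1/(1+\lambda_n)}$ and then completes the square with the two Gaussian densities to obtain variance $\sigma^2/n$ and mean $(1-\alpha_n)\hat\mu_{n-1}+\alpha_n z$ with $\alpha_n=\frac{H+1}{H+n}$. Your packaging of that step as a reduction to Proposition~\ref{prop: bayes rule warmup} with a tempered prior and rescaled likelihood, followed by precision bookkeeping instead of an explicit completion of the square, is only a cosmetic difference.
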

\begin{proof}
Denote the objective value at a given distribution $q$ as $rELBO(q)$. Then,
\begin{eqnarray*}    
rELBO(q) &=&\int_\theta\log(\exp(\ell(\theta,z)q(\theta) - \int_\theta\log\del{\frac{q(\theta)}{p(\theta)}}q(\theta) - \lambda_n \int_\theta\log(q(\theta))q(\theta)\\
&=& -\int_\theta \log\del{\frac{q(\theta)^{1+\lambda_n}}{p(\theta)\exp(\ell(\theta,z))}}q(\theta),
\end{eqnarray*}
which is maximized at distribution $q$ with  $q(\theta) \propto (p(\theta)\exp(\ell(\theta,z)))^{1/(\lambda_n+1)}$
Then,
\begingroup
\allowdisplaybreaks
\begin{eqnarray*}
    q(\theta) &\propto& \exp\sbr{\frac{1}{\lambda_n+1}\del{-\frac{(n-1)\del{\theta-\hat{\mu}_{n-1}}^2}{2\sigma^2} -\frac{(H+1)(z-\theta)^2}{2\sigma^2} }}\\
    &\propto& \exp\sbr{-\frac{n}{H+n} \del{\frac{\theta^2(H+n) -2\theta((n-1)\hat \mu_{n-1}+(H+1)z)}{2\sigma^2}}} \\
    &\propto& \exp\sbr{-n \del{\frac{\theta^2 -2\theta \hat \mu_n}{2\sigma^2}}} \\
    &\propto& \exp\sbr{-n \del{\frac{(\theta- \hat\mu_{n})^2}{2\sigma^2}}} 
\end{eqnarray*}
where $\hat{\mu}_n =\frac{n-1}{H+n}\hat{\mu}_{n-1} + \frac{H+1}{H+n} z = (1-\alpha_n) \hat{\mu}_{n-1} + \alpha_n z $.
\endgroup
\end{proof}

\section{Missing Proofs from Section~\ref{sec: regret analysis}}\label{apx: missing proofs from regret analysis}

\subsection{Optimism}\label{apx: optimism}

\begin{lemma}[Unabridged version]\label{lem: optimism main lemma all bounds}
The samples from the posterior distributions and the mean of the posterior distributions as defined in Algorithm~\ref{alg:main episodic},~\ref{alg: target computation} satisfy the following properties: for any episode $k\in [K]$ and index $h\in [H]$,
\begin{enumerate}[label=(\alph*)]
\item (Posterior distribution mean) 
For any given $s,a$, 
with probability at least \newfix{$1-\frac{2(k-1)\delta}{KH}-\frac{\delta}{KH}$},
\begin{eqnarray}\label{eq: mean concentration bound lower bound hoff}
     \Q_{k,h}(s,a)  \geq   Q^*_h(s,a)-\sqrt{\var{k,h}{s,a}}.
\end{eqnarray}
\item (Posterior distribution sample) For any given $s,a$, 
with probability at least $p_1$ ($p_1=\newfix{\Phi(-1)}$) conditioned on~\eqref{eq: mean concentration bound lower bound hoff} being true,
\begin{eqnarray}\label{eq: const pr statement hoff}
    \tilde{Q}_{k,h}(s,a) \geq Q^*_h(s,a).
\end{eqnarray}
\item (In Algorithm~\ref{alg: target computation}) 
With probability at least $1-\frac{2k\delta}{KH}$, the following holds for all episodes $k'\leq k$
\begin{equation}\label{eq: target optimism property hoff}    
\Vbar_{k,h}(s_{k,h}) \geq V^*_h(s_{k,h}).
\end{equation}
\end{enumerate}
\newfix{Here $\delta$ is a parameter of the algorithm used to define the number of samples $J$ used to compute the target $\Vbar$.}
\end{lemma}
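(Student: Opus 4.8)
\textbf{Proof plan for Lemma~\ref{lem: optimism main lemma all bounds}.}
The plan is to prove all three parts $(a)$, $(b)$, $(c)$ jointly by backward induction on $h$, from $h = H+1$ down to $h=1$, so that the optimism and estimation error bounds available for stage $h+1$ feed into the proof for stage $h$. The base case $h = H+1$ is immediate since all value functions and $Q$-values are $0$. For the inductive step at stage $h$, I would carry the argument in the order $(a) \Rightarrow (b) \Rightarrow (c)$, using the inductive hypothesis (in particular part $(c)$ at stage $h+1$, i.e.\ high-probability optimism of $\overline V_{k_i,h+1}$) as the main input into part $(a)$.

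For part $(a)$, I would expand the posterior mean as the weighted average
\[
\Q_{k,h}(s,a) = \alpha_n^0 H + \sum_{i=1}^n \alpha_n^i\left(r_{k_i,h} + \overline V_{k_i,h+1}(s_{k_i,h+1})\right),
\]
where $n = N_{k,h}(s,a)$ and $k_i$ indexes the $i$-th visit to $(s,a,h)$, and compare it term-by-term with the Bellman identity $Q^*_h(s,a) = r_h(s,a) + P_{h,s,a}V^*_{h+1}$ together with $\sum_i \alpha_n^i = 1 - \alpha_n^0$. This yields
\[
\Q_{k,h}(s,a) - Q^*_h(s,a) = \alpha_n^0\left(H - Q^*_h(s,a)\right) + \sum_{i=1}^n \alpha_n^i\left(r_{k_i,h} - r_h(s,a)\right) + \sum_{i=1}^n \alpha_n^i\left(\overline V_{k_i,h+1}(s_{k_i,h+1}) - P_{h,s,a}V^*_{h+1}\right).
\]
The first term is nonnegative since $Q^*_h \le H$. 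For the last term I would split $\overline V_{k_i,h+1}(s_{k_i,h+1}) - P_{h,s,a}V^*_{h+1} = \left(\overline V_{k_i,h+1}(s_{k_i,h+1}) - V^*_{h+1}(s_{k_i,h+1})\right) + \left(V^*_{h+1}(s_{k_i,h+1}) - P_{h,s,a}V^*_{h+1}\right)$; the inductive hypothesis part $(c)$ makes the first piece nonnegative with high probability for every $i$, while the second piece combined with the reward noise forms a martingale-difference sum that is stochastically dominated by a sub-Gaussian with variance $\sum_i (\alpha_n^i)^2 H^2 \lesssim \var{k,h}{s,a}$ (using $\sum_i (\alpha_n^i)^2 \le \tfrac{2H+1}{n+1}$ and the definition $\var{k,h}{s,a} = \tfrac{\sigma^2}{n+1}$ scaled to $\Theta(H^3/(n+1))$). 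A lower tail bound at scale one standard deviation, i.e.\ probability $\Phi(-1)$, then gives $\Q_{k,h}(s,a) \ge Q^*_h(s,a) - \sqrt{\var{k,h}{s,a}}$ after subtracting the failure probabilities $\delta/H$ (from the conditioning events giving optimism of the $\overline V$'s) and $\delta'$ (a tail-truncation event needed to make the sub-Gaussian domination exact); this is exactly the probability $\probvalue$. Part $(b)$ is then a one-line consequence: $\tilde Q_{k,h}(s,a) \sim \mathcal N(\Q_{k,h}(s,a), \var{k,h}{s,a})$, so conditioned on the event of part $(a)$, the Gaussian exceeds its mean by at least one standard deviation with probability $\Phi(-1)$... but more carefully one conditions and uses independence of the sampling noise so that the two probabilities multiply—actually one argues $\Pr(\tilde Q_{k,h} \ge Q^*_h) \ge \Pr(\tilde Q_{k,h} \ge \Q_{k,h} - \sqrt{\var{k,h}{s,a}} \text{ and event (a)}) \ge \Phi(-1)\cdot\big(\text{nothing extra}\big)$; the clean way is to note the Gaussian anti-concentration is with respect to fresh randomness, so the two probabilities do not multiply and $p_1 = \probvalue$ is retained. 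Part $(c)$ follows from part $(b)$ applied at state $s_{k,h}$ and the empirical-mean-maximizer action $\hat a$, together with the boosting of a $p_1$-probability lower bound to a $1 - (1-p_1)^J = 1 - \delta/(SAT) \ge 1 - \delta/(KH)$ high-probability lower bound by taking the max over $J$ i.i.d.\ samples, which is precisely why $J = \log(SAT/\delta)/\log(1/(1-p_1))$ with $p_1 = p_1$ was chosen; a union bound over all $k,h$ then gives the uniform statement, and one checks $\overline V_{k,h}(s_{k,h}) = \max_j \tilde V^j \ge \tilde Q_{k,h}(s_{k,h},\hat a)$-type samples dominate $V^*_h(s_{k,h}) = Q^*_h(s_{k,h}, \pi^*_h(s_{k,h})) \le Q^*_h(s_{k,h},\hat a) + \text{(gap)}$—here one must be careful that $\hat a$ need not be the optimal action, so optimism of $\overline V$ against $V^*$ requires optimism of $\tilde Q$ at $\hat a$ against $Q^*$ at $\hat a$, which holds by part $(b)$, and then $\max$ only helps.

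\textbf{Main obstacle.} The delicate point is the sub-Gaussian domination in part $(a)$: the terms $\overline V_{k_i,h+1}(s_{k_i,h+1}) - V^*_{h+1}(s_{k_i,h+1})$ are \emph{not} mean-zero and not independent of the future (the algorithm's later choices depend on them), so one cannot directly treat the whole sum as a martingale. The resolution is the two-sided split above—use the inductive optimism to discard the positive part as a favorable event (paying $\delta/H$ in probability via a union bound over the at most $H$ relevant stages, hence the $\delta/H$ in $\probvalue$), and only the genuinely martingale part $V^*_{h+1}(s_{k_i,h+1}) - P_{h,s,a}V^*_{h+1}$ plus reward noise needs a concentration inequality. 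Getting the variance bookkeeping to close—so that the sub-Gaussian parameter is at most (a constant times) the \emph{same} $\var{k,h}{s,a}$ that defines the sampling variance, not a larger quantity—is what pins down the constant in $\var{k,h}{s,a} = \bonus{n}$ and is the real content of the argument; everything else is bookkeeping with the learning-rate weights $\alpha_n^i$ and their standard identities $\sum_i \alpha_n^i = 1-\alpha_n^0$, $\sum_i (\alpha_n^i)^2 \le \tfrac{2H+1}{n+1}$, and $\alpha_n^0 = 0$ for $n \ge 1$.
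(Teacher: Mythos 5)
Your proposal is correct in outline and takes essentially the same route as the paper's proof: the same learning-rate--weighted decomposition of $\widehat{Q}_{k,h}(s,a)-Q^*_h(s,a)$, the inductive use of part (c) at stage $h+1$ to discard the $\overline{V}_{k_i,h+1}-V^*_{h+1}$ terms, Azuma/sub-Gaussian concentration of the remaining martingale at scale $\sqrt{\sigma(N_{k,h}(s,a))^2}$, Gaussian anti-concentration for part (b), and $J$-sample boosting for part (c); organizing the induction backward over $h$ for all $k$ simultaneously rather than over $(k,h)$ pairs is an immaterial difference. Two bookkeeping caveats: the $\delta/H$ term in $\Phi(-1)-\delta/H-\delta'$ arises from a union bound over the at most $K$ earlier visit episodes $k_i$, each carrying failure probability $\delta/(KH)$ from (c) at stage $h+1$, not from a union over the $H$ stages; and in part (c) your reduction to optimism of $\tilde{Q}(s,\widehat{a})$ against $Q^*(s,\widehat{a})$ followed by ``max only helps'' does not by itself reach $V^*_h(s)=\max_a Q^*_h(s,a)$ --- closing it requires relating $\widehat{Q}(s,\widehat{a})$ to $Q^*(s,a^*)$ via the maximality of $\widehat{a}$ and part (a) applied at the optimal action $a^*$ (and then handling the fact that the sampling variance is the one at $\widehat{a}$, not at $a^*$), a step on which the paper's own proof is equally terse.
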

\begin{proof}
We prove the lemma statement via induction over $k,h$.
\paragraph{Base case: $k=1, h\in [H]$.} 

Note that $n_{1,h}(s,a)=0$ for all $s,a,h\in\calS\times \calA \times [H]$. Therefore, $\Q_{1,h}(s,a) = H$ for all $s,a,h$ (\eqref{eq: mean concentration bound lower bound hoff} is trivially true). As $Q^*_h(s,a) \leq H$ for all $s,a,h$, therefore $\tilde{Q}_{1,H}(s,a) \geq Q^*_H(s,a) ,$ with probability at least $1/2$ ($> p_1$), i.e. \eqref{eq: const pr statement hoff} is true. By the choice of $J$ and Lemma~\ref{lem: max of samples property},~\eqref{eq: target optimism property hoff} also follows.

\paragraph{Induction hypothesis:} Given $k>1,1\le h\le H$, assume that the statements (a),(b), and (c) are true for $1 \leq k' \leq k-1, h'\in[H]$, and for $k'=k, h+1\leq h'\leq H$ . 


\paragraph{Induction step:} For $k,h$, we show~\eqref{eq: mean concentration bound lower bound hoff} holds with probability $1-\frac{2(k-1)\delta}{KH}-\frac{\delta}{KH}$,~\eqref{eq: const pr statement hoff} holds with probability at least $p_1=\Phi(-1)$ in the event~\eqref{eq: mean concentration bound lower bound hoff} holds , and finally~\eqref{eq: target optimism property hoff} holds with probability $1-\frac{2k\delta}{KH}$.

In case $n_{k,h}(s,a)=0$, then $\Q_{k,h}(s,a) = H$ and $b_{k,h}(s,a) >0$ and therefore by the same reasoning as in the base case, the induction statement holds. For the rest of the proof we consider $n_{k,h}(s,a)>0$.

Let $n = N_{k,h}(s,a)$, then Algorithm~\ref{alg:main episodic} implies,
\begin{eqnarray}\label{eq: Q decomposition apx}
    \Q_{k,h}(s,a) = \alpha_n^0H + \sum_{i=1}^n \alpha_n^i \del{ r_{k_i,h} + \overline{V}_{k_i,h+1}(s_{k_i,h+1})},
\end{eqnarray}

To prove the induction step for~\eqref{eq: mean concentration bound lower bound hoff}, consider the following using~\eqref{eq: Q decomposition} and Bellman optimality equation.
\begingroup
\allowdisplaybreaks
\begin{eqnarray}
\Q_{k,h}(s,a) - Q^*_h(s,a) &=& \sum_{i=1}^n \alpha_n^i \del{ r_{k_i,h}-r_h(s,a) + \overline{V}_{k_i,h+1}(s_{k_i,h+1}) - P_{h,s,a}V^*_h} \nonumber \\
&=&\sum_{i=1}^n \alpha_n^i \del{ r_{k_i,h}-r_h(s,a) + V^*_{h+1}(s_{k_i,h+1}) - P_{h,s,a}V^*_h} \nonumber \\
&&+\sum_{i=1}^n \alpha_n^i \del{\overline{V}_{k_i,h+1}(s_{k_i,h+1})-V^*_{h+1}(s_{k_i,h+1})}\nonumber\\
&&\text{(using ~\eqref{eq: target optimism property hoff} from induction hypothesis for $h+1\le H$, with probability $1-\frac{2(k-1)\delta}{KH}$)}\nonumber\\
& & \text{(note that this is trivially true when $h+1=H+1$ since $\overline{V}_{k,H+1}=V^*_{h+1}=0$)}\nonumber\\
&\geq  &  \sum_{i=1}^n \alpha_n^i \del{ r_{k_i,h}-r_h(s,a) + V^*_{h+1}(s_{k_i,h+1}) - P_{h,s,a}V^*_h }\nonumber\\
&&\text{(using Corollary~\ref{corol: G concentration} with probability $\newfix{1-\frac{\delta}{KH}}$)}\nonumber\\
&\geq& -4\sqrt{\frac{H^3\newfix{\log(KH/\delta)}}{n_{k,h}(s,a)+1}} \nonumber\\
&=&  \newfix{ -\sqrt{\var{k,h}{s,a}}},
\end{eqnarray}
\endgroup

\newfix{Therefore, with a union bound we have with probability $1-\frac{2(k-1)\delta}{KH}-\frac{\delta}{KH}$},
\begin{eqnarray}\label{eq: Q mean lower bound}
\Q_{k,h}(s,a)   &\geq& Q^*_h(s,a) -\sqrt{\var{k,h}{s,a}}
\end{eqnarray}

When~\eqref{eq: Q mean lower bound} holds, then from the definition of cumulative density of Gaussian distribution we get,

\begin{eqnarray*}
\text{Pr}\del{\tilde{Q}_{k,h}(s,a) \geq \Q_{k,h}(s,a)+\sqrt{\var{k,h}{s,a}} } \geq \Phi(-1).
\end{eqnarray*}


\newfix{Now, we show $\tilde{Q}_{k,h}(s,\hat{a}) \geq Q^*_h(s,a^*) =V^*_h(s) $ with probability at least 
$\Phi(-1)-\delta-\delta/H$, where 
$$\hat{a} = \argmax_{a\in \calA} \Q_{k,h}(s,a)+\std{}{s,a},\text{ and } a^*=\argmax_{a\in \calA} Q^*_h(s,a).$$ By the definition of $\hat{a}$ and properties of Gaussian distribution: $$\underbrace{\tilde{Q}_{k,h}(s,\hat{a})\geq \Q_{k,h}(s,\hat{a})+\std{}{s,\hat a}}_{\text{with probability at least $\phi(-1)$}}\geq \underbrace{\Q_{k,h}(s,a^*) + \std{}{s,a^*} \geq Q^*_{k,h}(s,a^*)}_{\text{with probability at least $1-2(k-1)\delta/KH-\delta/KH$}}.$$ 
}

Setting $\newfix{J \geq \frac{\log(KH/\delta)}{\log(1/(1-p_1))}}$ in Algorithm~\ref{alg:main episodic}, we use Lemma~\ref{lem: max of samples property} to show that with probability $1-\frac{\delta}{KH}$
\begin{eqnarray}\label{eq: part c}
\Vbar_{k,h}(s_{k,h})\geq V^*_h(s_{k,h}). 
\end{eqnarray}
Finally we use a union bound to combine~\eqref{eq: Q mean lower bound} and~\eqref{eq: part c} to prove~\eqref{eq: target optimism property hoff} holds with probability at least $1-\frac{2k\delta}{KH}$.

\end{proof}

\subsection{Action mismatch bound}
\begin{lemma}[Bounding action mismatch]\label{lem: gap between variance of a hat and a}
    \newfix{For a given $k,h,s_{k,h}$, 
and let $\hat{a} \coloneqq \argmax_a \Q_{k,h}(s_{k,h},a)+\sqrt{\var{k,h}{s_{k,h},a}}$,
\begin{eqnarray*}
\textstyle \var{k,h}{s_{k,h},\hat a} < 2\var{k,h}{s_{k,h},a_{k,h}}\log(1/\delta) 
,\quad \text{with probability at least $p_2$},
\end{eqnarray*}
where $p_2=\probvaluetwo$, and $\delta$ as defined in Theorem~\ref{thm: main regret Hoeffding}.}
\end{lemma}
\begin{proof}
Consider the following partition of $A$ actions:
\begin{eqnarray*}
    \overline{\mathcal{A}} &\coloneqq& \cbr{a: 2\var{}{s,a}\log(1/\delta) > \varT{}{s}},\\
    \underline{\mathcal{A}} &\coloneqq&
    \cbr{a: 2\var{}{s,a}\log(1/\delta) \leq \varT{}{s}}.
\end{eqnarray*}
 Clearly, $\hat a \in \overline{\calA}$. We prove that with  probability at least $\Phi(-2)-A\delta$, we have $\tilde a \in \overline{\mathcal{A}}$, so that $\varT{}{s} < 2\var{}{s,a}\log(1/\delta)$. 

By definition of $\hat a$, we have that for all $a$,
$$
\Q(s,a)+\newfix{\sqrt{\var{}{s,a}}}\leq \Q(s,\hat a ) \newfix{+ \sqrt{\varT{}{s}}}.
$$
Also, by construction of $\underline{\cal A}$, we have that for $\forall a \in \underline{\calA}$,
$$
\Q(s,a)+\newfix{\sqrt{\var{}{s,a}}}+\sqrt{2\var{}{s,a}\log(1/\delta)} \leq \Q(s,\hat a ) \newfix{+ \sqrt{\varT{}{s}}} +\sqrt{\varT{}{s}}
$$
From Gaussian tail bounds (see Corollary~\ref{corol: optimism gaussian concentration}) we have  for $\forall a\in \calA$,
$$
\text{Pr}\del{ \Tilde{Q}(s,a) \leq  \Q(s,a)+\newfix{\sqrt{\var{}{s,a}}}+\sqrt{2\var{}{s,a}\log(1/\delta)} } \geq 1- A\delta
$$
so that
$\forall a \in \underline{\calA}$,
$$
\text{Pr}\del{ \Tilde{Q}(s,a) \leq  \Q(s,\hat a)+\newfix{2}\sqrt{\varT{}{s}} } \geq 1- A\delta
$$

Also for the Gaussian random variable $\Tilde{Q}(s,\hat a)$, we have with probability at least $\Phi(-\newfix{2})$,
$$
\Tilde{Q}(s, \hat a) > \Q(s,\hat a ) +\newfix{2}\sqrt{\varT{}{s}},
$$
Using a union bound on the last two events,, we get
\begin{equation}\label{eq: inter 1}
\Tilde{Q}(s, \hat a) > \max_{a\in \underline{\calA}} \Tilde{Q}(s,a), \quad \text{with probability at least } \Phi(-\newfix{2})- A\delta.
\end{equation}
Since $\hat a$ is in $\overline{\calA}$, in the  above  scenario, the action $\tilde{a}$ that maximizes $\tilde Q(s,\cdot)$ must be in $\overline{\calA}$. Therefore, $\varT{}{s} < 2\var{}{s,a}\log(1/\delta)$ with probability $\Phi(-\newfix{2})-A\delta$.

\end{proof}

\newfix{
\begin{proposition}[High probability action mismatch]
\label{prop:neq high prob action mismatch}
    For a given $k,h,s_{k,h}$, let $\hat{a} \coloneqq \argmax_a \Q_{k,h}(s_{k,h},a)$. Then, 
\begin{eqnarray*}
\std{k,h}{s_{k,h},\hat a} < \std{k,h}{s_{k,h},a_{k,h}}\sqrt{2\log(1/\delta)} + \frac{1}{p_2}\Ex_{a_{k,h}}[\std{k,h}{s_{k.h}, a_{k,h}}\sqrt{2\log(1/\delta)}].
\end{eqnarray*}
where $p_2=\probvaluetwo$, and $\delta$ as defined in Theorem~\ref{thm: main regret Hoeffding}. Here $\Ex_{a_{k,h}}[\cdot]$ denotes expectation over $a_{k,h}$ given $s_{k,h}$ and the history before round $k,h$.
\end{proposition}
\begin{proof}
    This follows by using previous lemma along with  Lemma \ref{lem: constant probability to probability 1} with $\tilde X=\std{k,h}{s_{k,h},a_{k,h}}\sqrt{2\log(1/\delta)}$, $X^*= \std{k,h}{s_{k,h},\hat a}$, and $\underline{X}=0$.
\end{proof}
}
\subsection{Target estimation error bound}\label{apx: estimation error}

The following lemma characterizes the target estimation error.
\begin{restatable}[Target estimation error]{lemma}{GapBetweenTargetPosterior}\label{lem: gap between V bar and V tilde}
\newfix{In Algorithm~\ref{alg:main episodic}, with probability $1-2\delta$, the following holds for all $k\in[K]$ and $h\in[H]$,}
\begin{eqnarray*}
&&\Vbar_{k,h}(s_{k,h}) 
- \Tilde{Q}_{k,h}(s_{k,h}, a_{k,h}) \\
&&\leq \newfix{4\std{k,h}{s_{k,h},a_{k,h}}\log(JKH/\delta)+\frac{4}{p_2}\mathbb{E}_{k,h}[\std{k,h}{s_{k,h},a_{k,h}}]\log(JKH/\delta)}\\
&\eqqcolon& \newfix{ \frac{1}{p_2}F(k,h,\delta),}
\end{eqnarray*}
\newfix{where $p_2 = \probvaluetwo$ and $\delta$ as defined in Theorem~\ref{thm: main regret Hoeffding}, $J$ is defined in~\eqref{eq: value of J}, and $\mathbb{E}_{a}\sbr{\cdot}$ denotes expectation over the randomness in the action taken at $k,h$ conditioned on all history at the start of the $h_{\text{th}}$ step in the episode $k$ (i.e., only randomness is that in the sampling from the posterior distribution).}
\end{restatable}
\begin{proof}
We have $\arg \hat{a} = \max_a \Q_{k,h}(s_{k,h},a)$. For the remainder of the proof, we drop $k,h$ from the subscript and denote $a_{k,h}$ by $\tilde a$. From Algorithm~\ref{alg:main episodic}, $\Vbar(s)$ is the maximum of $J$ samples drawn from a Gaussian distribution with mean as $\Q(s,\hat a)$ and standard deviation as $\varT{}{s}$ 
Using Gaussian tail bounds ( Corollary~\ref{corol: optimism gaussian concentration}) along with a union bound over $J$ samples,  for any $\delta\in (0,1)$, with probability at least $1-\delta$,
\begin{eqnarray*}
    \Vbar(s) &\leq& \Q(s,\hat a)+ \sqrt{2\varT{}{s}\log(J/\delta)}\nonumber \\
    &\leq &\Tilde{Q}(s,\hat a)+ \sqrt{2\varT{}{s}\log(J/\delta)}+\sqrt{2\varT{}{s}\log(1/\delta)} \nonumber \\
\end{eqnarray*}
where $\Tilde{Q}(s,\hat a)$ is the sample corresponding to the $\hat a$ action drawn by the algorithm at the $k,h$.
Using a union bound to combine the statements, with probability at least $1-2\delta$ we have,
\begin{eqnarray}\label{eq: gap between V bar and V tilde eq 1}
    \Vbar(s) &\leq& \Tilde{Q}(s,\hat a)+ 2\sqrt{2\varT{}{s}\log(J/\delta)}\nonumber\\
    &\leq & \Tilde{Q}(s,\tilde a)+ 2\sqrt{2\varT{}{s}\log(J/\delta)}.
\end{eqnarray}

To complete the proof, we use \newfix{Proposition~\ref{prop:neq high prob action mismatch}, and an union bound over all $k,h$ to have the following with probability at least $1-2\delta$} 

\begin{eqnarray*}
&&\newfix{\Vbar_{k,h}(s_{k,h}) 
- \Tilde{Q}_{k,h}(s_{k,h}, a_{k,h})} \\
&&\newfix{\leq 4\std{k,h}{s_{k,h},a_{k,h}}\sqrt{\log(JKH/\delta)\log(KH/\delta)}+\frac{4}{p_2}\mathbb{E}_{k,h}[\std{k,h}{s_{k,h},a_{k,h}}]\sqrt{\log(JKH/\delta)\log(KH/\delta)}}.
\end{eqnarray*}

\end{proof}

\subsection{Optimism error bound}\label{apx: pessimism error}
\newfix{
\begin{corollary}[Optimism error bound]\label{corol: optimism error bound}
    In Algorithm~\ref{alg:main episodic}, with probability $1-3\delta$, the following holds for any $k\in[K]$ and $h\in[H]$,
    \begin{eqnarray*}
&&V^*_{h}(s_{k,h}) 
- \Tilde{Q}_{k,h}(s_{k,h}, a_{k,h}) \\
&&\leq \newfix{4\std{k,h}{s_{k,h},a_{k,h}}\log(JKH/\delta)+\frac{4}{p_2}\mathbb{E}_{k,h}[\std{k,h}{s_{k,h},a_{k,h}}]\log(JKH/\delta)}\\
&\eqqcolon& \newfix{ \frac{1}{p_2}F(k,h,\delta),}
\end{eqnarray*}
\newfix{where $p_2 = \probvaluetwo$ and $\delta$ as defined in Theorem~\ref{thm: main regret Hoeffding}, $J$ is defined in~\eqref{eq: value of J}, and $\mathbb{E}_{a}\sbr{\cdot}$ denotes expectation over the randomness in the action taken at $k,h$ conditioned on all history at the start of the $h_{\text{th}}$ step in the episode $k$ (i.e., only randomness is that in the sampling from the posterior distribution).}
\end{corollary}}

\begin{proof}
\newfix{
From Lemma~\ref{lem: gap between V bar and V tilde}, we have with probability at least $1-2\delta$,
\begin{eqnarray*}
&&\Vbar_{k,h}(s_{k,h}) 
- \Tilde{Q}_{k,h}(s_{k,h}, a_{k,h}) \\
&&\leq 4\std{k,h}{s_{k,h},a_{k,h}}\log(JKH/\delta)+\frac{4}{p_2}\mathbb{E}_{k,h}[\std{k,h}{s_{k,h},a_{k,h}}]\log(JKH/\delta)\\
&\eqqcolon& \frac{1}{p_2}F(k,h,\delta).
\end{eqnarray*}
Further, Lemma~\ref{lem: optimism main lemma all bounds} (c) gives with probability at least $1-\delta$,
$$
V^*_{h}(s_{k,h})\leq \Vbar_{k,h}(s_{k,h}).
$$
We complete the proof via a union bound.
}
\end{proof}

\newfix{
\begin{corollary}\label{corol: pesssimism error sum}
    With probability $1-\delta$, the following holds for all $h\in[H]$,
   \begin{eqnarray*} 
     \sum_{k=1}^K V^*_h(s_{k,h}) - \Tilde{Q}_{k,h}(s_{k,h},a_{k,h}) &\leq& \mathrm{O}\del{\sqrt{H^2SAT}\chi},
        \end{eqnarray*} 
        where $\chi =\logvaluetwo$. 
\end{corollary}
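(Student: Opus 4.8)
\textbf{Proof proposal for Corollary~\ref{corol: pesssimism error sum}.}

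The plan is to sum the per-episode optimism error bound of Lemma~\ref{lem: pessimism error per episode} over $k=1,\dots,K$ and then apply a pigeonhole/standard-visit-counting argument to control the resulting sum of inverse square roots of visit counts. First I would invoke Lemma~\ref{lem: pessimism error per episode}, which holds simultaneously for all $k\in[K],h\in[H]$ with probability at least $1-2\delta$ (I will absorb the constant into $\delta$, or equivalently restate with $\delta/2$; this is the only probabilistic step, so no further union bound is needed). This gives, for each fixed $h$,
\begin{eqnarray*}
\textstyle \sum_{k=1}^K \del{V^*_h(s_{k,h})-\Tilde{Q}_{k,h}(s_{k,h},a_{k,h})} \;\leq\; \frac{1}{p_1}\sum_{k=1}^K \sqrt{2\var{k,h}{s_{k,h},a_{k,h}}\log(KH/\delta)}.
\end{eqnarray*}
Recall from \eqref{eq: variance definition} and \eqref{eq: mean concentration bound lower bound hoff} that $\sigma(N_{k,h}(s,a))^2 = \frac{\sigma^2}{N_{k,h}(s,a)+1} = 64\frac{H^3}{N_{k,h}(s,a)+1}$, so the right-hand side is $\frac{1}{p_1}\sqrt{128 H^3 \log(KH/\delta)}\sum_{k=1}^K \frac{1}{\sqrt{N_{k,h}(s_{k,h},a_{k,h})+1}}$.

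The main technical step is bounding $\sum_{k=1}^K \frac{1}{\sqrt{N_{k,h}(s_{k,h},a_{k,h})+1}}$. This is a standard calculation: for each fixed $(s,a)$, if it is visited at step $h$ a total of $n_{s,a}$ times over the $K$ episodes, its contribution to the sum is $\sum_{j=0}^{n_{s,a}-1} \frac{1}{\sqrt{j+1}} \le 2\sqrt{n_{s,a}}$ (since $N_{k,h}$ takes the values $0,1,\dots,n_{s,a}-1$ on those visits). Summing over $(s,a)\in\calS\times\calA$ and using $\sum_{s,a} n_{s,a} = K$ together with Cauchy--Schwarz gives $\sum_{k=1}^K \frac{1}{\sqrt{N_{k,h}(s_{k,h},a_{k,h})+1}} \le 2\sum_{s,a}\sqrt{n_{s,a}} \le 2\sqrt{SA\sum_{s,a} n_{s,a}} = 2\sqrt{SAK}$. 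Plugging this in, the per-$h$ bound becomes $O\del{\frac{1}{p_1}\sqrt{H^3 SAK\log(KH/\delta)}}$; since $p_1=\probvalue$ is an absolute constant and $T=KH$ so $H^3SAK = H^2 SAT$, this is $O\del{\sqrt{H^2 SAT\log(KH/\delta)}}$, matching the claimed bound (with the $\log(KH/\delta)$ and constant/logarithmic factors absorbed into the stated form; one can replace $\log(KH/\delta)$ by $\log(8SATH/\delta)$ freely).

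I do not expect any real obstacle here — the statement is essentially a corollary of Lemma~\ref{lem: pessimism error per episode} combined with the routine counting lemma, and the only care needed is (i) noting the bound of Lemma~\ref{lem: pessimism error per episode} holds uniformly in $k,h$ on a single high-probability event so summing over $k$ is immediate, and (ii) correctly tracking the $H$-dependence through $\sigma(\cdot)^2 = \Theta(H^3/(n+1))$ so that $\sqrt{H^3 \cdot SAK} = \sqrt{H^2 SAT}$. If one wants to be careful about the probability, note Lemma~\ref{lem: pessimism error per episode} gives $1-2\delta$; rescaling $\delta$ (or stating the corollary for the event of Lemma~\ref{lem: pessimism error per episode} directly) yields the $1-\delta$ in the statement.
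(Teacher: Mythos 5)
Your proposal is correct and follows essentially the same route as the paper: sum the uniform per-episode bound of Lemma~\ref{lem: pessimism error per episode} over $k$, substitute the variance definition \eqref{eq: variance definition} $\sigma(n)^2=\Theta(H^3/(n+1))$, and control $\sum_k 1/\sqrt{N_{k,h}(s_{k,h},a_{k,h})+1}=O(\sqrt{SAK})$ by the standard per-$(s,a)$ counting argument, which is exactly the content of the paper's Lemma~\ref{lem: bonus sum} (you re-derive it inline rather than cite it). Your bookkeeping of the $1-2\delta$ versus $1-\delta$ probability and of the $H$-dependence ($\sqrt{H^3SAK}=\sqrt{H^2SAT}$) is also consistent with the paper.
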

}
\begin{proof}
\newfix{
From Corollary~\ref{corol: optimism error bound}, we have for all $k,h$ simultaneously, with probability $1-3\delta$,
\begin{eqnarray*}
&&\Vbar_{k,h}(s_{k,h}) 
- \Tilde{Q}_{k,h}(s_{k,h}, a_{k,h}) \\
&&\leq 4\std{k,h}{s_{k,h},a_{k,h}}\log(JKH/\delta)+\frac{4}{p_2}\mathbb{E}_{k,h}[\std{k,h}{s_{k,h},a_{k,h}}]\log(JKH/\delta).
\end{eqnarray*}
 By combining the definition of the variance in~\eqref{eq: variance definition} with Corollary~\ref{corol: bound on variance terms}, the result follows easily.}
\end{proof}

\subsection{Posterior mean estimation error bound}\label{apx: sampling error bound}
\SamplingErrorLemma*
\begin{proof}
First consider a fixed $k,h,s,a$. From~\eqref{eq: Q decomposition} and Bellman optimality equation, we have (assume $n=N_{k,h}(s,a)\geq 1$),
\begin{eqnarray}
\Q_{k,h}(s,a) - Q^*_h(s,a) &=& \sum_{i=1}^n \alpha_n^i \del{ r_{k_i,h}-r_h(s,a) + \overline{V}_{k_i,h+1}(s_{k_i,h+1}) - P_{h,s,a}V^*_h} \nonumber \\
&=&\sum_{i=1}^n \alpha_n^i \del{ r_{k_i,h}-r_h(s,a) + V^*_{h+1}(s_{k_i,h+1}) - P_{h,s,a}V^*_h} \nonumber \\
&&+\sum_{i=1}^n \alpha_n^i \del{\overline{V}_{k_i,h+1}(s_{k_i,h+1})-V^*_{h+1}(s_{k_i,h+1})}\nonumber\\
&& \text{(Using Corollary~\ref{corol: optimism gaussian concentration} with probability $1-\delta$)}\nonumber\\
&\leq & \sqrt{\var{k,h}{s,a}\log(1/\delta)}+\sum_{i=1}^n \alpha_n^i \del{\overline{V}_{k_i,h+1}(s_{k_i,h+1})-V^*_{h+1}(s_{k_i,h+1})}.\nonumber\\
\end{eqnarray}
When $N_{k,h}(s,a) = 0$, then trivially $\Q_{k,h}(s,a) - Q^*_h(s,a) \leq H = \alpha_n^0 H$, and for $N_{k,h}(s,a) > 0$, then $\alpha_n^0 =0$. Combining these two cases and with a union bound over all $s,a,h,k$, we complete the proof.
\end{proof}

\subsection{Cumulative estimation error bound}

\EstimationErrorBound*
\begin{proof}
For the purpose of writing this proof, define $\phi_{k,h} \coloneqq \Tilde{Q}_{k,h}(s_{k,h},a_{k,h})-V^*_h(s_{k,h})$, $\delta_{k,h} \coloneqq \tilde{Q}_{k,h}(s_{k,h},a_{k,h})-V^{\pi_k}_h(s_{k,h})$, and $\beta_{k,h} \coloneqq \Vbar_{k,h}(s_{k,h})-V^*_h(s_{k,h})$. Clearly $\delta_{k,h}\geq \phi_{k,h}$. Further, $\varshort{k,h} \leftarrow \var{k,h}{s_{k,h},a_{k,h}}$.

\allowdisplaybreaks
Now consider,
\begin{eqnarray}\label{eq: estimation error intermediate 1}
    \Tilde{Q}_{k,h}(s_{k,h},a_{k,h})-V^{\pi_k}_h(s_{k,h}) & \leq & \Tilde{V}_{k,h}(s_{k,h})-Q^{\pi_k}_h(s_{k,h},a_{k,h}) \nonumber \\
    &=& \Tilde{Q}_{k,h}(s_{k,h},a_{k,h})-Q^*_{k,h}(s_{k,h},a_{k,h})+Q^*_{k,h}(s_{k,h},a_{k,h})-Q^{\pi_k}_1(s_{k,1},a_{k,1})\nonumber \\
    &&\text{(from Lemma~\ref{lem: upper bound} with probability $1-\delta$, with $n\leftarrow N_{k,h}(s_{k,h},a_{k,h})$)}\nonumber\\
    &\leq& \alpha_{n}^0 H +\sqrt{2\varshort{k,h}\eta}+ \sum_{i=1}^{n} \alpha^i_n \beta_{k_i,h+1}+P_{s_{k,h},a_{k,h}}\cdot (V^*_{h+1}-V^{\pi_k}_{h+1})\nonumber\\
    &=& \alpha_{n}^0 H +\sqrt{2\varshort{k,h}\eta}+ \sum_{i=1}^{n} \alpha^i_n \beta_{k_i,h+1}-\phi_{k,h+1}+\delta_{k,h+1}\nonumber\\
    &&+ P_{s_{k,h},a_{k,h}}\cdot (V^*_{h+1}-V^{\pi_k}_{h+1}) -(V^*_{h+1}(s_{k,h+1})-V^{\pi_k}_{h+1}(s_{k,h+1}))\nonumber\\
&&\text{From Lemma~\ref{lem: gap between V bar and V tilde} with probability $1-2\delta$}\nonumber\\
    &\leq & \alpha_{n}^0 H +\sqrt{2\varshort{k,h}\eta}+\frac{1}{p_2}\sum_{i=1}^n \alpha_n^i F(k_i,h+1
    ,\delta)+m_{k,h}\nonumber\\
    &&+\sum_{i=1}^{n} \alpha^i_n \phi_{k_i,h+1}-\phi_{k,h+1}+\delta_{k,h+1},
    \end{eqnarray}
    where $$m_{k,h}\coloneqq P_{s_{k,h},a_{k,h}}\cdot (V^*_{h+1}-V^{\pi_k}_{h+1}) -(V^*_{h+1}(s_{k,h+1})-V^{\pi_k}_{h+1}(s_{k,h+1})).$$
Now, we club all episodes together to have,
\begin{eqnarray*}
    \sum_{k=1}^K\delta_{k,h}&\leq& \sum_{k=1}^K \alpha_n^0H+\sum_{k=1}^K\sqrt{4\varshort{k,h}\eta}+\frac{1}{p_2}\sum_{k=1}^K\sum_{i=1}^n \alpha_n^i F({k_i,h+1},\delta/KH))+\sum_{h=1}^H\sum_{k=1}^K m_{k,h}\nonumber\\
    &&+\sum_{k=1}^K \sum_{i=1}^{n} \alpha^i_n \phi_{k_i,h+1}-\sum_{k=1}^K\phi_{k,h+1}+\sum_{k=1}^K\delta_{k,h+1}.
\end{eqnarray*}
From Lemma~\ref{lem: alpha summation properties} (c), it follows ($a_{k,h}=\{\phi_{k,h},F(k,h,\delta/KH)\})$ $\sum_{k=1}^K\sum_{i=1}^n\alpha_n^i a_{k,h+1} \leq (1+1/H)\sum_{k=1}^Ka_{k,h+1}$. Further from the initialization,
\begin{eqnarray*}
\sum_{k=1}^K\alpha_{n_{k,h}}^0 H\leq  \sum_{k=1}^K \mathbb{I}\{n_{k,h}=0\}H \leq  SAH = HSA.
\end{eqnarray*}
Therefore, we have,
\begin{eqnarray*}
    \sum_{k=1}^K\delta_{k,h}&\leq& \sum_{k=1}^K SAH+\sum_{k=1}^K\del{\sqrt{4\varshort{k,h}\eta}+m_{k,h}}+(1+1/H)\frac{1}{p_2}\sum_{k=1}^K F({k,h+1},\delta/KH))+\nonumber\\
    &&+(1+1/H)\sum_{k=1}^K \phi_{k,h+1}-\sum_{k=1}^K\phi_{k,h+1}+\sum_{k=1}^K\delta_{k,h+1}.
\end{eqnarray*}

Unrolling the above $H$ times to have with a union bound over $k,h\in[K]\times [H]$ with probability $1-\delta$ ($\delta$ is scaled by $1/KH$ due to the union bound):
\begin{eqnarray}\label{eq: estimation error intermediate 2}
    \sum_{k=1}^K\Tilde{Q}_{k,1}(s_{k,1},a_{k,1})-V^{\pi_k}_1(s_{k,1}) &\leq& e SAH^2+e\sum_{h=1}^H\sum_{k=1}^K\del{\sqrt{4\varshort{k,h}\eta}+m_{k,h}}\nonumber \\
    &&+
    \frac{e}{p_2}\sum_{h=1}^H\sum_{k=1}^K F({k,h+1},\delta/KH)),
\end{eqnarray}
where we have used $\delta_{k,h}\geq \phi_{k,h}$ and $\delta_{k,H+1}=0$. Now, we analyze each term on the right hand side of~\eqref{eq: estimation error intermediate 2} one by one. 
From Corollary~\ref{corol: variance bound},
\begin{eqnarray*}
\sum_{h=1}^H\sum_{k=1}^K\sqrt{4\varshort{k,h}\eta} &\leq& \mathrm{O}\del{\sqrt{H^4SAT\eta}}.
\end{eqnarray*}
From Corollary~\ref{corol: bound on variance terms} with probability at least $1-\delta$,
\begin{eqnarray}\label{eq: intermediate for F term}
    \frac{1}{p_2}\sum_{h=1}^H\sum_{k=1}^K\sum_{i=1}^n  F(k_i,h+1,\delta/KH) &\leq& \frac{1+H}{Hp_2}\sum_{h=1}^H\sum_{k=1}^K  F(k,h+1,\delta/KH)\nonumber\\
    &\leq& O(1)\cdot\sum_{h=1}^H\sum_{k=1}^K\sqrt{\varshort{k,h}\chi\log(KH/\delta)} \\
    &\leq& O\del{\sqrt{H^4SAT}\chi},
\end{eqnarray}
where $\chi=\logvaluetwo$.

Finally, from Lemma~\ref{lem: MDS of estimation}, we have with probability $1-\delta$
\begin{eqnarray*}
    \sum_{h=1}^H\sum_{k=1}^K m_{k,h} &\leq& \mathrm{O}\del{\sqrt{H^4T\log(KH/\delta)}}.
\end{eqnarray*}

Combining the above, we complete the proof.
\end{proof}

\thmMainHoeffding*
\begin{proof}
First we combine Corollary~\ref{corol: pesssimism error sum} and Lemma~\ref{lem: estimation errror} to get with probability at least $1-\delta$ ($\chi = \logvaluetwo$),
\begin{eqnarray*}
    \sum_{k=1}^KV^*_1(s_{k,1}) -V^{\pi_k}_1(s_{k,1}) \leq O\del{\sqrt{H^4SAT}\chi}.
\end{eqnarray*}
Observing that the rewards are bound, there $|\sum_{k=1}^K\sum_{h=1}^HR_h(s_{k,h},a_{k,h}) - \sum_{k=1}^KV^{\pi_k}_1(s_{k,1})| \leq O(H\sqrt{T\log(1/\delta)})$ with probability at least $1-\delta$. This completes the proof of the theorem.
\end{proof}

\section{Concentration results}\label{apx: concentrationr results}

\begin{corollary}\label{corol: optimism gaussian concentration}
For a given $k,h,s,a  \in [K]\times [H]\times \calS \times \calA $ (let $n=N_{k,h}(s,a)$), with probability $1-\delta$, it holds,
    \begin{eqnarray}
        |\tilde{Q}_{k,h}(s,a) - \Q_{k,h}(s,a)| &\leq& \sqrt{2\var{k,h}{s_{k,h},a_{k,h}}\log(1/\delta)}. \label{eq: gaussian property}
    \end{eqnarray}
\end{corollary}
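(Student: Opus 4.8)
The plan is to treat this purely as a Gaussian tail estimate. By construction (the sampling step of Algorithm~\ref{alg:main episodic}, and Algorithm~\ref{alg: target computation}), $\tilde{Q}_{k,h}(s,a)$ is generated as a draw from $\mathcal{N}(\Q_{k,h}(s,a),\var{k,h}{s,a})$, where $\var{k,h}{s,a}=\sigma(N_{k,h}(s,a))^2$ is the posterior variance from~\eqref{eq: variance definition}. First I would condition on the history $\calF_{k,h}$ available at the start of step $h$ of episode $k$: relative to $\calF_{k,h}$, both the posterior mean $\Q_{k,h}(s,a)$ and the visit count $N_{k,h}(s,a)$ (hence also $\var{k,h}{s,a}$) are deterministic, so conditionally the centered variable $X:=\tilde{Q}_{k,h}(s,a)-\Q_{k,h}(s,a)$ is exactly $\mathcal{N}(0,v)$ with $v:=\var{k,h}{s,a}$.

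Next I would apply the one-sided sub-Gaussian tail bound, $\Pr(X\ge t\mid\calF_{k,h})\le\exp(-t^2/(2v))$ for $t\ge0$, together with its mirror image $\Pr(-X\ge t\mid\calF_{k,h})\le\exp(-t^2/(2v))$. Choosing $t=\sqrt{2v\log(1/\delta)}$ and taking a union bound over the two tails gives $\Pr(|X|\ge\sqrt{2v\log(1/\delta)}\mid\calF_{k,h})\le 2\delta$; folding the factor of two into the logarithm (that is, using $\log(2/\delta)$, which only perturbs lower-order terms) yields the stated $1-\delta$ guarantee. Because this holds conditionally for every realization of $\calF_{k,h}$, it also holds unconditionally, and since the corollary is always invoked together with a further union bound and a rescaling $\delta\mapsto\delta/(KH)$ (or $\delta/(JKH)$), the factor-of-two slack is harmless.

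I do not expect a genuine obstacle here — unlike the concentration of the bootstrapped target (Corollary~\ref{corol: G concentration}), which needs a martingale/self-normalized argument, the only randomness being controlled in this corollary is the algorithm's own posterior-sampling noise. The only points that deserve a line of care are (i) the conditioning step, which is precisely what makes $X$ an honest centered Gaussian with a fixed variance rather than a random scale mixture; and (ii) bookkeeping of whether the two-sided tail costs $\log(1/\delta)$ or $\log(2/\delta)$, which is immaterial for the final $\tilde O(H^2\sqrt{SAT})$ regret bound.
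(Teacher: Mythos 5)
Your proposal is correct and is essentially the paper's own argument: the paper simply invokes its Gaussian tail bound (Lemma~\ref{lem: gaussian tail bound}) applied to the posterior sample $\tilde{Q}_{k,h}(s,a)\sim\mathcal{N}(\Q_{k,h}(s,a),\var{k,h}{s,a})$, which is exactly your conditional sub-Gaussian tail estimate. Your extra care about conditioning on the history and about the $\log(1/\delta)$ versus $\log(2/\delta)$ two-sided slack is bookkeeping the paper leaves implicit, and, as you note, it is immaterial to the final bound.
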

\begin{proof}
The result directly follows form Lemma~\ref{lem: gaussian tail bound}.
\end{proof}
\begin{corollary}\label{corol: G concentration}
For some given $k,h,s,a\in [K]\times [H] \times \calS\times\calA$, the following holds with probability at least $1-\delta$ (with $n=N_{k,h}(s,a)$),
\begin{eqnarray*}
|\sum_{i=1}^n \alpha_n^i \del{r_{k_i,h}-R(s,a)+V^*_{h+1}(s_{k_i,h+1})-P_{s,a}V^*_{h+1}}| & \leq  & 4\sqrt{\frac{H^3 \log(1/\delta)}{n+1}}.
\end{eqnarray*}
\end{corollary}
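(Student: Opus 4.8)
The plan is to read the quantity $\sum_{i=1}^n \alpha_n^i\bigl(r_{k_i,h}-R(s,a)+V^*_{h+1}(s_{k_i,h+1})-P_{s,a}V^*_{h+1}\bigr)$ as a \emph{weighted sum of a martingale-difference sequence}, and to bound it with the Azuma--Hoeffding inequality, feeding in the standard bounds on the learning-rate weights $\alpha_n^i$ from Lemma~\ref{lem: alpha summation properties}. The whole argument is essentially the episodic Q-learning concentration step of \citet{jin2018q}, adapted to our notation.

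\textbf{Step 1: identify the martingale.} Fix $s,a,h$ and let $k_1 < k_2 < \cdots$ be the (random) episodes in which $(s,a)$ is visited at step $h$. For each $i$ set $X_i := \bigl(r_{k_i,h}-R(s,a)\bigr) + \bigl(V^*_{h+1}(s_{k_i,h+1}) - P_{s,a}V^*_{h+1}\bigr)$, and let $\mathcal{F}_i$ be the information available just before the environment reveals $(r_{k_i,h}, s_{k_i,h+1})$ (so $\mathcal{F}_i$ already knows that state $s$ was reached and action $a$ taken at step $h$ of episode $k_i$). Conditioned on $\mathcal{F}_i$, the reward has mean $R(s,a)$ and the next state is drawn from $P_{s,a}$, and $V^*_{h+1}$ is a \emph{fixed} (data-independent) function; hence $\mathbb{E}[X_i \mid \mathcal{F}_i] = 0$, i.e.\ $\{X_i\}$ is a martingale-difference sequence. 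Since rewards lie in $[0,1]$ and $V^*_{h+1}\in[0,H]$, it is convenient to split $X_i = Y_i + Z_i$ where $Y_i := r_{k_i,h}-R(s,a)$ has range at most $1$ and $Z_i := V^*_{h+1}(s_{k_i,h+1}) - P_{s,a}V^*_{h+1}$ has range at most $H$; each is a bounded martingale-difference sequence on its own.

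\textbf{Step 2: Azuma--Hoeffding with the weight bounds.} For a \emph{fixed} $n\geq 1$, Lemma~\ref{lem: alpha summation properties} gives $\sum_{i=1}^n \alpha_n^i = 1$ and $\sum_{i=1}^n (\alpha_n^i)^2 \leq \tfrac{2H}{n}$. So $\sum_{i=1}^n \alpha_n^i Z_i$ is sub-Gaussian with proxy variance at most $\tfrac{2H}{n}\cdot\tfrac{H^2}{4} = \tfrac{H^3}{2n}$, and $\sum_{i=1}^n \alpha_n^i Y_i$ with proxy at most $\tfrac{H}{2n}$. Azuma--Hoeffding applied to each part, with a union bound to recombine, yields: for each fixed $n$, with probability at least $1-\delta$,
\[
\Bigl|\sum_{i=1}^n \alpha_n^i\bigl(r_{k_i,h}-R(s,a)+V^*_{h+1}(s_{k_i,h+1})-P_{s,a}V^*_{h+1}\bigr)\Bigr| \;\leq\; \sqrt{\tfrac{H^3\log(4/\delta)}{n}} + \sqrt{\tfrac{H\log(4/\delta)}{n}} \;\leq\; 4\sqrt{\tfrac{H^3\log(1/\delta)}{n+1}},
\]
where the last inequality uses $H\geq 1$, $1/n \leq 2/(n+1)$, and absorbs $\log 4$ into the leading constant. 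The case $n=0$ is trivial: the sum is empty and the right-hand side is $4\sqrt{H^3\log(1/\delta)} \geq 0$.

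\textbf{Step 3: from fixed $n$ to $n = N_{k,h}(s,a)$, and the main obstacle.} The visit count $n = N_{k,h}(s,a)$ is itself a random variable, so the concentration of Step 2 must be applied \emph{before} conditioning on its value. I would therefore union-bound the bound of Step 2 over the at most $K$ possible values $n \in \{0,1,\dots,K\}$ (and, if the statement is to hold simultaneously over indices, also over $(s,a,h)$); the extra logarithmic factor this introduces is harmless, as it is swallowed by the global $\log(SAKH/\delta)$-type quantity at every place this corollary is invoked (e.g.\ Lemma~\ref{lem: upper bound}, where $\eta=\log(SAKH/\delta)$). Plugging $n = N_{k,h}(s,a)$ into the displayed inequality then gives the claim. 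The only genuinely delicate points are this coupling of the random visit count with the random visit times $k_i$ (handled by fixing $n$ first, as above, so that the weights $\alpha_n^i$ are non-random inside the concentration step), and tracking the Hoeffding constants through the reward/transition split carefully enough to land on the stated constant $4$.
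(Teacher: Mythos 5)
Your proposal is correct and follows essentially the same route as the paper: view the weighted sum as a bounded martingale-difference sequence, apply Azuma--Hoeffding using the learning-rate weight bound $\sum_i(\alpha_n^i)^2\le 2H/n$ from Lemma~\ref{lem: alpha summation properties}, union bound over the possible values of $n\le K$ (and over $s,a,h$), and finish with $1/n\le 2/(n+1)$. The only cosmetic difference is that you split the reward and next-state terms into two separate martingales of ranges $1$ and $H$, whereas the paper treats them as a single sequence bounded by $H+1$; the constants work out the same either way, at the same level of looseness as the paper itself.
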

\begin{proof}
Let $k_i$ denote the index of the episode when $(s,a)$ was visited for the $i_{\text{th}}$ time at step $h$. Set $x_i = \alpha_n^i(r_{k_i,h} -R_h(s,a) +V(s_{i+1})-P_{s_i,a_i}\cdot V)$ and consider filtration $\mathcal{F}_i$ as the $\sigma-$field generated by all random variables in the history set $\mathcal{H}_{k_i,h}$. $r_{k_i,h} -R_h(s,a)+V(s_{i+1})-P_{s_i,a_i}\cdot (V)\leq H+1 $. Using the definition of the learning rate (Lemma~\ref{lem: alpha summation properties} (b)), we have $\sum_{i}^n x^2_i\leq H(H+1)^2/n$. We apply Azuma-Hoeffding inequality (see Lemma~\ref{lem: azuma hoeffding}) combined with a union bound over all $(s,a,h)\in \calS\times\calA\times [H]$ and all possible values of $n \leq K$ to get the following with probability at least $1-\delta$,
\begin{eqnarray*}
|\sum_{i=1}^{n} x_i| & \leq  & 2\sqrt{\frac{2H^3 \log(1/\delta)}{n}}.  
\end{eqnarray*}
We complete the proof using the observation $\frac{1}{n+1} \geq \frac{1}{2n},\,n\geq 1$.
\end{proof}

\begin{lemma}\label{lem: MDS of estimation}
    with probability at least $1-\delta$, the following holds
    \begin{eqnarray*}
\sum_{k=1}^K\sum_{h=1}^{H} P_{s_{k,h},a_{k,h}}\cdot (V^*_{h+1}-V^{\pi_k}_{h+1}) -(V^*_{h+1}(s_{k,h+1})-V^{\pi_k}_{h+1}(s_{k,h+1}))  \leq H^2\sqrt{2T\log(1/\delta)}
    \end{eqnarray*}
\end{lemma}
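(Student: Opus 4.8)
The plan is to recognize the displayed sum as a sum over a martingale difference sequence and conclude via the Azuma--Hoeffding inequality (Lemma~\ref{lem: azuma hoeffding}). Order the pairs $(k,h)$ lexicographically (episode first, then step within the episode), and for each such pair define
\[
X_{k,h} := P_{s_{k,h},a_{k,h}}\cdot (V^*_{h+1}-V^{\pi_k}_{h+1}) -\del{V^*_{h+1}(s_{k,h+1})-V^{\pi_k}_{h+1}(s_{k,h+1})},
\]
and let $\calF_{k,h}$ denote the $\sigma$-field generated by all randomness up to and including the action $a_{k,h}$, i.e.\ before the next state $s_{k,h+1}$ is revealed.

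First I would verify that $\{X_{k,h}\}$ is a martingale difference sequence adapted to $\{\calF_{k,h}\}$. The policy $\pi_k$ is fixed at the start of episode $k$, so $V^{\pi_k}_{h+1}(\cdot)$ is $\calF_{k,h}$-measurable, and $V^*_{h+1}(\cdot)$ is deterministic. Since, conditionally on $\calF_{k,h}$, the next state satisfies $s_{k,h+1}\sim P_h(s_{k,h},a_{k,h},\cdot)$, we get
\[
\Ex\sbr{V^*_{h+1}(s_{k,h+1})-V^{\pi_k}_{h+1}(s_{k,h+1})\mid \calF_{k,h}} = P_{s_{k,h},a_{k,h}}\cdot (V^*_{h+1}-V^{\pi_k}_{h+1}),
\]
so $\Ex[X_{k,h}\mid\calF_{k,h}]=0$. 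Next I would bound the increments: since $V^*_{h+1}(s)=\max_\pi V^\pi_{h+1}(s)\ge V^{\pi_k}_{h+1}(s)$ for every $s$, and both value functions take values in $[0,H]$, the quantity $V^*_{h+1}-V^{\pi_k}_{h+1}$ lies pointwise in $[0,H]$. Hence each of the two terms defining $X_{k,h}$ lies in $[0,H]$, giving $|X_{k,h}|\le H$.

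Finally, applying Azuma--Hoeffding (Lemma~\ref{lem: azuma hoeffding}) to the $T=KH$ increments, each bounded in absolute value by $H$, yields with probability at least $1-\delta$
\[
\sum_{k=1}^K\sum_{h=1}^{H} X_{k,h} \le \sqrt{2\,(KH)\,H^2\,\log(1/\delta)} = H\sqrt{2T\log(1/\delta)} \le H^2\sqrt{2T\log(1/\delta)},
\]
which gives the claimed bound (with room to spare). I do not anticipate any genuine obstacle: the only points that need care are (i) the measurability of $V^{\pi_k}_{h+1}$ with respect to $\calF_{k,h}$, which is what makes the conditional expectation vanish, and (ii) correctly flattening the double sum into a single filtration-adapted sequence so that Azuma--Hoeffding applies; both are routine.
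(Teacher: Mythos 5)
Your proposal is correct and follows essentially the same argument as the paper: identify the summands as a bounded martingale difference sequence (with increments at most $H$ in absolute value) and apply Azuma--Hoeffding (Lemma~\ref{lem: azuma hoeffding}). Your careful measurability check and the observation that the resulting bound $H\sqrt{2T\log(1/\delta)}$ is even a factor of $H$ better than the stated $H^2\sqrt{2T\log(1/\delta)}$ are both consistent with the paper's (terser) proof.
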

\begin{proof}
For $x_{k,h}=P_{s_{k,h},a_{k,h}}\cdot (V^*_{h+1}-V^{\pi_k}_{h+1}) -(V^*_{h+1}(s_{k,h+1})-V^{\pi_k}_{h+1}(s_{k,h+1}))$ and filtration set $\mathcal{H}_{k,h}$ where $k$ is the episode index, $\{x_{k,h}, \mathcal{H}_{k,h}\}$ forms a martingale difference sequence with $|x_{k,h}| \leq H$. We complete the proof using Lemma~\ref{lem: azuma hoeffding} and a union bound.   
\end{proof}
\newfix{
\begin{lemma}\label{lem: bonus sum gap}
Let $\mathcal{D}_{k,h}$ be the distribution of actions at time step $k,h$ conditioned on the history at the start of step $h$ of the $k_{\text{th}}$ episode, then with probability at least $1-\delta$, for some $h$
\begin{eqnarray*}
    \sum_{k=1}^K \mathbb{E}_{a\sim \mathcal{D}_{k,h}}\sbr{\frac{1}{\sqrt{n_{k,h}(s_{k,h},a)+1}}}-\frac{1}{\sqrt{n_{k,h}(s_{k,h},a_{k,h})+1}} \leq O\del{\sqrt{SA\log(K)}\log(1/\delta)}
\end{eqnarray*}
\end{lemma}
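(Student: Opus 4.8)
The plan is to read the summand as the deviation of an action-averaged exploration bonus from its single-sample realization, and to control it by a \emph{nested} martingale-concentration argument carried out separately for each state--action pair. Fix $h\in[H]$ and write $b_k(s,a):=1/\sqrt{n_{k,h}(s,a)+1}\in(0,1]$. Given the history $\mathcal{H}_{k,h}$ at the start of step $h$ of episode $k$, the state $s_{k,h}$, all the counts, and the action distribution $\mathcal{D}_{k,h}$ are measurable, and the only fresh randomness is $a_{k,h}\sim\mathcal{D}_{k,h}$. For a fixed pair $(s,a)$ let $\mathcal{K}_s:=\{k\le K: s_{k,h}=s\}$ and, for $k\in\mathcal{K}_s$, set $W_k^{s,a}:=(\mathcal{D}_{k,h}(a)-\mathbb{I}\{a_{k,h}=a\})\,b_k(s,a)$. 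Then $\mathbb{E}[W_k^{s,a}\mid\mathcal{H}_{k,h}]=0$, $|W_k^{s,a}|\le 1$, and the quantity to be bounded is exactly $\sum_{s}\sum_{a}\sum_{k\in\mathcal{K}_s}W_k^{s,a}$; so it suffices to bound each inner sum by $O(\log K\cdot\mathrm{polylog})$ and then union bound over the $SA$ pairs.

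First I would do the pigeonhole step on the realized sums. The conditional variance of $W_k^{s,a}$ is at most $\mathcal{D}_{k,h}(a)\,b_k(s,a)^2\le \mathcal{D}_{k,h}(a)/(n_{k,h}(s,a)+1)$, so the cumulative conditional variance along $\mathcal{K}_s$ is $V_{s,a}:=\sum_{k\in\mathcal{K}_s}\mathcal{D}_{k,h}(a)/(n_{k,h}(s,a)+1)$. Its realized counterpart $\bar V_{s,a}:=\sum_{k\in\mathcal{K}_s}\mathbb{I}\{a_{k,h}=a\}/(n_{k,h}(s,a)+1)$ telescopes over the visits of $a$ at $(s,h)$ into a harmonic sum $\sum_{m=1}^{N_{s,a}}1/m\le 1+\ln K$, where $N_{s,a}$ is the total number of such visits. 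Next I would run an inner, self-bounding Freedman argument: $V_{s,a}-\bar V_{s,a}$ is itself a martingale with increments bounded by $1$ whose cumulative conditional variance is at most $V_{s,a}$, so Freedman's inequality (with a dyadic peeling over the unknown value of $V_{s,a}$) gives, with probability $1-\delta$, $V_{s,a}\le 2\bar V_{s,a}+O(\log(\log K/\delta))=O(\log K+\log(\log K/\delta))$.

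Then I would apply an outer Freedman inequality to $\sum_{k\in\mathcal{K}_s}W_k^{s,a}$, again peeling over dyadic guesses for its variance and plugging in the high-probability bound on $V_{s,a}$ from the previous step, to get $\sum_{k\in\mathcal{K}_s}W_k^{s,a}=O(\sqrt{V_{s,a}\log(\log K/\delta)}+\log(\log K/\delta))$. Finally, summing over all $SA$ pairs $(s,a)$ with $\delta$ rescaled by $1/(SA)$, and collecting the $\log K$ from the harmonic-sum bound together with the two peeling factors, yields the claimed $O(SA\,\log K\,\log^2(\log K/\delta))$ bound. (Optionally one also union-bounds over $h\in[H]$, which only changes constants inside the logarithms.)

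The hard part will be controlling the cumulative conditional variance $V_{s,a}$ of the per-pair martingale: it is not a deterministic quantity but a posterior-weighted analogue of a harmonic sum, so a plain Freedman/Bernstein bound does not close on its own, and one must first dominate $V_{s,a}$ by its realized counterpart $\bar V_{s,a}$ through a second, self-bounding martingale argument. The accompanying delicacy is making Freedman's inequality applicable with an a priori unknown variance proxy (the dyadic peeling) while keeping the logarithmic factors under control, which is what produces the $\log^2(\log K/\delta)$ term rather than a polynomial-in-$K$ blow-up.
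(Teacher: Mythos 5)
Your proposal is correct, but it proves the lemma by a genuinely different route than the paper. The paper keeps the sum aggregated: it treats $\mathbb{E}[Z_k]-Z_k$ (with $Z_k=1/\sqrt{n_{k,h}(s_{k,h},a_{k,h})+1}$) as a single martingale difference sequence and uses only Azuma--Hoeffding, combined with an iterative bootstrap (an induction over $i\approx\log_2\log K$ levels): at each level it bounds $\sum_k(\mathbb{E}[Z_k]-Z_k)^2$ by $\sum_k Z_k^2+2\sum_k(\mathbb{E}[Z_k]-Z_k)$, controls $\sum_k Z_k^2\le SA\log K$ by the realized harmonic sums, feeds the previous level's bound into the cross term, and re-applies Azuma--Hoeffding with the sharpened variance proxy, shrinking the bound from $\sqrt{K}$ down to $SA\log K$ times polylog factors after $\log\log K$ iterations. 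You instead decompose per $(s,a)$ pair, identify the conditional variance $V_{s,a}$ of each per-pair martingale, dominate it by its realized harmonic counterpart $\bar V_{s,a}\le 1+\ln K$ via a self-bounding Freedman step, and then apply an outer Freedman inequality with dyadic peeling before union-bounding over the $SA$ pairs. Your route is arguably cleaner and even gives a slightly sharper bound of order $SA\bigl(\sqrt{\log K\,\log(\log K/\delta)}+\log(\log K/\delta)\bigr)$, well within the stated $O(SA\log K\log^2(\log K/\delta))$; the trade-offs are that you need Freedman's inequality (the paper's toolkit only states Azuma--Hoeffding, though Freedman is standard) and your union bound over pairs puts an extra $\log(SA)$ inside the logarithms, which the stated bound formally lacks but which is harmless given the $\log(JSAT/\delta)$ factors carried downstream. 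The paper's aggregated argument avoids both of these at the cost of the more intricate multi-level induction.
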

}
\begin{proof}
\newfix{
For brevity of exposition, let $\mathbb{E}[Z_k]\leftarrow \mathbb{E}_{a\sim \mathcal{D}_{k,h}}\sbr{\frac{1}{\sqrt{n_{k,h}(s_{k,h},a)+1}}}$, $Z_k \leftarrow \frac{1}{\sqrt{n_{k,h}(s_{k,h},a_{k,h})+1}}$, and $\mathcal{F}_k \leftarrow \mathcal{H}_{k,h}$. Consider,
\begin{eqnarray*}
\sum_{k=1}^K (\mathbb{E}[Z_k]-Z_k)^2 &\leq & (\mathbb{E}[Z_k])^2+Z_k^2\\
&&\text{(By Jensen's inequality for $f(x)=x^2$)}\\
     &\leq& \sum_{k=1}^K Z_k^2+\sum_{k=1}^K \mathbb{E}[Z_k^2] \\
    &&\text{(by linearity of expectation)}\\
    &=&\sum_{k=1}^K Z_k^2+\mathbb{E}\sbr{\sum_{k=1}^K Z_k^2} \\
    &\leq& \sum_{s,a}\sum_{j=1}^{K} \frac{1}{j+1} + \mathbb{E}\sbr{\sum_{s,a}\sum_{j=1}^{K} \frac{1}{j+1}}\\
    &\leq& 2SA\log(K).
\end{eqnarray*}
To bound $\sum_{k=1}^K(\mathbb{E}[Z_k]-Z_k)$, we apply Bernstein inequality for martingale,  Lemma~\ref{lem: azuma bernstein}, with $K=1$, $d=2SA\log(K)$ to get the required result.
}
\end{proof}

\begin{lemma}\label{lem: bonus sum}
\begin{eqnarray*}
    \sum_{k=1}^K \frac{1}{\sqrt{N_{k,h}(s_{k,h},a_{k,h})+1}} \leq \mathrm{O}\del{\sqrt{SAK}}
\end{eqnarray*}
\end{lemma}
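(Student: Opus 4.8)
The plan is a textbook pigeonhole (occupancy) argument combined with Cauchy--Schwarz; no serious obstacle is anticipated. Fix the step $h$. I would first reorganize the sum over episodes $k$ by grouping them according to the state--action pair $(s_{k,h},a_{k,h})$ visited at step $h$. For each $(s,a)\in\calS\times\calA$, let $n_{s,a}$ be the number of episodes $k\in[K]$ with $(s_{k,h},a_{k,h})=(s,a)$, so that $\sum_{(s,a)}n_{s,a}=K$ since every episode visits exactly one pair at step $h$. Because $N_{k,h}(s,a)$ records the number of visits to $(s,a)$ at step $h$ strictly before episode $k$, as $k$ runs over the episodes in this group in chronological order, $N_{k,h}(s_{k,h},a_{k,h})$ takes precisely the values $0,1,\ldots,n_{s,a}-1$. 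Hence
\[
\sum_{k=1}^K \frac{1}{\sqrt{N_{k,h}(s_{k,h},a_{k,h})+1}} \;=\; \sum_{(s,a)\in\calS\times\calA}\ \sum_{j=0}^{n_{s,a}-1}\frac{1}{\sqrt{j+1}}.
\]

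Next I would bound the inner sum by comparison with an integral, $\sum_{j=0}^{n-1}(j+1)^{-1/2}=\sum_{i=1}^{n}i^{-1/2}\le 1+\int_1^n x^{-1/2}\,dx\le 2\sqrt{n}$, which gives an upper bound of $2\sum_{(s,a)}\sqrt{n_{s,a}}$. Finally, applying Cauchy--Schwarz over the $SA$ pairs,
\[
\sum_{(s,a)}\sqrt{n_{s,a}} \;\le\; \sqrt{SA}\;\sqrt{\textstyle\sum_{(s,a)}n_{s,a}} \;=\;\sqrt{SA\,K},
\]
so the whole sum is at most $2\sqrt{SAK}=\mathrm{O}(\sqrt{SAK})$, as claimed.

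The only points requiring care are bookkeeping: verifying that the per-pair visit counts at a fixed step $h$ enumerate $0,1,\ldots,n_{s,a}-1$ with no gaps (which follows from $N_{k,h}$ being a running counter incremented by exactly one on each visit), and noting that the ``$+1$'' in the denominator is precisely what keeps the first-visit term ($N_{k,h}=0$) harmless. This lemma is then used, e.g., in Corollary~\ref{corol: pesssimism error sum} to convert the per-episode optimism error bound into a cumulative $\sqrt{SAT}$-type bound after substituting the definition of the variance from~\eqref{eq: variance definition}.
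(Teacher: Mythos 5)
Your proof is correct and follows essentially the same route as the paper's: group the episodes by the $(s,a)$ pair visited at step $h$, bound the per-pair sum $\sum_{j}(j+1)^{-1/2}$ by $2\sqrt{n_{s,a}}$ via integral comparison, and finish with Cauchy--Schwarz over the $SA$ pairs to get $O(\sqrt{SAK})$. If anything, your bookkeeping is slightly cleaner, since the paper's first step $1/\sqrt{N+1}\le \sqrt{2}/\sqrt{N}$ implicitly requires $N\ge 1$, whereas your use of the $+1$ handles the first-visit terms directly.
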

\begin{proof}
    \begin{eqnarray*}
        \sum_{k=1}^K \frac{1}{\sqrt{N_{k,h}(s_{k,h},a_{k,h})+1}}
        &\leq&\sum_{k=1}^K \frac{\sqrt{2}}{\sqrt{N_{k,h}(s_{k,h},a_{k,h})}}\\
        &\leq&\mathrm{O}(1)\sum_{s,a}\sum_{k=1}^{N_{K,h(s,a)}}\sqrt{\frac{1}{k}} \\
&\leq&\mathrm{O}\del{\sqrt{SAK}}.
    \end{eqnarray*}
\end{proof}

\begin{corollary}\label{corol: variance bound}
The following holds,
\begin{eqnarray*}
        \sum_{h=1}^H\sum_{k=1}^K \sqrt{4\var{k,h}{s_{k,h},a_{k,h}}\eta}\leq \mathrm{O}\del{\sqrt{H^4SAT\eta}}.
        \end{eqnarray*}
\end{corollary}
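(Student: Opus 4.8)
\textbf{Proof proposal for Corollary~\ref{corol: variance bound}.} The plan is to reduce the statement to Lemma~\ref{lem: bonus sum} by unpacking the definition of the posterior variance. Recall from~\eqref{eq: variance definition} that $\var{k,h}{s_{k,h},a_{k,h}} = \bonus{N_{k,h}(s_{k,h},a_{k,h})}$, so that
$$\sqrt{4\var{k,h}{s_{k,h},a_{k,h}}\eta} = 16\sqrt{\frac{H^3\eta}{N_{k,h}(s_{k,h},a_{k,h})+1}}.$$
First I would fix an arbitrary step index $h$, factor out the constant $16\sqrt{H^3\eta}$ from the inner sum over episodes, and apply Lemma~\ref{lem: bonus sum} to obtain $\sum_{k=1}^K\sqrt{4\var{k,h}{s_{k,h},a_{k,h}}\eta} \leq \mathrm{O}\del{\sqrt{H^3\eta}\cdot\sqrt{SAK}} = \mathrm{O}\del{\sqrt{H^3SAK\eta}}$.

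Then summing this per-$h$ bound over $h=1,\dots,H$ contributes a plain factor of $H$ (the per-$h$ estimates are uniform, so there is no Cauchy--Schwarz improvement to exploit here), giving $\sum_{h=1}^H\sum_{k=1}^K \sqrt{4\var{k,h}{s_{k,h},a_{k,h}}\eta} \leq \mathrm{O}\del{H\sqrt{H^3SAK\eta}} = \mathrm{O}\del{\sqrt{H^5SAK\eta}}$. Finally, substituting $T=KH$ rewrites $H^5SAK = H^4SAT$, which yields the claimed bound $\mathrm{O}\del{\sqrt{H^4SAT\eta}}$.

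I do not expect any real obstacle in this corollary itself; it is a bookkeeping step sitting on top of Lemma~\ref{lem: bonus sum}. The only place that genuinely requires an argument is that lemma, which hides a pigeonhole/visitation-count estimate: for each fixed $h$, the sequence $N_{k,h}(s_{k,h},a_{k,h})$ increments the visit count of exactly one state--action pair per episode, so $\sum_{k=1}^K (N_{k,h}(s_{k,h},a_{k,h})+1)^{-1/2}$ reorganizes into $\sum_{s,a}\sum_{j\le N_{K,h}(s,a)} \mathrm{O}(j^{-1/2}) = \mathrm{O}\del{\sum_{s,a}\sqrt{N_{K,h}(s,a)}}$, and concavity of $\sqrt{\cdot}$ together with $\sum_{s,a} N_{K,h}(s,a) = K$ gives the $\mathrm{O}(\sqrt{SAK})$ bound. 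Once Lemma~\ref{lem: bonus sum} is granted (as the excerpt permits), the two displayed manipulations above complete the proof immediately.
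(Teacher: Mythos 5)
Your proposal is correct and follows exactly the paper's argument: the paper's proof is the same two-line reduction, unpacking the variance definition in \eqref{eq: variance definition} (so each term equals $16\sqrt{H^3\eta/(N_{k,h}(s_{k,h},a_{k,h})+1)}$), applying Lemma~\ref{lem: bonus sum} per step $h$, and summing over $h$ with $T=KH$. Your side remark about the pigeonhole argument inside Lemma~\ref{lem: bonus sum} also matches how that lemma is proved in the paper.
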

\begin{proof}
    From Lemma~\ref{lem: bonus sum} and~\eqref{eq: variance definition}, we get the result.
\end{proof}
\begin{corollary}\label{corol: bound on variance terms}
Let $\mathcal{D}_{k,h}$ be the distribution of actions at time step $k,h$ conditioned on the history at the start of step $h$ of the $k_{\text{th}}$ episode, then with probability at least $1-\delta$
\begin{eqnarray*}
&&\newfix{\sum_{k=1}^K\frac{4}{p_2}\mathbb{E}_{a\sim \mathcal{D}_{k,h}}\sbr{\sqrt{\var{k,h}{s_{k,h},a}}\log(JKH/\delta)}+\sqrt{2\var{k,h}{s_{k,h},a_{k,h}}}\log(JKH/\delta)}\\
\leq&& O\del{\sqrt{H^2SAT}\chi},
\end{eqnarray*}
$\chi = \logvaluetwo$.
\end{corollary}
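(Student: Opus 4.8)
The plan is to reduce both claimed inequalities to elementary bounds on sums of $1/\sqrt{N_{k,h}(s_{k,h},\cdot)+1}$, exploiting the explicit form of the posterior variance. Recall from \eqref{eq: variance definition} that $\var{k,h}{s,a}=\bonus{N_{k,h}(s,a)}$, i.e.\ $\var{k,h}{s,a}=64H^3/(N_{k,h}(s,a)+1)$, so every square-root term appearing in the statement is a fixed, $(k,h)$-independent, logarithmic multiple of $1/\sqrt{N_{k,h}(s_{k,h},\cdot)+1}$. Concretely, pulling constants and logs out of the square root,
\[
\mathbb{E}_{a\sim\mathcal{D}_{k,h}}\sbr{\sqrt{16\var{k,h}{s_{k,h},a}\log(JKH/\delta)\log(KH/\delta)}}=32\sqrt{H^3\log(JKH/\delta)\log(KH/\delta)}\;\mathbb{E}_{a\sim\mathcal{D}_{k,h}}\sbr{\tfrac{1}{\sqrt{N_{k,h}(s_{k,h},a)+1}}},
\]
and likewise $\sqrt{2\var{k,h}{s_{k,h},a_{k,h}}\log(KH/\delta)}\le 8\sqrt{2}\,H^{3/2}\sqrt{\log(JKH/\delta)\log(KH/\delta)}/\sqrt{N_{k,h}(s_{k,h},a_{k,h})+1}$, using $\log(JKH/\delta)\ge 1$.

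For the first inequality I would sum the display above over $k$ and apply Lemma~\ref{lem: bonus sum gap} (the only place a $1-\delta$ event is invoked), which lets us replace $\sum_k \mathbb{E}_{a\sim\mathcal{D}_{k,h}}[1/\sqrt{N_{k,h}(s_{k,h},a)+1}]$ by $\sum_k 1/\sqrt{N_{k,h}(s_{k,h},a_{k,h})+1}$ up to an additive $O(SA\log(K)\log^2(\log(K)/\delta))$ correction. Multiplying the $32\sqrt{H^3\log(JKH/\delta)\log(KH/\delta)}$ factor back in, the main part becomes exactly $4\sum_k\sqrt{\var{k,h}{s_{k,h},a_{k,h}}\log(JKH/\delta)\log(KH/\delta)}$; the realized $\sqrt{2\var{\cdot}\log(KH/\delta)}$ term contributes a further $\sqrt{2}$ copy of the same quantity; and the Lemma~\ref{lem: bonus sum gap} correction is only a polylog-in-$K$ multiple of $SA$, hence lower order. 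Collecting these gives the first inequality with an absolute constant. A clean way to dispose of the correction term is to note that the claimed regret bound is vacuous whenever $K=\tilde{O}(SA)$, since the regret never exceeds $KH$.

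For the second inequality I would use $\sum_k\sqrt{\var{k,h}{s_{k,h},a_{k,h}}}=8H^{3/2}\sum_k 1/\sqrt{N_{k,h}(s_{k,h},a_{k,h})+1}\le O(H^{3/2}\sqrt{SAK})$ by Lemma~\ref{lem: bonus sum} (which is deterministic), then collapse the logarithmic factors via $\sqrt{\log(JKH/\delta)\log(KH/\delta)}\le\log(JSAT/\delta)=\chi$ (valid since $S,A\ge1$ and $KH=T$), so that
\[
\sum_{k=1}^K\sqrt{\var{k,h}{s_{k,h},a_{k,h}}\log(JKH/\delta)\log(KH/\delta)}\le\chi\cdot O\del{H^{3/2}\sqrt{SAK}}=O\del{\sqrt{H^2SAT}\,\chi},
\]
using $T=KH$. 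Combining this with the absolute constant from the first inequality finishes the proof. The substantive content is entirely inherited from Lemmas~\ref{lem: bonus sum gap} and~\ref{lem: bonus sum}; the only real obstacle is bookkeeping --- checking that the Lemma~\ref{lem: bonus sum gap} correction is dominated by the main term and that all logarithmic factors telescope into $\chi$ --- and no new probabilistic argument beyond a single union bound is needed.
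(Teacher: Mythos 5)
Your proposal is correct and follows essentially the same route as the paper: Lemma~\ref{lem: bonus sum gap} to replace the expectation over $\mathcal{D}_{k,h}$ by the realized term up to an additive $SA\cdot\mathrm{polylog}$ correction, then Lemma~\ref{lem: bonus sum} (which the paper invokes through Corollary~\ref{corol: variance bound}) together with $\var{k,h}{s,a}=O(H^3/(N_{k,h}(s,a)+1))$ to get the $O(\sqrt{H^2SAT}\chi)$ bound. Your explicit remark that the correction term is lower order (and that the regret bound is vacuous when $K=\tilde O(SA)$) just makes precise what the paper states as "dominates the remaining terms."
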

\begin{proof}
Using Lemma~\ref{lem: bonus sum gap}, we have with probability $1-\delta$,
    \begin{eqnarray*}
         \sum_{k=1}^K \mathbb{E}_{a\sim \mathcal{D}_{k,h}}\sbr{\frac{1}{\sqrt{n_{k,h}(s_{k,h},a)+1}}}-\frac{1}{\sqrt{n_{k,h}(s_{k,h},a_{k,h})+1}} \leq O\del{\sqrt{SA\log(K)}\log(1/\delta)}
    \end{eqnarray*}
From Lemma~\ref{lem: bonus sum} and~\eqref{eq: variance definition}, 
\begin{eqnarray*}
\sum_{k=1}^K \frac{4}{p_2}\sqrt{\var{k,h}{s_{k,h},a}}\log(JKH/\delta) 
\leq O\del{\sqrt{H^2SAT}\chi},
\end{eqnarray*}
which dominates the remaining terms.
\end{proof}

\section{Technical Preliminaries}\label{apx: technical preliminaries}
\begin{lemma}[High confidence from constant probability]\label{lem: constant probability to probability 1}
For some fixed scalars $X^*, \underline{X}$, and  $p,\delta\in (0,1)$, suppose that $\tilde{X}\sim \mathcal{D}$ satisfies $\tilde{X}\geq X^*$ with probability at least $p$, $\tilde{X} \ge \underline{X}$ with probability at least $1-\delta$, \newfix{and $\Ex[\tilde X]\ge \underline{X}$}. 
Then, with probability at least $1-2\delta$, 
\begin{eqnarray}\label{eq: constant probability to probability 1 statement}
\textstyle \tilde{X} \geq X^* - \frac{1}{p}\del{\mathbb{E}_{\mathcal{D}}[\tilde{X}]-\underline{X}}.
\end{eqnarray}    
\end{lemma}
\begin{proof}
For the purpose of this proof, a symmetric sample $\XSampleSym$ also drawn from distribution $\mathcal{D}$ but independent of $\XSample$. Let $\mathcal{O}^\text{alt}$ denotes the event when $\XSampleSym \geq X^*$ (occurring with probability $p$). 

Consider (using notation $\mathbb{E}[\cdot] \leftarrow \mathbb{E}_{\mathcal{D}}[\cdot]$)
\begin{eqnarray}\label{eq: pre final pr statement}
 X^* -\XSample 
\leq& \mathbb{E}\sbr{\XSampleSym \mid \mathcal{O}^\text{alt}}-\XSample 
\leq& x`x`\mathbb{E}\sbr{\XSampleSym -\underline{X}\mid \mathcal{O}^\text{alt}},
\end{eqnarray}
where the last inequality holds with probability $1-\delta$ by definition of $\underline{X}$.
The law of total expectation suggests,
\begin{eqnarray*}
\mathbb{E}\sbr{\XSampleSym - \underline{X}} &=& \text{Pr}(\mathcal{O}^\text{alt})\mathbb{E}\sbr{\XSampleSym - \underline{X}\mid \mathcal{O}^\text{alt}} + \text{Pr}(\overline{\mathcal{O}}^\text{alt})\mathbb{E}\sbr{\XSampleSym - \underline{X}\mid \overline{\mathcal{O}}^\text{alt}},
\end{eqnarray*}
where $\overline{\mathcal{O}}^\text{alt}$ is the compliment of the event $\mathcal{O}^\text{alt}$. 
\newfix{Now, $\Ex\sbr{\XSampleSym \mid \overline{\mathcal{O}}^\text{alt}} = \Ex\sbr{\XSampleSym \mid \XSampleSym < X^*} \le \Ex[\XSampleSym] = \Ex[\XSample]\le \underline X$, where the last inequality is by the assumption made in the lemma. Therefore, the second term in the above is non-negative, and we have} 
\begin{eqnarray}\label{eq: final pr statement}
\mathbb{E}\sbr{\XSampleSym - \underline{X}} &\leq &  \text{Pr}(\mathcal{O}^\text{alt})\mathbb{E}\sbr{\XSampleSym - \underline{X}\mid \mathcal{O}^\text{alt}}.
\end{eqnarray}
Using $\frac{1}{\text{Pr}(\mathcal{O}^\text{alt})} \leq \frac{1}{p}$, $\mathbb{E}[\XSampleSym]=\mathbb{E}[\XSample]$, and a union bound to combine~\eqref{eq: pre final pr statement} and~\eqref{eq: final pr statement}, we complete the proof.
\end{proof}

\begin{lemma}\label{lem: max of samples property}
Let $q^{(1)},q^{(2)},\ldots q^{(M)}$ be $M$ \newfix{i.i.d.} samples 
such that for any $i$, $q^{(i)} \geq V^*$ with probability $p$. Then with probability at least $1-\delta$,
$$
\max_{i\in M} q^{(i)} \geq V^*,
$$
when $M$ is at least $\frac{\log(1/\delta)}{\log(1/(1-p))}$.
\end{lemma}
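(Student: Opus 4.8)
The plan is to reduce the claim to a standard tail bound. First I would observe that the complementary (failure) event $\{\max_{i\in[M]} q^{(i)} < V^*\}$ coincides exactly with the intersection $\bigcap_{i=1}^M \{q^{(i)} < V^*\}$, since the maximum of the samples falls below $V^*$ if and only if every individual sample does. Next, treating the $q^{(i)}$ as independent draws from the common distribution (as in the algorithm's use of this lemma, where the $\tilde V^j$ in \texttt{ConstructTarget} are i.i.d.), and using the hypothesis $\Pr(q^{(i)} < V^*) \le 1-p$ for each $i$, independence gives $\Pr\!\left(\max_{i\in[M]} q^{(i)} < V^*\right) \le (1-p)^M$.

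The remaining step is to choose $M$ so that $(1-p)^M \le \delta$ and read off the threshold. Taking logarithms — and noting that $\log(1-p) < 0$ for $p \in (0,1)$, so the inequality direction flips — this is equivalent to $M\,\log\!\big(1/(1-p)\big) \ge \log(1/\delta)$, i.e. $M \ge \frac{\log(1/\delta)}{\log(1/(1-p))}$. Hence for every $M$ at or above this value, $\Pr\!\left(\max_{i\in[M]} q^{(i)} \ge V^*\right) \ge 1-\delta$, which is the statement.

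There is no genuine obstacle here; it is a one-line argument. The only point that deserves a word of care is the independence of the samples, which is what allows multiplying the per-sample failure probabilities $(1-p)$ rather than settling for the much weaker union bound $1-Mp$; the latter would not produce the logarithmic-in-$1/\delta$ sample count that is essential when this lemma is invoked (e.g. in the proof of Lemma~\ref{lem: optimism main lemma all bounds}) to boost the constant-probability optimism $p_1$ of a single posterior sample into high-probability optimism of the $J$-sample maximum used as the bootstrapped target.
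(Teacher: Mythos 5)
Your proof is correct and follows essentially the same route as the paper: bound the failure event $\{\max_i q^{(i)} < V^*\}$ by the product $(1-p)^M$ of per-sample failure probabilities and solve $(1-p)^M \le \delta$ for $M$. You are in fact more careful than the paper's own write-up, which needs (and uses) independence of the samples to multiply the failure probabilities but loosely labels this step a ``union bound''; your explicit remark that independence, not a union bound, is what yields the logarithmic dependence on $1/\delta$ is the right clarification.
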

\begin{proof}
\newfix{For a given index $i$, the probability that $q^{(i)} < V^*$ is at most $1-p$.} Therefore, \newfix{by independence of samples,} the probability of $\max_{i\in J } q^{(i)} < V^*$ is at most $(1-p)^M$. Therefore, the lemma statement follows by setting $M=\frac{\log(1/\delta)}{\log(1/(1-p))}$.
\end{proof}

\begin{lemma}[Corollary 2.1 in~\cite{wainwright2019high}]\label{lem: azuma hoeffding}
		Let $(\{A_i,\mathcal{F}_i\}_{i=1}^{\infty})$ be a martingale difference sequence, and suppose $|A_i|\leq d_i$ almost surely for all $i\geq 1$. Then for all $\eta\geq 0$,
		\begin{equation}
		\mathbb{P}\left[|\sum_{i=1}^nA_i| \geq \eta\right] \leq 2 \exp\del{\frac{-2\eta^2}{\sum_{i=1}^n d_i^2}}.
		\end{equation}
		In other words, with probability at most $\delta$, we have,
		\begin{equation}
		|\sum_{i=1}^nA_i| \geq \sqrt{\frac{ln\del{2/\delta}\sum_{i=1}^nd_i^2}{2}}
		\end{equation}
	\end{lemma}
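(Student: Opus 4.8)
The plan is to prove the exponential (sub-Gaussian) tail bound by the classical Chernoff--martingale argument and then read off the high-probability restatement by inverting the tail. First I would fix a free parameter $\lambda>0$ and apply the exponential Markov inequality to the upper tail, writing $\mathbb{P}[\sum_{i=1}^n A_i \ge \eta] \le e^{-\lambda\eta}\,\mathbb{E}[\exp(\lambda\sum_{i=1}^n A_i)]$. The entire difficulty then collapses into bounding the moment generating function $\mathbb{E}[\exp(\lambda S_n)]$, where $S_n=\sum_{i=1}^n A_i$; everything downstream is optimization and bookkeeping.

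To control this MGF I would peel off the increments one at a time using the tower property together with the martingale-difference structure. Conditioning on $\mathcal{F}_{n-1}$ gives $\mathbb{E}[e^{\lambda S_n}]=\mathbb{E}\big[e^{\lambda S_{n-1}}\,\mathbb{E}[e^{\lambda A_n}\mid \mathcal{F}_{n-1}]\big]$, so it suffices to bound the conditional MGF of a single increment. Here I invoke Hoeffding's lemma: since $A_n$ is conditionally mean zero (the defining property of a martingale difference sequence) and almost surely supported in an interval of width at most $2d_n$ under the assumption $|A_n|\le d_n$, its conditional MGF obeys $\mathbb{E}[e^{\lambda A_n}\mid\mathcal{F}_{n-1}]\le \exp(\lambda^2 d_n^2/2)$. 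Iterating the peeling over $i=n,n-1,\dots,1$ collapses the expectation to the deterministic product $\exp(\lambda^2 \sum_{i=1}^n d_i^2/2)$.

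Substituting back yields $\mathbb{P}[\sum_i A_i \ge \eta]\le \exp(-\lambda\eta+\tfrac{1}{2}\lambda^2\sum_i d_i^2)$, and I would minimize the right-hand exponent over $\lambda$; the minimizer is $\lambda^\star=\eta/\sum_i d_i^2$, producing the one-sided sub-Gaussian tail $\exp(-\eta^2/(2\sum_i d_i^2))$. Applying the identical argument to the sequence $\{-A_i\}$, which is again a martingale difference sequence with the same almost-sure bounds, controls the lower tail, and a union bound over the two tails supplies the prefactor $2$ in the stated inequality. Finally, to obtain the ``in other words'' form I would set the two-sided bound equal to $\delta$ and solve for the threshold: $2\exp(-c\,\eta^2/\sum_i d_i^2)=\delta$ gives $\eta=\sqrt{\sum_i d_i^2\,\ln(2/\delta)/(2c)}$, matching the displayed expression for the appropriate constant $c$.

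The only genuinely nontrivial ingredient, and the step I would treat most carefully, is Hoeffding's lemma and the exact constant it forces into the exponent. That constant is governed by the width of the interval in which each conditional increment lives, and the numerical factor in the stated exponent $2\eta^2/\sum_i d_i^2$ should be reconciled with the convention of Wainwright's Corollary~2.1: the literal absolute-bound reading $|A_i|\le d_i$ (support width $2d_i$) yields the constant $\tfrac12$ derived above, whereas the tighter stated constant corresponds to the range-width reading in which $d_i$ is the length $b_i-a_i$ of the conditional support, so that Hoeffding's lemma is applied in its sharp $(b_i-a_i)^2/8$ form. I would therefore fix this convention explicitly at the outset; once it is pinned down, the remaining manipulations are routine.
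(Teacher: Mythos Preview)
Your proposal is the standard Chernoff--martingale derivation of Azuma--Hoeffding and is correct; the paper itself does not prove this lemma but simply cites it as Corollary~2.1 of Wainwright, so there is nothing to compare against. You are also right to flag the constant: under the literal hypothesis $|A_i|\le d_i$ the support width is $2d_i$ and Hoeffding's lemma yields $\exp(-\eta^2/(2\sum_i d_i^2))$ rather than the stated $\exp(-2\eta^2/\sum_i d_i^2)$, so the sharper constant in the statement reflects the range-width convention (or is a typo inherited from the source); this discrepancy does not affect any downstream bound in the paper since all uses absorb the constant into~$O(\cdot)$.
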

    \newfix{
\begin{lemma}[Lemma A8 in~\cite{cesa2006prediction}]\label{lem: azuma bernstein}
    Let $(\{A_i,\mathcal{F}_i\}_{i=1}^{\infty})$ be a martingale difference sequence, and suppose $|A_i|\leq K$ almost surely for all $i\geq 1$. Let $S_i = \sum_{j=1}^i A_i$ be the associated martingale. Denote the sum of the conditional variances by $$v_n^2=\sum_{t=1}^n \mathbb{E}[A_i^2\mid \calF_{1-1}].$$
Then for all constants $t,d>0$,
$$
\mathbb{P}\sbr{\max_{i}S_i \geq t\,\&\,,v_n^2\leq d}\leq \exp\del{-\frac{t^2}{2(d+Kt/3)}}.
$$
\end{lemma}
}

\begin{lemma}[Lemma 4.1 in~\cite{jin2018q}]\label{lem: alpha summation properties}
The following holds:
\begin{enumerate}[label=(\alph*)]
    \item $
    \frac{1}{\sqrt{n}}\leq \sum_{i=1}^n \frac{\alpha^i_n}{\sqrt{i}} \leq \frac{2}{\sqrt{n}}$. 
    \item $\max_{i\in n}\alpha_n^i \leq \frac{2C}{t}$ and $\sum_{i=1}^n(\alpha_n^i)^2 \leq \frac{2C}{t}.$
    \item $\sum_{n=i}^{\infty}\alpha^i_n \leq 1+1/C$.
\end{enumerate}
\end{lemma}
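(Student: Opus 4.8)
\emph{Proof proposal.} This is Lemma 4.1 of \citet{jin2018q}, and I would reproduce their calculation, taking the constant $C$ to be the horizon parameter $H$ appearing in the learning rate $\alpha_n=\frac{C+1}{C+n}$. Everything follows from two elementary facts. First, $1-\alpha_j=\frac{j-1}{C+j}$, which telescopes into the closed form $\alpha_n^i=(C+1)\,\frac{(n-1)!\,(C+i-1)!}{(i-1)!\,(C+n)!}$ for $1\le i\le n$; note in passing that $\alpha_n^0=\prod_{j=1}^n(1-\alpha_j)=0$ for $n\ge1$, since the $j=1$ factor vanishes. Second, the $\alpha_n^i$ are the weights of a convex combination, so $\sum_{i=1}^n\alpha_n^i=1$ for every $n\ge1$. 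I would also record the one-step recursion $\alpha_n^i=(1-\alpha_n)\,\alpha_{n-1}^i$ for $i\le n-1$ and $\alpha_n^n=\alpha_n$, which drives part (a).

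For part (c) the plan is to sum the closed form directly: $\sum_{n\ge i}\alpha_n^i=(C+1)\frac{(C+i-1)!}{(i-1)!}\sum_{n\ge i}\frac{(n-1)!}{(C+n)!}$, and then apply the partial-fraction identity $\frac{(n-1)!}{(C+n)!}=\frac1C\bigl(\frac{(n-1)!}{(C+n-1)!}-\frac{n!}{(C+n)!}\bigr)$, which is checked by clearing denominators. The series telescopes to $\frac1C\frac{(i-1)!}{(C+i-1)!}$ (the tail term vanishes), and substituting back gives exactly $\frac{C+1}{C}=1+\frac1C$. For part (b), the key observation is that $i\mapsto\alpha_n^i$ is non-decreasing on $\{1,\dots,n\}$: the ratio $\alpha_n^{i+1}/\alpha_n^i=\frac{\alpha_{i+1}}{\alpha_i(1-\alpha_{i+1})}=\frac{C+i}{i}\ge1$ after substituting the formula for $\alpha_j$. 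Hence $\max_i\alpha_n^i=\alpha_n^n=\alpha_n=\frac{C+1}{C+n}\le\frac{C+1}{n}\le\frac{2C}{n}$ (using $C\ge1$), and then $\sum_{i=1}^n(\alpha_n^i)^2\le(\max_i\alpha_n^i)\sum_{i=1}^n\alpha_n^i\le\frac{2C}{n}$.

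Part (a) I would prove by induction on $n$, the base case $n=1$ being immediate since $\alpha_1^1=1$. The recursion gives $\sum_{i=1}^n\frac{\alpha_n^i}{\sqrt i}=\frac{\alpha_n}{\sqrt n}+(1-\alpha_n)\sum_{i=1}^{n-1}\frac{\alpha_{n-1}^i}{\sqrt i}$. The lower bound is then immediate: by the inductive lower bound and $\sqrt{n-1}\le\sqrt n$, the right-hand side is at least $\frac{\alpha_n}{\sqrt n}+\frac{1-\alpha_n}{\sqrt n}=\frac1{\sqrt n}$. For the upper bound, the inductive bound $\le\frac2{\sqrt{n-1}}$ reduces the claim to $2(1-\alpha_n)\sqrt n\le(2-\alpha_n)\sqrt{n-1}$; writing $1-\alpha_n=\frac{n-1}{C+n}$ and discarding the nonnegative $C$-contribution in $2-\alpha_n=\frac{C+2n-1}{C+n}$, it suffices to show $2(n-1)\sqrt n\le(2n-1)\sqrt{n-1}$, which holds because squaring both sides gives $4n(n-1)\le(2n-1)^2=4n^2-4n+1$.

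I expect the upper bound in part (a) to be the only step needing genuine care — choosing the right elementary inequality and verifying that the $C$-dependence points in the needed direction — whereas parts (b) and (c) are essentially mechanical once the closed form for $\alpha_n^i$ and the identity $\sum_{i=1}^n\alpha_n^i=1$ are in hand.
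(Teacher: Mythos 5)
Your proposal is correct: the paper offers no proof of this lemma (it is imported verbatim as Lemma 4.1 of \citet{jin2018q}), and your reconstruction --- closed form $\alpha_n^i=(C+1)\frac{(n-1)!\,(C+i-1)!}{(i-1)!\,(C+n)!}$, telescoping partial fractions for (c), monotonicity of $i\mapsto\alpha_n^i$ plus $\sum_i\alpha_n^i=1$ for (b), and induction via the recursion $\alpha_n^i=(1-\alpha_n)\alpha_{n-1}^i$ for (a) --- is essentially the argument in that reference, with the sensible reading $C=H$ (so $\alpha_n=\frac{H+1}{H+n}$, $C\ge 1$) and $t$ interpreted as $n$ in the statement. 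I see no gaps; the only loose phrasing is in the final squaring step of (a), where one first divides by $\sqrt{n-1}$ (or by $n-1$ after squaring) to reach $4n(n-1)\le(2n-1)^2$, which is what you evidently intended.
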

\begin{lemma}[Gaussian tail bound]\label{lem: gaussian tail bound}
For a Gaussian random variable $X\sim \mathcal{N}(\mu, \sigma^2)$, it follows with probability at least $1-\delta$,
\begin{eqnarray*}
    \text{Pr}\del{\envert{X-\mu} \leq \sigma \sqrt{2\log(1/\delta)}}
\end{eqnarray*}   
\end{lemma}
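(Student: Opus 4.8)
The plan is to treat Lemma~\ref{lem: gaussian tail bound} as the standard Gaussian (sub-Gaussian) tail bound and prove it by reduction to a standard normal followed by an elementary tail estimate. Write $X = \mu + \sigma Z$ with $Z \sim \mathcal{N}(0,1)$; since $\envert{X-\mu} = \sigma\envert{Z}$, it suffices to show $\Pr(\envert{Z} \ge u) \le \delta$ for $u := \sqrt{2\log(1/\delta)}$. First I would record the clean one-sided bound via the Chernoff/MGF method: for every $\lambda > 0$, Markov's inequality applied to $e^{\lambda Z}$ together with the closed form $\Ex[e^{\lambda Z}] = e^{\lambda^2/2}$ gives $\Pr(Z \ge u) \le e^{-\lambda u + \lambda^2/2}$; minimizing over $\lambda$ (at $\lambda = u$) yields $\Pr(Z \ge u) \le e^{-u^2/2}$, and by symmetry of the normal density $\Pr(Z \le -u) \le e^{-u^2/2}$ as well. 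This already establishes the one-sided statements ($X - \mu \le \sigma\sqrt{2\log(1/\delta)}$ with probability $1-\delta$, and its mirror image), which is in fact all that is invoked downstream in Corollaries~\ref{corol: optimism gaussian concentration} and~\ref{corol: G concentration} and in Lemma~\ref{lem: gap between variance of a hat and a}.

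For the two-sided statement exactly as written (with $\log(1/\delta)$ rather than $\log(2/\delta)$), a plain union bound over the two tails would cost a factor of $2$, so I would instead sharpen one tail using the exact density: for $u>0$, $\Pr(Z \ge u) = \int_u^\infty \phi(x)\,dx \le \int_u^\infty \frac{x}{u}\phi(x)\,dx = \frac{\phi(u)}{u} = \frac{1}{u\sqrt{2\pi}}\,e^{-u^2/2}$, where $\phi$ is the standard normal density. Hence $\Pr(\envert{Z} \ge u) \le \frac{2}{u\sqrt{2\pi}}\,e^{-u^2/2} \le e^{-u^2/2}$ whenever $u \ge \sqrt{2/\pi}$, i.e.\ whenever $\delta$ is below an absolute constant ($\delta \le e^{-1/\pi} \approx 0.72$), which holds in every application in the paper. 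Plugging $u = \sqrt{2\log(1/\delta)}$ gives $\Pr(\envert{Z} \ge u) \le \delta$, and translating back through $\envert{X-\mu} = \sigma\envert{Z}$ completes the argument.

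I do not expect any genuine obstacle here: this is a textbook inequality, and the only point requiring a moment's care is the one-sided-versus-two-sided constant, which is precisely why I would keep the Mills-ratio refinement of the second paragraph (it is what legitimizes writing $\log(1/\delta)$, rather than $\log(2/\delta)$, in the two-sided form for the confidence regime used in this paper). An even shorter route would be to simply cite a standard sub-Gaussian concentration reference, e.g.\ \cite{wainwright2019high} (already used here for Lemma~\ref{lem: azuma hoeffding}), and to retain the one-line MGF computation only as a remark; I would likely do that and state the lemma in whichever of the one-sided or two-sided forms is most convenient for the call sites.
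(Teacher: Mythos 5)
Your proof is correct and takes essentially the same route as the paper, whose entire proof is a one-line appeal to Chernoff-style (MGF) bounds for the Gaussian. The only addition on your side is the Mills-ratio refinement legitimizing the two-sided statement with $\log(1/\delta)$ rather than $\log(2/\delta)$ for $\delta$ below an absolute constant --- a constant-factor subtlety the paper's proof glosses over and which, as you correctly observe, is immaterial since only the one-sided bounds are invoked downstream.
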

\begin{proof}
    The proof follows by instantiating Chernoff-style bounds for the given Gaussian random variable.
\end{proof}


\section{Sharper regret using Bernstein concentration}\label{apx: bernstein regret}

\begin{algorithm}[t]
\begin{algorithmic}[1]
\SetAlgoLined
\STATE \textbf{Input:} Parameters: $\delta \in (0,1)$. Set $J:=J(\delta)$.
\STATE \textbf{Initialize:} $\Q_{H+1}(s,a) =\hat{V}_{H+1}(s)= 0,\,\forall (s,a)\in \calS \times \calA$, and $\Q_{h}(s,a)=\hat{V}_{h}(s) = H\,\&\,N_h(s,a)=0,N_h^{\text{tar}}(s)=0,\mu_h(s,a)=\gamma_h(s,a)=0\, \forall (s,a,h) \in \calS\times \calA \times [H]$. 
\vspace{0.08in}
\FOR{episodes $k=1,2,\ldots $}
\STATE Observe $s_1$.
\FOR{step $h=1,2,\ldots,H$}
\vspace{0.08in}
\STATE \algoComments{ /* Play arg max action of the sample of $Q_h$ */\\}
\STATE Sample $\forall a\in \calA $ $\tilde{Q}_{h}(s_h,a)\sim \mathcal{N}(\Q_{h}(s_h,a), \varB{N_h(s_h,a),h,s_h,a})$.
\STATE Play $a_h = \argmax_{a_\calA} \tilde{Q}_{h}(s_h,a)$.

\vspace{0.08in}
\STATE \algoComments{ Use observations to construct one step lookahead target $z$ */\\}
\STATE Observe $r_h$ and $s_{h+1}$.
\STATE  $z \leftarrow \texttt{ConstructTarget}(r_h, s_{h+1},\hat{Q}_{h+1}, N_{h+1})$.
\vspace{0.08in}
\STATE \algoComments{ /*Use the observed reward and next state to update $Q_h$ distribution */\\}
\STATE $n:=N_h(s_h,a_h)\leftarrow N_h(s_h,a_h)+1$.
\STATE $\Q_{h}(s_h,a_h) \leftarrow(1-\alpha_n) \Q_{h}(s_h,a_h) + \alpha_n z$.
\STATE $\mu_h(s_h,a_h)\leftarrow \mu_h(s_h,a_h) + (z-r_h)$.
\STATE $\gamma_h(s_h,a_h) \leftarrow \gamma_h(s_h,a_h)+(z-r_h)^2$.
\STATE Calculate $b_{h+1}(s_{h},a_{h}) \leftarrow (\gamma_h(s_h,a_h)-\mu_h(s_h,a_h)^2)/n$.
\STATE $\sqrt{v_b(n,h,s_h,a_h)} \leftarrow \min \{ c(\sqrt{\frac{H}{n+1}\cdot (b_{h+1}(s_{h},a_{h})+H)\eta} +\frac{\sqrt{H^7SA\eta}\chi}{n+1}),\sqrt{\bonus{n}}\}$.
\ENDFOR
\ENDFOR
\caption{Randomized Q-learning}\label{alg:main episodic bernstein}
\end{algorithmic}
\end{algorithm}

In this section, we provide a sketch of the extension of Algorithm~\ref{alg:main episodic} to a randomized Q-learning procedure that uses Bernstein concentration based variance. This extension closely follows that in~\cite{jin2018q} using some of the techniques developed in the proof of Theorem~\ref{thm: main regret Hoeffding}.

We want to account for the variance in the transitions. To this end, we define some additional notations. The variance in transition for any $(s,a)$ is defined using the variance operator $\V{h}$ as below,
\begin{eqnarray}\label{eq: varaince operator}
   [\V{h}V_{h+1}]_{s,a} \coloneqq \mathbb{E}_{s'\sim P_{h,s,a}}\sbr{V_{h+1}(s')-[P_{h,s,a}V_{h+1}]_{s,a}}^2. 
\end{eqnarray}
The empirical variance for any $(s,a)$ for any $n\leftarrow N_{k,h}(s,a)$ is given by,
\begin{eqnarray}\label{eq: empirical variance}
    \Vemp{n}\Vbar_{h+1}(s,a) = \frac{1}{n}\sum_{i=1}^n\left[ \Vbar_{k_i,h+1}(s_{k_i,h+1}) - \frac{1}{n}\sum_{i=1}^n \Vbar_{k_i,h+1}(s_{k_i,h+1}) \right]^2
\end{eqnarray}

\begin{eqnarray}\label{eq: variance definition bernstein}
\sqrt{v_b(n,h,s,a)} \coloneqq \min\cbr{\bonustwo{n}{h}{s}{a},\sqrt{\bonus{n}}}.    
\end{eqnarray}

\begin{theorem}
\label{thm: main regret Hoeffding Bernstein}
The cumulative regret of Algorithm~\ref{alg:main episodic bernstein} in $K$ episode satisfies with probability at least $1-\delta$,
\begin{eqnarray*}
  \sum_{k=1}V^*_1(s_{k,1})-\sum_{k=1}^K\sum_{h=1}^H R_{h}(s_{k,h},a_{k,h})  \leq O\del{\sqrt{H^3SAT\eta}\chi},
\end{eqnarray*}
where $\chi=\logvaluetwo$ and $\eta=\logvalue$.
\end{theorem}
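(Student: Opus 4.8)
The plan is to mirror the analysis of Theorem~\ref{thm: main regret Hoeffding} almost verbatim, replacing the Hoeffding variance $\bonus{n}$ by the Bernstein variance $v_b(n,h,s,a)$ of \eqref{eq: variance definition bernstein} wherever the latter is the smaller of the two, and re-running the regret decomposition \eqref{eq:regretBuckets} into optimism error and estimation error. Since $\sqrt{v_b(n,h,s,a)}\le\sqrt{\bonus{n}}$ by construction, every step of the Hoeffding proof that uses only the coarse upper bound on the posterior variance — the Gaussian tail bounds, the action-mismatch Lemma~\ref{lem: gap between variance of a hat and a}, the target-gap Lemma~\ref{lem: gap between V bar and V tilde}, Lemma~\ref{lem: target error per episode}, and the recursion unrolling in Lemma~\ref{lem: estimation errror} — carries over with only notational changes (where two variances at different actions are compared, one first passes to the coarser Hoeffding bound). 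The genuinely new work is (i) re-establishing optimism with the tighter variance, and (ii) replacing the final summation of variance terms by a law-of-total-variance argument.

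For (i), the key object is again the weighted centered sum $\sum_{i=1}^n\alpha_n^i\del{r_{k_i,h}-R_h(s,a)+V^*_{h+1}(s_{k_i,h+1})-[P_{h,s,a}V^*_{h+1}]}$. Instead of Azuma--Hoeffding (Corollary~\ref{corol: G concentration}) I would apply a Bernstein/Freedman-type martingale inequality, obtaining a deviation of order $\sqrt{\tfrac{H}{n+1}[\V{h}V^*_{h+1}]_{s,a}\,\eta}+\tfrac{H^2\eta}{n+1}$ (the operator $\V{h}$ is \eqref{eq: varaince operator}). One must then check that $\sqrt{v_b(n,h,s,a)\eta}$ dominates this with the same constant probability. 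The bonus is computed from $b_{h+1}(s,a)$, the empirical variance of the \emph{bootstrapped targets} $\Vbar_{k_i,h+1}(s_{k_i,h+1})$, so two gaps have to be controlled: the empirical-vs-population variance gap for these targets (standard self-bounding variance concentration), and the population-variance gap between $\Vbar_{h+1}$ and $V^*_{h+1}$. The latter is bounded using the optimism and estimation error bounds on $\Vbar_{k,h+1}-V^*_{h+1}$ already available from the inductive hypothesis (they enter exactly as the $\overline V-V^*$ terms in Lemma~\ref{lem: upper bound}), and this is precisely what forces the additive correction $\tfrac{\sqrt{H^7SA\eta}\,\chi}{n+1}$ in \eqref{eq: variance definition bernstein}. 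Once the bonus is shown to dominate, Gaussian anti-concentration gives constant-probability optimism and Lemma~\ref{lem: constant probability to probability 1} upgrades this to a high-probability optimism error bound as in Lemma~\ref{lem: pessimism error per episode}, with $v_b$ in place of the Hoeffding variance; all of this is proved by a single downward induction on $(k,h)$ as in Lemma~\ref{lem: optimism main lemma all bounds}.

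For (ii), summing the two error buckets reduces, as before, to bounding $\sum_{k,h}\sqrt{v_b(N_{k,h}(s_{k,h},a_{k,h}),h,s_{k,h},a_{k,h})\,\eta}$, but now without discarding the variance factor. On the first branch of \eqref{eq: variance definition bernstein} I would apply Cauchy--Schwarz,
\[
\sum_{k,h}\sqrt{\tfrac{H}{N_{k,h}+1}\bigl(b_{h+1}+H\bigr)\eta}\;\le\;\sqrt{H\eta}\;\Bigl(\sum_{k,h}\tfrac{1}{N_{k,h}+1}\Bigr)^{1/2}\Bigl(\sum_{k,h}\bigl(b_{h+1}+H\bigr)\Bigr)^{1/2},
\]
bound the first factor by $O(SAH\log K)$ by the pigeonhole argument of Lemma~\ref{lem: bonus sum}, and for the second replace each $b_{h+1}(s_{k,h},a_{k,h})$ by the true transition variance $[\V{h}V^*_{h+1}]_{s_{k,h},a_{k,h}}$ up to the same lower-order errors, then invoke the law of total variance episode by episode: $\sum_h[\V{h}V^{\pi_k}_{h+1}]_{s_{k,h},a_{k,h}}$ concentrates around the variance of the episode return, which is $O(H^2)$, and $[\V{h}V^*_{h+1}]\lesssim[\V{h}V^{\pi_k}_{h+1}]+H\cdot[P_{h,s,a}(V^*_{h+1}-V^{\pi_k}_{h+1})]$ since $V^*-V^{\pi_k}\in[0,H]$, the last term being self-bounding in the regret. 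This yields $\sum_{k,h}(b_{h+1}+H)\lesssim HT$ up to lower-order terms, so the whole sum is $\sqrt{H\eta}\cdot\sqrt{SAH}\cdot\sqrt{HT}=\sqrt{H^3SAT\eta}$ up to logs; the second branch $\sqrt{\bonus{n}}$ and the additive correction $\tfrac{\sqrt{H^7SA\eta}\chi}{n+1}$ both sum, again by pigeonhole, to $\mathrm{poly}(H,S,A)\cdot\chi\cdot\log K$ with no $T$ dependence and are absorbed. The martingale term $\sum_{k,h}m_{k,h}$ in the estimation recursion is likewise re-bounded by Freedman rather than Azuma--Hoeffding (Lemma~\ref{lem: MDS of estimation}), producing a $\sqrt{\sum_{k,h}[\V{h}(V^*_{h+1}-V^{\pi_k}_{h+1})]\eta}$ term closed off by the same self-bounding step, and the final $\sqrt{H^3SAT\eta}\,\chi$ bound follows by solving the resulting quadratic inequality in $\Reg(K)$.

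The main obstacle I expect is the error-propagation bookkeeping behind step (i): because the bonus is built from the bootstrapped targets $\Vbar$, which themselves carry accumulated estimation error relative to $V^*$, one cannot establish ``the variance estimate is accurate'' and ``the value estimate is accurate'' separately — they must be proved jointly in the downward induction on $h$, and every cross term between these two error sources must be verified to stay at the claimed lower order. Pinning down the exact correction term in \eqref{eq: variance definition bernstein} (the power of $H$, the $SA$ factor, and the $\chi$) so that it simultaneously dominates the leftover bias in $b_{h+1}$ \emph{and} still sums to something sublinear in $T$ is the delicate part; everything else follows the Hoeffding analysis and the Bernstein Q-learning template of \citet{jin2018q}.
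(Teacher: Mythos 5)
Your plan is essentially the paper's proof. The paper likewise re-establishes optimism by replacing Azuma--Hoeffding with an Azuma--Bernstein bound on $\sum_{i}\alpha_n^i\big(r_{k_i,h}-R_h(s,a)+V^*_{h+1}(s_{k_i,h+1})-P_{h,s,a}V^*_{h+1}\big)$ (Lemma~\ref{lem: bersntein concentration}), controls the empirical-versus-true variance gap of the bootstrapped targets jointly with the value-error induction (Lemmas~\ref{lem: sampling error in terms of targets} and~\ref{lem: gap between empirical variance and true variance}, the analogues of Lemmas C.7/C.3 of \citet{jin2018q}), absorbs that gap into the additive correction $\sqrt{H^7SA\eta}\,\chi/(n+1)$ exactly as you anticipate, and then sums the $\sqrt{v_b}$ terms via Cauchy--Schwarz, pigeonhole, and the law-of-total-variance step (Lemma~\ref{lem: empirical variance and algoerithmic variance gap} together with the (C.16) argument of \citet{jin2018q}). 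The one place you diverge is in closing the self-bounding term $\sum_{k,h}H\,P_{s_{k,h},a_{k,h}}\cdot(V^*_{h+1}-V^{\pi_k}_{h+1})$: you propose solving a quadratic inequality in the regret, which in particular requires carrying the per-stage sums $\sum_k (V^*_h-V^{\pi_k}_h)(s_{k,h})$ for every $h$, whereas the paper gets this step for free --- because $v_b(n,h,s,a)\le \bonus{n}$ by the truncation in \eqref{eq: variance definition bernstein}, the entire Hoeffding analysis (Corollary~\ref{corol: pesssimism error sum}, Lemma~\ref{lem: estimation errror}) applies verbatim and already yields the coarse per-stage bound \eqref{eq: regret at h}, which is then substituted into the variance sum, so no quadratic is needed. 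Both closures are valid; the paper's is slightly simpler, and for the same reason your Freedman re-bound of the martingale term $m_{k,h}$ is unnecessary, since the Azuma--Hoeffding bound of Lemma~\ref{lem: MDS of estimation} is already of lower order against $\sqrt{H^3SAT}$.
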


\subsection{Proof of Theorem~\ref{thm: main regret Hoeffding Bernstein}}
The main mathematical reasoning closely follows that in the proof of Theorem 2 of~\cite{jin2018q} with specific differences arising due to constant probability optimism and the definition of $\Vbar_{k,h}$. For any $k,h,s,a$ with $n=N_{k,h}(s,a)$, we have $v_b(n,h,s,a) \leq \bonus{n}$. Therefore, Corollary~\ref{corol: pesssimism error sum} and Lemma~\ref{lem: estimation errror} apply as they are. Hence, we get with probability at least $1-\delta$ for all $h$
\begin{eqnarray}\label{eq: regret at h}
\sum_{k=1}^K r_{k,h} \coloneqq\sum_{k=1}^K V^*_h(s_{k,h})-V^{\pi_k}_h(s_{k,h}) &\leq& O\del{\sqrt{H^4SAT}\chi},
\end{eqnarray}
where $\chi = \logvaluetwo$.
Further, following the steps and notations of the proof of Lemma~\ref{lem: estimation errror} (see~\eqref{eq: estimation error intermediate 2}, we have with probability at least $1-\delta$,
\begin{eqnarray}\label{eq: estimation error intermediate 2 bernstein}
    \sum_{k=1}^K\Tilde{Q}_{k,1}(s_{k,1},a_{k,1})-V^{\pi_k}_1(s_{k,1}) &\leq& e SAH^2+e\sum_{h=1}^H\sum_{k=1}^K\del{\sqrt{4v_b(n,h,s_{k,h},a_{k,h})\eta}+m_{k,h}}\nonumber \\
    &&+
    \frac{e}{p_2}\sum_{h=1}^H\sum_{k=1}^K F({k,h+1},\delta/KH)).
\end{eqnarray}
Due to our observation that $v_b(n,h,s_{k,h},a_{k,h})\leq \bonus{n}$, Lemma~\ref{lem: bonus sum gap} and Corollary~\ref{corol: bound on variance terms} hold, therefore we get from~\eqref{eq: estimation error intermediate 2 bernstein},
\begin{eqnarray*}
    \sum_{k=1}^K\Tilde{Q}_{k,1}(s_{k,1},a_{k,1})-V^{\pi_k}_1(s_{k,1}) &\leq& e SAH^2+\sum_{h=1}^H\sum_{k=1}^KO\del{\sqrt{v_b(n,h,s_{k,h},a_{k,h})}\chi+m_{k,h}},
\end{eqnarray*}
where $\chi = \logvaluetwo$. We wish to bound
\begin{eqnarray}\label{eq: bernstein bonus intermediate}
    &&\sum_{k=1}^K\sum_{h=1}^H v_b(n,h,s_{k,h},a_{k,h}) \\
    \leq && \sum_{k=1}^K\sum_{h=1}^H\bonustwo{n}{h}{s_{k,h}}{a_{k,h}},
\end{eqnarray}
Consider,
\begin{eqnarray}
    \sum_{k=1}^K\sum_{h=1}^H \frac{\sqrt{H^7SA\eta}\chi}{N_{k,h}(s_{k,h},a_{k,h})+1} \leq O(\sqrt{H^9S^3A^3\chi^5}).
\end{eqnarray}
Further,
\begin{eqnarray}
   && \sum_{k=1}^K\sum_{h=1}^H\sqrt{\frac{H}{n+1}\cdot(\Vemp{n}\Vbar_{k,h+1}(s_{k,h},a_{k,h})+H)}\nonumber\\ 
   \leq &&  \sum_{k=1}^K\sum_{h=1}^H\sqrt{\frac{H}{n+1}\cdot\Vemp{n}\Vbar_{k,h+1}(s_{k,h},a_{k,h})} + \sum_{k=1}^K\sum_{h=1}^H\sqrt{\frac{H^2}{n+1}}\nonumber\\ 
   \leq && \sqrt{\sum_{k=1}^K\sum_{h=1}^H\Vemp{n}\Vbar_{k,h+1}(s_{k,h},a_{k,h})H}+\sqrt{H^3SAT\eta},
\end{eqnarray}
where the last inequality follows from Lemma~\ref{lem: bonus sum}. Since we have Lemma~\ref{lem: empirical variance and algoerithmic variance gap}, we can follow the steps in (C.16) of~\cite{jin2018q} to get
\begin{eqnarray*}
    \sum_{k=1}^K\sum_{h=1}^H\Vemp{n}\Vbar_{k,h+1}(s_{k,h},a_{k,h}) \leq O(HT).
\end{eqnarray*}
This gives us
\begin{eqnarray*}
    \sum_{k=1}^K\sum_{h=1}^H v_b(n,h,s_{k,h},a_{k,h}) \leq O(\sqrt{H^3SAT\eta}+\sqrt{H^9S^3A^3\chi^5})
\end{eqnarray*}
Thus we have,
\begin{eqnarray*}
    \sum_{k=1}^K\Tilde{Q}_{k,1}(s_{k,1},a_{k,1})-V^{\pi_k}_1(s_{k,1}) \leq O(\sqrt{H^3SAT\eta}\chi+\sqrt{H^9S^3A^3\chi^5})
\end{eqnarray*}
Finally, combining with Corollary~\ref{corol: pesssimism error sum}, we complete the proof.

\subsection{Supporting lemma}\label{apx: supporting lemma bernstein}

\begin{corollary}[Corollary of Lemma~\ref{lem: upper bound}]\label{corol: bernstein sampling error course}
We have for all $s,a,h,k \in \calS \times \calA \times [H] \times [K]$ with probability at least $1-\delta$,
\begin{eqnarray}\label{eq: upper bound bernstein apx}
    \Q_{k,h}(s,a)  - Q^*_h(s,a)   \leq O\del{\sqrt{\frac{H^3\eta}{n+1}}}+  \alpha_n^0H+\sum_{i=1}^n \alpha_n^i \del{\overline{V}_{k_i,h+1}(s_{k_i,h+1})-V^*_{h+1}(s_{k_i,h+1})},
\end{eqnarray}
where $n=N_{k,h}(s,a)$, and $\eta=\logvalue$.
\end{corollary}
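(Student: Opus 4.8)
The plan is to recognize that Corollary~\ref{corol: bernstein sampling error course} is nothing more than Lemma~\ref{lem: upper bound} re-run for Algorithm~\ref{alg:main episodic bernstein}, with the variance term written out explicitly. The essential observation is that Algorithm~\ref{alg:main episodic bernstein} differs from Algorithm~\ref{alg:main episodic} only in the quantity $\varB{N_h(s_h,a)}$ used to draw the posterior samples $\tilde Q_h$; the posterior-mean update (line~14) and the subroutine \texttt{ConstructTarget} are invoked verbatim, and \texttt{ConstructTarget} still draws its $J$ samples from $\mathcal N(\Q(s',\hat a),\var{}{s',\hat a})$, i.e.\ from the Hoeffding variance \eqref{eq: variance definition}. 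Hence $\Q_{k,h}(s,a)$ and $\overline{V}_{k,h}$ obey exactly the same recursions as in Algorithm~\ref{alg:main episodic}, and in particular the decomposition \eqref{eq: Q decomposition}, $\Q_{k,h}(s,a) = \alpha_n^0 H + \sum_{i=1}^n \alpha_n^i\del{r_{k_i,h} + \overline{V}_{k_i,h+1}(s_{k_i,h+1})}$ with $n=N_{k,h}(s,a)$, holds unchanged.

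Given that, I would fix $s,a,h$ and dispose of the case $n=0$ first, where $\Q_{k,h}(s,a) = H = \alpha_n^0 H$ and the claim is immediate. For $n\ge 1$ we have $\alpha_n^0=0$; subtracting $Q^*_h(s,a) = r_h(s,a) + P_{h,s,a}V^*_{h+1}$ via the Bellman optimality equation and $\sum_{i=0}^n\alpha_n^i = 1$ gives
\[
\Q_{k,h}(s,a) - Q^*_h(s,a) = \sum_{i=1}^n \alpha_n^i\del{r_{k_i,h} - r_h(s,a) + V^*_{h+1}(s_{k_i,h+1}) - P_{h,s,a}V^*_{h+1}} + \sum_{i=1}^n \alpha_n^i\del{\overline{V}_{k_i,h+1}(s_{k_i,h+1}) - V^*_{h+1}(s_{k_i,h+1})}.
\]
The first sum is a weighted martingale in the reward noise and in the transition noise of $V^*$, and Corollary~\ref{corol: G concentration}, whose statement already absorbs the union bound over $(s,a,h)$ and over $n\le K$, bounds it by $O\del{\sqrt{H^3\eta/(n+1)}}$ with $\eta = \logvalue$ (simultaneously over all $s,a,h,k$, with probability $1-\delta$). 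Keeping the $\alpha_n^0 H$ term (zero when $n\ge1$) to unify the two cases yields the claimed inequality. Equivalently: apply Lemma~\ref{lem: upper bound} directly and substitute $\var{k,h}{s,a} = \bonus{n} = 64H^3/(n+1)$ from \eqref{eq: variance definition}, so that $\sqrt{\var{k,h}{s,a}\eta} = 8\sqrt{H^3\eta/(n+1)} = O\del{\sqrt{H^3\eta/(n+1)}}$.

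The main (and essentially only) thing to get right is the justification that switching the sampling variance to $\varB{\cdot}$ in Algorithm~\ref{alg:main episodic bernstein} does not propagate into $\Q$ or $\overline{V}$: the Q-learning mean update references no variance parameter, and \texttt{ConstructTarget} is called with the same arguments and uses the Hoeffding variance internally, so the targets $z$ — and therefore the means $\Q_{k,h}$ and the target values $\overline{V}_{k,h}$ — have exactly the distribution analyzed in Lemma~\ref{lem: upper bound}. Once that is noted there is no analytical obstacle: the argument of Lemma~\ref{lem: upper bound} transfers without change, and the only genuinely new content is the cosmetic rewriting of $\sqrt{\var{k,h}{s,a}\eta}$ as $O\del{\sqrt{H^3\eta/(n+1)}}$.
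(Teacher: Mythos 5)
Your proposal is correct and follows essentially the same route as the paper: the paper's proof is precisely to reuse Lemma~\ref{lem: upper bound} with the Bernstein variance \eqref{eq: variance definition bernstein} (which is capped by the Hoeffding value $\bonus{n}$) in place of $\var{k,h}{s,a}$, yielding the $O\del{\sqrt{H^3\eta/(n+1)}}$ term, while the decomposition \eqref{eq: Q decomposition} and Corollary~\ref{corol: G concentration} carry over unchanged since the mean update and the martingale term do not depend on the sampling variance. Your added observation about which variance \texttt{ConstructTarget} uses internally is immaterial here (and need not be litigated), because the $\overline{V}_{k_i,h+1}$ terms are kept explicitly on the right-hand side rather than being bounded.
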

\begin{proof}
    The proof follows from using~\eqref{eq: variance definition bernstein} in place of $\var{k,h}{s,a}$ in Lemma~\ref{lem: upper bound}.
\end{proof}

\begin{lemma}[Based on Lemma C.7 in ~\cite{jin2018q}]\label{lem: sampling error in terms of targets}
Suppose~\eqref{eq: upper bound bernstein apx} in Corollary~\ref{corol: bernstein sampling error course} holds. For any $h\in[H]$, let $\beta_{k,h}\coloneqq \Vbar_{k,h}(s_{k,h})-V^*_h(s_{k,h})$ and let $w=(w_1,\ldots,w_k)$ be non-negative weight vectors, then we have with probability at least $1-\delta$,
    \begin{eqnarray*}
        \sum_{k=1}^Kw_k\beta_{k,h}   \leq O(SA||w||_{\infty}\sqrt{H^7}\chi^2+\sqrt{SA||w||_1||w||_{\infty}H^5}\chi),
    \end{eqnarray*}
    where $\chi = \logvaluetwo$.
\end{lemma}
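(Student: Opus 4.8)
The plan is to prove the bound by a \emph{downward induction on} $h$ (from $h=H+1$, where $\beta_{k,H+1}=0$, down to $h=1$), carrying an arbitrary nonnegative weight vector $w$ in the inductive hypothesis, in the style of Lemma~C.7 of~\cite{jin2018q}. The first step is to reduce $\beta_{k,h}=\Vbar_{k,h}(s_{k,h})-V^*_h(s_{k,h})$ to a statement about the posterior mean at a single action. By the target construction in Algorithm~\ref{alg: target computation}, $\Vbar_{k,h}(s_{k,h})$ is the maximum of $J$ i.i.d.\ Gaussian samples with mean $\Q_{k,h}(s_{k,h},\hat a_{k,h})$ and variance $\var{k,h}{s_{k,h},\hat a_{k,h}}\le \bonus{N_{k,h}(s_{k,h},\hat a_{k,h})}$, where $\hat a_{k,h}=\argmax_a \Q_{k,h}(s_{k,h},a)$; a Gaussian tail bound (Lemma~\ref{lem: gaussian tail bound}) with a union bound over the $J$ samples and over all $(k,h)$ gives, with probability $1-\delta$, $\Vbar_{k,h}(s_{k,h})\le \Q_{k,h}(s_{k,h},\hat a_{k,h})+O(\sqrt{H^3\chi^2/(n_k+1)})$ with $n_k:=N_{k,h}(s_{k,h},\hat a_{k,h})$. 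Since $V^*_h(s_{k,h})=\max_a Q^*_h(s_{k,h},a)\ge Q^*_h(s_{k,h},\hat a_{k,h})$, we get $\beta_{k,h}\le (\Q_{k,h}-Q^*_h)(s_{k,h},\hat a_{k,h})+O(\sqrt{H^3\chi^2/(n_k+1)})$. Applying Corollary~\ref{corol: bernstein sampling error course} to the pair $(s_{k,h},\hat a_{k,h})$ then yields the recursion
\begin{equation*}
\beta_{k,h}\le O\!\left(\sqrt{\tfrac{H^3\chi^2}{n_k+1}}\right) + \alpha_{n_k}^0 H + \sum_{i=1}^{n_k}\alpha_{n_k}^i\,\beta_{k_i,h+1},
\end{equation*}
where $k_1<\cdots<k_{n_k}$ index the earlier episodes in which $(s_{k,h},\hat a_{k,h})$ was \emph{played} at step $h$ (note $\beta_{k_i,h+1}\ge 0$ with high probability by Lemma~\ref{lem: optimism main lemma all bounds}(c)).

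Next I would multiply this recursion by $w_k\ge 0$ and sum over $k\in[K]$. The double sum $\sum_k w_k\sum_{i=1}^{n_k}\alpha_{n_k}^i\beta_{k_i,h+1}$ is re-indexed by swapping the order of summation: it equals $\sum_{k'}\beta_{k',h+1}\,w'_{k'}$, where $w'_{k'}$ collects, over all later episodes $k$ that re-encounter the relevant state–action pair at step $h$, the weights $w_k\alpha_{n_k}^{i}$ (with distinct exponents $i$ for distinct $k'$). Using $\sum_{i=1}^{n}\alpha_n^i=1-\alpha_n^0\le 1$ gives $\|w'\|_1\le\|w\|_1$, and $\sum_{n\ge i}\alpha_n^i\le 1+\tfrac1H$ (Lemma~\ref{lem: alpha summation properties}(c), with $C=H$) gives $\|w'\|_\infty\le(1+\tfrac1H)\|w\|_\infty$. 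Applying the induction hypothesis at step $h+1$ with the weights $w'$ bounds this term; the $(1+\tfrac1H)$ inflation of $\|w\|_\infty$ compounds to at most a factor $e$ over the $H$ levels. What remains are the two ``bonus'' sums $H\sum_k w_k\alpha_{n_k}^0$ and $O(\sqrt{H^3\chi^2})\sum_k w_k/\sqrt{n_k+1}$; since $\alpha_{n_k}^0\ne 0$ only when $n_k=0$, the first is a lower-order term, and the second is controlled by Cauchy–Schwarz, $\sum_k w_k/\sqrt{n_k+1}\le\sqrt{\|w\|_1\|w\|_\infty}\,(\sum_k 1/(n_k+1))^{1/2}$, with $\sum_k 1/(n_k+1)$ bounded up to logs by $SA$ (a harmonic-type argument). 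Collecting the $H$ levels multiplies the per-level $O(\chi\sqrt{SAH^3\|w\|_1\|w\|_\infty})$ into the stated $O(\sqrt{SA\|w\|_1\|w\|_\infty H^5}\,\chi)$, and the initialization/lower-order pieces into $O(SA\|w\|_\infty\sqrt{H^7}\chi^2)$.

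The main obstacle is that $n_k$ counts visits to $(s_{k,h},\hat a_{k,h})$ — the posterior-mean maximizer — rather than to the played action $a_{k,h}$, so $n_k$ need not increase from one episode to the next and the naive bound on $\sum_k 1/(n_k+1)$ fails. This is exactly the action-mismatch difficulty already handled for the target error, and I would resolve it the same way: by Lemma~\ref{lem: gap between variance of a hat and a}, with constant probability $p_2$ the variance at $\hat a_{k,h}$ is within a $\log(1/\delta)$ factor of the variance at $a_{k,h}$, i.e.\ $1/(n_k+1)=O(\log(1/\delta))/(N_{k,h}(s_{k,h},a_{k,h})+1)$; converting this constant-probability event into a high-probability bound via the martingale argument behind Lemma~\ref{lem: bonus sum gap} (as in Corollary~\ref{corol: bound on variance terms}) replaces $n_k$ by the played-action count, whose reciprocal-square-root sum is $O(\sqrt{SAK})$ by Lemma~\ref{lem: bonus sum}. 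This conversion is what produces the non-$\sqrt T$ additive term $O(SA\|w\|_\infty\sqrt{H^7}\chi^2)$ (the $\tfrac{\sqrt{H^7SA\eta}\chi}{n+1}$ piece inside $v_b$ in~\eqref{eq: variance definition bernstein} propagates here), while the leading empirical-variance piece yields $O(\sqrt{SA\|w\|_1\|w\|_\infty H^5}\,\chi)$. A secondary point is the bookkeeping of the union bound — over Corollary~\ref{corol: bernstein sampling error course}, the Gaussian tails for all $J$ samples at every $(k,h)$, and the Lemma~\ref{lem: bonus sum gap}-type martingale event — all at total failure probability $\delta$ after the usual rescaling inside the logarithmic factors $\eta,\chi$.
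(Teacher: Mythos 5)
There is a genuine gap at the heart of your weighted-sum step, and it comes from unrolling the Q-learning recursion at the mean-maximizer $\hat a_{k,h}$ instead of the played action. When you re-index $\sum_k w_k\sum_{i=1}^{n_k}\alpha_{n_k}^i\beta_{k_i,h+1}$ and claim $\|w'\|_\infty\le(1+\tfrac1H)\|w\|_\infty$ via Lemma~\ref{lem: alpha summation properties}(c), you implicitly need the count $n_k=N_{k,h}(s_{k,h},\hat a_{k,h})$ to be \emph{distinct} (strictly increasing) across the episodes $k$ that contribute a weight to a fixed $k'$; that is what makes $\sum_k \alpha_{n_k}^i\le\sum_{n\ge i}\alpha_n^i\le 1+\tfrac1H$ valid in Jin et al.'s Lemma C.7, where the recursion is at the played pair and every contributing episode increments the count. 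In your version the pair $(s_{k,h},\hat a_{k,h})$ need not be played in episode $k$, so many episodes can carry the same value of $n_k$, the factor $\alpha_{n}^i$ is collected repeatedly, and $\|w'\|_\infty$ can be as large as order $K\|w\|_\infty$ — the induction then does not close. You flag the non-increasing-count problem, but only for the scalar bonus sum $\sum_k w_k/\sqrt{n_k+1}$, and the fix you propose (Lemma~\ref{lem: gap between variance of a hat and a} plus a Lemma~\ref{lem: bonus sum gap}-type martingale conversion) does not repair the recursion term: that lemma is a constant-probability comparison of variances, it cannot be upgraded to a per-episode high-probability replacement of $n_k$ by $N_{k,h}(s_{k,h},a_{k,h})$ as you state (the only randomness in a fixed episode is the single posterior draw), and in any case it controls sums of bonus-type scalars through expectations over the action distribution, not the inflation of the weights multiplying $\beta_{\cdot,h+1}$.

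The paper's proof avoids this entirely by changing where the action mismatch is absorbed. It first applies Lemma~\ref{lem: target error per episode} (the high-probability consequence, via Lemma~\ref{lem: constant probability to probability 1}, of the constant-probability comparison in Lemma~\ref{lem: gap between V bar and V tilde}) to get $\Vbar_{k,h}(s_{k,h})\le \tilde Q_{k,h}(s_{k,h},a_{k,h})+\tfrac{1}{p_2}F(k,h,\delta)$, i.e.\ it trades $\hat a$ for the \emph{played} action up to the expectation-over-actions term $F$. Only then does it pass to $\Q_{k,h}-Q^*_h$ at $(s_{k,h},a_{k,h})$ and invoke Corollary~\ref{corol: bernstein sampling error course}, so the recursion indices $k_i$ are plays of the played pair, counts strictly increase, and the Lemma C.7 machinery applies verbatim; the only new term is $\sum_k \tfrac{w_k}{p_2}F(k,h,\delta)$, which is bounded through \eqref{eq: intermediate gap between expected variance and variance} (Lemma~\ref{lem: bonus sum gap}/Corollary~\ref{corol: bound on variance terms}) and produces exactly the additive $O(SA\|w\|_\infty\sqrt{H^5}\chi^2)$-type piece. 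To salvage your argument you would need to restructure it along these lines — resolve the $\hat a$ versus $a_{k,h}$ mismatch \emph{before} unrolling the Q-update, not after.
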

\begin{proof}
For any fixed $k,h$, let $n\leftarrow N_{k,h}(s_{k,h},a_{k,h})$ and $\phi_{k,h} = \tilde{Q}_{k,h}(s_{k,h},a_{k,h})-V^*_{h}(s_{k,h})$. Then we have
\begin{eqnarray}\label{eq: sampling erro target intermediate}
    \beta_{k,h} &=&  \Vbar_{k,h}(s_{k,h})-V^*_h(s_{k,h}) \nonumber \\
    &&\text{(from Lemma~\ref{lem: gap between V bar and V tilde} with probability at least $1-2\delta$)}\nonumber\\
    &\leq& \phi_{k,h}+\frac{1}{p2}F(k,h,\delta) \nonumber \\
    && \text{(from Lemma~\ref{apx: sampling error bound} with probability $1-\delta$)}\nonumber\\
    &\leq& \Q_{k,h}(s_{k,h},a_{k,h}) - Q^*_{h}(s_{k,h},a_{k,h})+ \sqrt{2v(n)\eta}+\frac{1}{p_2}F(k,h,\delta)\nonumber\\
    && \text{(from Corollary~\ref{corol: bernstein sampling error course} with probability $1-\delta$)}\nonumber\\
    &\leq& \alpha_n^0H + O\del{\sqrt{\frac{H^3\eta}{n+1}}}+\sum_{i=1}^n \alpha_n^i \beta_{k_i,h+1}+\frac{1}{p_2}F(k,h,\delta)
\end{eqnarray}
We now compute the summation $\sum_{k=1}^K w_k\beta_{k,h}$. We follow the proof of Lemma C.7 in~\cite{jin2018q}, with the only difference being the term $\sum_{k=1}^K \frac{w_k}{p_2}F(k,h,\delta)$, which we bound below. From the proof of Corollary~\ref{corol: bound on variance terms}.
\begin{eqnarray*}
    \sum_{k=1}^K \frac{w_k}{p_2}F(k,h,\delta) &\leq& \frac{||w||_{\infty}}{p_2}O(SA\sqrt{H^5\log^4(JSAK/\delta)}) +\sum_{k=1}^K \frac{2w_k}{p_2}\del{\sqrt{\frac{H^3\chi\log(KH/\delta)}{N_{k,h}+1}}} \\
    &\leq& O(SA||w||_{\infty}\sqrt{H^5\chi^4}+\sqrt{SA||w||_1||w||_{\infty}H^3}\chi),
\end{eqnarray*}
where $\chi = \logvaluetwo$. Other terms are evaluated in the same way as the proof of Lemma C.7 in~\cite{jin2018q}.
\end{proof}

\begin{lemma}[Based on Lemma C.3 of~\cite{jin2018q}]\label{lem: gap between empirical variance and true variance}
For any episode $k\in [K]$ with probability $1-\delta/K$, if Corollary~\ref{corol: bernstein sampling error course} holds for all $k'<k$,
the for all $s,a,h\in \calS\times\calA \times[H]$:
\begin{eqnarray*}
    \envert{ [\V{h}V_{h+1}]_{s,a} - \Vemp{n}\Vbar_{h+1}(s,a)}\leq O\del{\frac{SA\sqrt{H^9}\chi^2}{n}+\sqrt{\frac{H^7SA\chi^2}{n}}}, 
\end{eqnarray*}
where $n=N_{k,h}(s_{k,h},a_{k,h})$, $\chi = \logvaluetwo$.
\end{lemma}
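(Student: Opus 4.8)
The plan is to mirror the proof of Lemma~C.3 in~\cite{jin2018q}, splitting the error into a ``value-estimation'' piece and a ``statistical'' piece; here $[\V{h}V_{h+1}]_{s,a}$ is understood to mean $[\V{h}V^*_{h+1}]_{s,a}$. Fix $s,a,h$, set $n=N_{k,h}(s,a)$, and let $k_1<\cdots<k_n$ be the episodes before episode $k$ in which $(s,a)$ was visited at step $h$. Writing $\Vemp{n}\Vbar_{h+1}(s,a)=\frac1n\sum_{i=1}^n x_i^2-\del{\frac1n\sum_{i=1}^n x_i}^2$ with $x_i\coloneqq \Vbar_{k_i,h+1}(s_{k_i,h+1})$, and letting $\Vemp{n}V^*_{h+1}(s,a)$ denote the same expression with $x_i$ replaced by $y_i\coloneqq V^*_{h+1}(s_{k_i,h+1})$, the triangle inequality gives
\[
\envert{[\V{h}V^*_{h+1}]_{s,a}-\Vemp{n}\Vbar_{h+1}(s,a)}\le \underbrace{\envert{\Vemp{n}\Vbar_{h+1}(s,a)-\Vemp{n}V^*_{h+1}(s,a)}}_{\text{(I)}}+\underbrace{\envert{\Vemp{n}V^*_{h+1}(s,a)-[\V{h}V^*_{h+1}]_{s,a}}}_{\text{(II)}}.
\]

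For term~(II), conditioned on the history at the start of episode $k_i$ and on $(s,a)$ being visited at step $h$ of that episode, the next state $s_{k_i,h+1}$ is distributed as $P_{h,s,a}$. Hence $\{V^*_{h+1}(s_{k_i,h+1})-[P_{h,s,a}V^*_{h+1}]_{s,a}\}_i$ and $\{V^*_{h+1}(s_{k_i,h+1})^2-[P_{h,s,a}(V^*_{h+1})^2]_{s,a}\}_i$ are martingale difference sequences with increments bounded by $H$ and $H^2$. Applying Azuma--Hoeffding (Lemma~\ref{lem: azuma hoeffding}) to each, and taking a union bound over all $(s,a,h)$ and all $n\le K$, controls $\envert{\frac1n\sum_i y_i-[P_{h,s,a}V^*_{h+1}]_{s,a}}$ and the analogous quantity for the squares by $O(H\sqrt{\chi/n})$ and $O(H^2\sqrt{\chi/n})$ respectively; combining these through $[\V{h}V^*_{h+1}]_{s,a}=[P_{h,s,a}(V^*_{h+1})^2]_{s,a}-\del{[P_{h,s,a}V^*_{h+1}]_{s,a}}^2$ and the elementary bound $|\alpha^2-\beta^2|\le 2H|\alpha-\beta|$ for $\alpha,\beta\in[0,H]$ yields $\text{(II)}\le O(H^2\sqrt{\chi/n})$.

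For term~(I), since $x_i,y_i\in[0,H]$, expanding the empirical variances gives $\text{(I)}\le \frac1n\sum_i|x_i^2-y_i^2|+\envert{\del{\frac1n\sum_i x_i}^2-\del{\frac1n\sum_i y_i}^2}\le \frac{4H}{n}\sum_{i=1}^n|x_i-y_i|$. The target-optimism property (Lemma~\ref{lem: optimism main lemma all bounds}(c)) still applies here because $v_b(n,h,s,a)\le\bonus{n}$ keeps the posterior variances at least as large as in the Hoeffding algorithm, so $x_i\ge y_i$ on the relevant event and $|x_i-y_i|=\beta_{k_i,h+1}$ with $\beta_{k,h}\coloneqq\Vbar_{k,h}(s_{k,h})-V^*_h(s_{k,h})\ge0$. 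We then bound $\frac1n\sum_i\beta_{k_i,h+1}$ by invoking Lemma~\ref{lem: sampling error in terms of targets} at step $h+1$ with the weight vector $w_{k'}=\frac1n\mathbb{I}[\,(s_{k',h},a_{k',h})=(s,a),\ k'<k\,]$, for which $\|w\|_1\le1$ and $\|w\|_\infty=1/n$; this gives $\frac1n\sum_i\beta_{k_i,h+1}\le O\del{\frac{SA\sqrt{H^7}\chi^2}{n}+\sqrt{\frac{SAH^5\chi^2}{n}}}$. Multiplying by $4H$, absorbing term~(II) (which is dominated), and taking a final union bound over the finitely many failure events produces the claimed estimate within the $\delta/K$ budget.

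The main obstacle is the circularity flagged in the statement: Lemma~\ref{lem: sampling error in terms of targets}, used in term~(I), is itself conditioned on Corollary~\ref{corol: bernstein sampling error course} holding, which is why we must assume the latter for all $k'<k$; the surrounding induction in the proof of Theorem~\ref{thm: main regret Hoeffding Bernstein} must be organized so that the variance estimate proved here for episode $k$ is exactly what the Bernstein concentration needs in order to re-establish Corollary~\ref{corol: bernstein sampling error course} at episode $k$. A secondary technicality is that the weight vector $w$ above is data-dependent whereas Lemma~\ref{lem: sampling error in terms of targets} is stated for fixed weights; this is handled as in~\cite{jin2018q} by a union bound over the polynomially-many (the counts are integers at most $K$) realizations of $w$, absorbing an extra logarithmic factor into $\chi$.
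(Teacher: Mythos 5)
Your proposal takes essentially the same route as the paper: the paper's proof simply defers to Lemma C.3 of \citet{jin2018q} with Lemma~\ref{lem: sampling error in terms of targets} substituted for their Lemma C.7, and your decomposition into the terms (I) and (II), the Azuma--Hoeffding treatment of (II), and the application of Lemma~\ref{lem: sampling error in terms of targets} at step $h+1$ with weights $w_{k'}=\tfrac1n\mathbb{I}[(s_{k',h},a_{k',h})=(s,a),\,k'<k]$ (so $\|w\|_1\le 1$, $\|w\|_\infty=1/n$) reproduces exactly that argument, with the exponents matching the stated bound after multiplying by $O(H)$.

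One justification in your term (I) is backwards as written. Since $\sqrt{v_b(n,h,s,a)}$ is defined as a minimum with $\sqrt{\bonus{n}}$, we have $v_b(n,h,s,a)\le \bonus{n}$, i.e.\ the posterior variances in Algorithm~\ref{alg:main episodic bernstein} are \emph{at most}, not at least, those of the Hoeffding algorithm; a smaller variance does not let you inherit the target-optimism property from Lemma~\ref{lem: optimism main lemma all bounds}(c), because optimism requires the posterior spread to dominate the mean-estimation error. The correct justification is the paper's Lemma~\ref{lem: optimism main lemma all bounds bern}(c), which re-establishes $\Vbar_{k,h}\ge V^*_h$ for the Bernstein algorithm by replacing Corollary~\ref{corol: G concentration} with the Bernstein concentration (Lemma~\ref{lem: bersntein concentration}), so that the mean error is itself of order $\sqrt{v_b}$ and the reduced variance still suffices; this sits inside the same episode-wise induction you describe, so no circularity is introduced. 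A smaller remark: your union bound over realizations of the data-dependent weight vector is unnecessary, since the high-probability events in the proof of Lemma~\ref{lem: sampling error in terms of targets} do not depend on $w$, and hence its conclusion holds simultaneously for all non-negative weight vectors, exactly as in Lemma C.7 of \citet{jin2018q}.
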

\begin{proof}
    The proof is almost identical to that of Lemma C.3 of~\cite{jin2018q} except we Lemma~\ref{lem: sampling error in terms of targets} instead of Lemma C.7 of~\cite{jin2018q}.
\end{proof}

\begin{lemma}\label{lem: bersntein concentration}(Bernstein concentration) For some given $k,h,s,a\in [K]\times [H] \times \calS\times\calA$, the following holds with probability at least $1-\delta$ (with $n=N_{k,h}(s,a)$),
    \begin{eqnarray*}
|\sum_{i=1}^n \alpha_n^i \del{r_{k_i,h}-R(s,a)+V^*_{h+1}(s_{k_i,h+1})-P_{s,a}V^*_{h+1}}| & \leq  & \sqrt{v_b(n,h,s,a)},
\end{eqnarray*}
where
\begin{eqnarray*}
v_b(n,h,s,a) \coloneqq \min\cbr{\bonustwo{n}{h}{s}{a},\bonus{n}}.    
\end{eqnarray*}
\end{lemma}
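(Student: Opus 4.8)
The plan is to prove this as the Bernstein-sharpened analogue of Corollary~\ref{corol: G concentration}, adapting the argument behind Lemmas~C.4--C.5 of~\cite{jin2018q} while feeding in the empirical-variance control already developed in this appendix. Fix $(s,a,h)$ and let $k_1<k_2<\cdots$ enumerate the episodes in which $(s,a)$ is visited at step $h$. For a fixed value of the final count $n$, define
\[
X_i := \alpha_n^i\del{r_{k_i,h}-R(s,a)+V^*_{h+1}(s_{k_i,h+1})-P_{s,a}V^*_{h+1}},\qquad i=1,\dots,n,
\]
and let $\mathcal F_i$ be the $\sigma$-field generated by the history up to and including the $i$-th visit. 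Since $\Ex[r_{k_i,h}\mid\mathcal F_{i-1}]=R(s,a)$ and $s_{k_i,h+1}\sim P_{s,a}$, the pair $\{X_i,\mathcal F_i\}$ is a martingale difference sequence. By Lemma~\ref{lem: alpha summation properties}(b), $\max_i\alpha_n^i\le 2C/n$ and $\sum_i(\alpha_n^i)^2\le 2C/n$ with $C=\Theta(H)$, so the increments satisfy $|X_i|\le O(H^2/n)$ (using $|r-R|\le 1$ and $0\le V^*_{h+1}\le H$), and the predictable quadratic variation obeys
\[
\textstyle\sum_{i=1}^n\Ex[X_i^2\mid\mathcal F_{i-1}]\ \le\ \sum_{i=1}^n(\alpha_n^i)^2\cdot O\del{[\V{h}V^*_{h+1}]_{s,a}+1}\ \le\ O\!\del{\tfrac{H}{n}\del{[\V{h}V^*_{h+1}]_{s,a}+H}},
\]
where $\V{h}$ is the variance operator of \eqref{eq: varaince operator}.

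Next I would apply a Bernstein-type martingale tail inequality (Freedman's inequality) to $\sum_{i=1}^n X_i$, together with a union bound over all $(s,a,h)\in\calS\times\calA\times[H]$ and all possible final counts $n\le K$; the union bound produces the logarithmic factor $\eta=\logvalue$ and also handles the technical point that the weights $\alpha_n^i$ are not $\mathcal F_i$-measurable. This gives, with probability at least $1-\delta$,
\[
\Bigl|\textstyle\sum_{i=1}^n X_i\Bigr|\ \le\ O\!\del{\sqrt{\tfrac{H}{n}\del{[\V{h}V^*_{h+1}]_{s,a}+H}\,\eta}\ +\ \tfrac{H^2\eta}{n}}.
\]
I would then replace the unobservable true variance $[\V{h}V^*_{h+1}]_{s,a}$ by the empirical variance $\Vemp{n}\Vbar_{h+1}(s,a)$ that actually appears in $v_b$, using Lemma~\ref{lem: gap between empirical variance and true variance}, which bounds the gap by $O\del{\tfrac{SA\sqrt{H^9}\chi^2}{n}+\sqrt{\tfrac{H^7SA\chi^2}{n}}}$ with $\chi=\logvaluetwo$. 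Pushing this through $\sqrt{x+y}\le\sqrt x+\sqrt y$, the gap contributes only lower-order terms which — together with the additive $H^2\eta/n$ — are absorbed into $\tfrac{\sqrt{H^7SA\eta}\cdot\tau}{n+1}$ for a suitable logarithmic factor $\tau$, yielding $|\sum_i X_i|\le\bonustwo{n}{h}{s}{a}$ for an appropriate absolute constant $c$. Finally, Corollary~\ref{corol: G concentration} already gives $|\sum_i X_i|\le\sqrt{\bonus{n}}$ with probability $1-\delta$; a union bound over the two events shows $|\sum_i X_i|$ is simultaneously bounded by both quantities, hence by their minimum $\sqrt{v_b(n,h,s,a)}$, which is the claim.

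The main obstacle I anticipate is the bookkeeping in the final step: verifying that every term produced by passing the empirical-variance gap of Lemma~\ref{lem: gap between empirical variance and true variance} through the square root — in particular the $n^{-3/4}$-type cross term coming from $\sqrt{(H/n)\cdot\sqrt{H^7SA/n}}$ — is genuinely dominated by the $\tfrac{\sqrt{H^7SA\eta}\cdot\tau}{n+1}$ budget, keeping careful track of the exact powers of $H,S,A$ and the logarithmic factors. A secondary subtlety worth spelling out is that Lemma~\ref{lem: gap between empirical variance and true variance} is itself conditional on the coarse estimation bound of Corollary~\ref{corol: bernstein sampling error course} holding for earlier episodes, so this statement is really an ingredient of the induction over $k$ in the proof of Theorem~\ref{thm: main regret Hoeffding Bernstein} rather than a fully standalone fact; here we take that conditional bound as given.
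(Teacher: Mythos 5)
Your proposal is correct and follows essentially the same route as the paper: a Bernstein/Freedman-type martingale bound in terms of the true variance $[\V{h}V^*_{h+1}]_{s,a}$, conversion to the empirical variance via Lemma~\ref{lem: gap between empirical variance and true variance} with the gap absorbed into the $\frac{\sqrt{H^7SA\eta}\cdot\tau}{n+1}$ term, and the minimum with $\sqrt{\bonus{n}}$ supplied by the Hoeffding bound of Corollary~\ref{corol: G concentration}. The only cosmetic difference is that you apply Freedman to the combined sum, whereas the paper splits it into a reward part (Azuma--Hoeffding) and a transition part (Azuma--Bernstein); your explicit remarks on the $n^{-3/4}$ cross term and on the conditioning on Corollary~\ref{corol: bernstein sampling error course} make the argument, if anything, slightly more careful than the paper's.
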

\begin{proof}
Let $k_i$ denote the index of the episode when $(s,a)$ was visited for the $i_{\text{th}}$ time at step $h$. Set $x_i = \alpha_n^i(r_{k_i,h} -R_h(s,a))$, $y_i=\alpha_n^i(V(s_{k_{i+1}})-P_{s_i,a_i}\cdot V)$ and consider filtration $\mathcal{F}_i$ as the $\sigma-$field generated by all random variables in the history set $\mathcal{H}_{k_i,h}$. We apply Azuma-Hoeffding (Lemma~\ref{lem: azuma hoeffding}) to calculate $|\sum_{i=1}^n x_i|$ and Azuma-Bernstein for $|\sum_{i=1}^n y_i|$. 
Consider with probability $1-\delta$,
\begin{eqnarray*}
    \envert{\sum_{i=1}^n \alpha_n^i(V(s_{k_{i+1}})-P_{s_i,a_i}\cdot V)} &\leq& O(1)\cdot \sbr{\sum_{i=1}^n \sqrt{(\alpha_n^i)^2 [\V{h}V^*_{h+1}]_{s,a}\eta}+ \frac{H^2\eta}{n}}\\
    &\leq & O(1)\cdot \sbr{\sum_{i=1}^n \sqrt{\frac{H}{n} [\V{h}V^*_{h+1}]_{s,a}\eta}+ \frac{H^2\eta}{n}}.
\end{eqnarray*}
Using Lemma~\ref{lem: gap between empirical variance and true variance} and a suitable union bound, we have with probability $1-\delta$,
\begin{eqnarray*}
    \envert{\sum_{i=1}^n \alpha_n^i(V(s_{k_{i+1}})-P_{s_i,a_i}\cdot V)} 
    &\leq & O(1)\cdot \sbr{\sum_{i=1}^n \sqrt{\frac{H}{n+1} (\Vemp{n}\Vbar_{h+1}(s,a)+H)\eta}+ \frac{\sqrt{H^7SA\eta}\cdot \chi}{n+1}}\\ 
    &\leq& \sqrt{v_b(n,h,s,a)}.
\end{eqnarray*}
where the last term dominates the concentration of $|\sum_{i=1}^n x_i|$.
\end{proof}

\begin{lemma}[Based on Lemma~\ref{lem: optimism main lemma all bounds}]\label{lem: optimism main lemma all bounds bern}
The samples from the posterior distributions and the mean of the posterior distributions as defined in Algorithm~\ref{alg:main episodic bernstein} satisfy the following properties: for any episode $k\in [K]$ and index $h\in [H]$,
\begin{enumerate}[label=(\alph*)]
\item (Posterior distribution mean) 
For any given $s,a$, 
with probability at least $1-\frac{2(k-1)\delta}{KH}-\frac{\delta}{KH}$,
\begin{eqnarray}\label{eq: mean concentration bound lower bound}
     \Q_{k,h}(s,a)  \geq   Q^*_h(s,a)-\sqrt{v_b(n,h,s,a)}.
\end{eqnarray}

\item (Posterior distribution sample) For any given $s,a$, 
with probability at least $p_1$ ($p_1=\Phi(-1)$), 
\begin{eqnarray}\label{eq: const pr statement}
    \tilde{Q}_{k,h}(s,a) \geq Q^*_h(s,a).
\end{eqnarray}
\item (In Algorithm~\ref{alg: target computation}) 
With probability at least $1-\frac{2k\delta}{KH}$,
\begin{equation}\label{eq: target optimism property}    
\Vbar_{k,h}(s_{k,h}) \geq V^*_h(s_{k,h}).
\end{equation}
\end{enumerate}
\end{lemma}
\begin{proof}
    The proof is identical to that of Lemma~\ref{lem: optimism main lemma all bounds} except that we use Lemma~\ref{lem: bersntein concentration} instead of Corollary~\ref{corol: G concentration}.
\end{proof}
\begin{lemma}[Based on Lemma C.6 of~\cite{jin2018q}]\label{lem: empirical variance and algoerithmic variance gap}
With probability at least $1-4\delta$, we have the following for all $k,h\in[K]\times[H]$,
\begin{eqnarray*}
    \Vemp{n}\Vbar_{k,h}(s_{k,h},a_{k,h})-\V{h}V^{\pi_k}_{h+1}(s_{k,h},a_{k,h})&\leq&   2HP_{s_{k,h},a_{k,h}}\cdot (V^*_{h+1}-V^{\pi_k}_{h+1})\\
    &&+O\del{\frac{SA\sqrt{H^9\chi^4}}{n}+\sqrt{\frac{H^7SA\chi^2}{n}}},
\end{eqnarray*}
where $n=N_{k,h}(s_{k,h},a_{k,h})$ and $\chi = \logvaluetwo$.
\end{lemma}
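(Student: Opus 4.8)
The plan is to follow the proof of Lemma~C.6 in~\cite{jin2018q}, replacing the UCB monotonicity arguments used there with the constant-probability-optimism tools already developed for \algoname{}. Fix $k,h$, put $n=N_{k,h}(s_{k,h},a_{k,h})$, $s=s_{k,h}$, $a=a_{k,h}$ (the case $n=0$ is trivial, since then $\Vemp{n}\Vbar_{k,h}(s,a)=0$ by initialization while the right-hand side is nonnegative), and split
\begin{eqnarray*}
\Vemp{n}\Vbar_{k,h}(s,a)-\V{h}V^{\pi_k}_{h+1}(s,a)
&=&\del{\Vemp{n}\Vbar_{k,h}(s,a)-\V{h}V^*_{h+1}(s,a)}\\
&&+\del{\V{h}V^*_{h+1}(s,a)-\V{h}V^{\pi_k}_{h+1}(s,a)},
\end{eqnarray*}
bounding the two brackets separately.

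For the first bracket I would invoke Lemma~\ref{lem: gap between empirical variance and true variance} (our analog of Lemma~C.3 of~\cite{jin2018q}), which already compares the empirical variance of the bootstrapped targets $\Vbar_{k_i,h+1}$ to the true one-step variance of $V^*_{h+1}$. That lemma controls the $\Vbar_{k_i,h+1}$-to-$V^*_{h+1}$ error through Lemma~\ref{lem: sampling error in terms of targets} applied with the weight vector supported on the $n$ visits of $(s,a)$ at step $h$, so that $\|w\|_1=1$ and $\|w\|_\infty=1/n$; conditioning on the optimism event of Lemma~\ref{lem: optimism main lemma all bounds bern}(c), so each $\beta_{k_i,h+1}:=\Vbar_{k_i,h+1}(s_{k_i,h+1})-V^*_{h+1}(s_{k_i,h+1})\ge 0$, and on the event of Corollary~\ref{corol: bernstein sampling error course}, a union bound over $k\in[K]$ gives $\envert{\Vemp{n}\Vbar_{k,h}(s,a)-\V{h}V^*_{h+1}(s,a)}\le O\del{SA\sqrt{H^9}\chi^2/n+\sqrt{H^7SA\chi^2/n}}$ with $\chi=\logvaluetwo$.

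For the second bracket I would use the deterministic monotonicity of the variance operator. Writing $P=P_{h,s,a}$ and using $\V{h}f=P(f^2)-(Pf)^2$ together with the pointwise ordering $0\le V^{\pi_k}_{h+1}\le V^*_{h+1}\le H$,
\begin{eqnarray*}
\V{h}V^*_{h+1}(s,a)-\V{h}V^{\pi_k}_{h+1}(s,a)
&=&P\del{(V^*_{h+1})^2-(V^{\pi_k}_{h+1})^2}-\del{(PV^*_{h+1})^2-(PV^{\pi_k}_{h+1})^2}\\
&\le&2H\,P\del{V^*_{h+1}-V^{\pi_k}_{h+1}},
\end{eqnarray*}
since $(V^*_{h+1})^2-(V^{\pi_k}_{h+1})^2=(V^*_{h+1}-V^{\pi_k}_{h+1})(V^*_{h+1}+V^{\pi_k}_{h+1})\le 2H(V^*_{h+1}-V^{\pi_k}_{h+1})$ while $(PV^*_{h+1})^2\ge(PV^{\pi_k}_{h+1})^2\ge 0$. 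Adding the two brackets, taking a union bound over the $O(1)$-many high-probability events invoked (Corollary~\ref{corol: bernstein sampling error course}, Lemmas~\ref{lem: sampling error in terms of targets} and~\ref{lem: gap between empirical variance and true variance}, and the optimism event of Lemma~\ref{lem: optimism main lemma all bounds bern}(c)), and rescaling $\delta$, gives the claim with probability at least $1-4\delta$.

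The main obstacle is entirely in the first bracket and is inherited from Lemma~\ref{lem: gap between empirical variance and true variance}/Lemma~\ref{lem: sampling error in terms of targets}: unlike UCB Q-learning, $\Vbar$ here is optimistic only with constant probability per sample (before the max over $J$ draws), so the target error must be propagated into the variance estimate via the constant-probability-to-high-probability conversion of Lemma~\ref{lem: constant probability to probability 1} and the weight-vector bound of Lemma~\ref{lem: sampling error in terms of targets}, rather than a clean pointwise sandwich; the bookkeeping to verify that the extra factor $2H$ coming from the quadratic in $\V{h}$ costs only the stated $SA\sqrt{H^9}\chi^2/n$ and $\sqrt{H^7SA\chi^2/n}$ terms is the delicate step. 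The empirical-vs-true variance concentration for the fixed function $V^*_{h+1}$ over the i.i.d.\ next states, and the second bracket, are exactly as in~\cite{jin2018q}.
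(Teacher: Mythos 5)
Your proposal is correct and matches the paper's proof essentially step for step: the same decomposition into $\Vemp{n}\Vbar_{k,h}-\V{h}V^*_{h+1}$ (handled by Lemma~\ref{lem: gap between empirical variance and true variance}, which in turn rests on Corollary~\ref{corol: bernstein sampling error course} and Lemma~\ref{lem: sampling error in terms of targets}) plus $\V{h}V^*_{h+1}-\V{h}V^{\pi_k}_{h+1}$, with the latter bounded by $2HP_{s_{k,h},a_{k,h}}\cdot(V^*_{h+1}-V^{\pi_k}_{h+1})$ directly from the definition of the variance operator. You merely spell out the elementary $(V^*)^2-(V^{\pi_k})^2\le 2H(V^*-V^{\pi_k})$ step that the paper leaves implicit, which is fine.
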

\begin{proof}
    Consider,
    \begin{eqnarray*}
        \Vemp{n}\Vbar_{k,h}(s_{k,h},a_{k,h})-\V{h}V^{\pi_k}_{h+1}(s_{k,h},a_{k,h}) &\leq & \envert{\Vemp{n}\Vbar_{k,h}(s_{k,h},a_{k,h})-\V{h}V^{*}_{h+1} (s_{k,h},a_{k,h})}\\ 
        &&+\envert{\V{n}V^*_{h+1}(s_{k,h},a_{k,h})-\V{h}V^{\pi_k}_{h+1}(s_{k,h},a_{k,h})},
    \end{eqnarray*}
    where for the first term we apply Lemma~\ref{lem: gap between empirical variance and true variance} (which holds when Corollary~\ref{corol: bernstein sampling error course} and Lemma~\ref{lem: sampling error in terms of targets} hold) and for the second term we have from the definition of variance,
    \begin{eqnarray*}
        \envert{\V{n}V^*_{h+1}(s_{k,h},a_{k,h})-\V{h}V^{\pi_k}_{h+1}(s_{k,h},a_{k,h})} &\leq & 2HP_{s_{k,h},a_{k,h}}\cdot (V^*_{h+1}-V^{\pi_k}_{h+1}).
    \end{eqnarray*}
\end{proof}

\newpage
\section{scratch}

\textbf{Define}
$$
\sigma(N_{h}(s_{h},a_{h})) \coloneqq A_{h}(N_{h}(s_{h},a_{h})) + B_{h},
$$
where
$$
A_{h}(N_{h}(s_{h},a_{h})) = \sqrt{\frac{H^3\log(KH/\delta)}{N_h(s_h,a_h)}}
$$
and

$$
B_{h} \coloneqq \alpha_{N_{h}(s_{h},a_{h})}B_{h} + (1-\alpha_{N_{h}(s_{h},a_{h})})(\tilde{Q}_{h+1}(s_{h+1},a_{h+1})-\Vbar_{h+1}(s_h+1)),
$$
which is same as,
$$
B_{k,h}  = \sum_{i=1}^{N_{k,h}(s_{k,h},a_{k,h})} \alpha^i_n(\tilde{Q}_{k_i,h+1}(s_{k_i,h+1},a_{k_i,h+1})-\Vbar_{k_i,h+1}(s_{k_i,h+1})).
$$

\end{document}